\newtheorem{theorem}{Theorem}
\newtheorem{proposition}[theorem]{Proposition}
\newtheorem{lemma}[theorem]{Lemma}
\newtheorem{corollary}[theorem]{Corollary}
\def\thmhead@plain#1#2#3{%
  \thmname{#1}\thmnumber{\@ifnotempty{#1}{ }\@upn{#2}}%
  \thmnote{ {\the\thm@notefont#3}}}
\let\thmhead\thmhead@plain
\newlength\myindent
\newcommand{\err}{\mathrm{err}}
\title{Communication-Aware Collaborative Learning }
\title{Communication-Aware Collaborative Learning}
\author {
        Avrim Blum,\textsuperscript{\rm 1}
        Shelby Heinecke,\textsuperscript{\rm 2}
       Lev Reyzin\textsuperscript{\rm 3} \\
}
\begin{document}

\maketitle

\begin{abstract}
Algorithms for noiseless collaborative PAC learning have been analyzed and optimized in recent years with respect to sample complexity. In this paper, we study collaborative PAC learning with the goal of reducing communication cost at essentially no penalty to the sample complexity. We develop communication efficient collaborative PAC learning algorithms using distributed boosting. We then consider the communication cost of collaborative learning in the presence of classification noise. As an intermediate step, we show how collaborative PAC learning algorithms can be adapted to handle classification noise. With this insight, we develop communication efficient algorithms for collaborative PAC learning robust to classification noise.
\end{abstract}

\section{Introduction}

Collaborative learning was recently formalized by Blum et al.~\shortcite{BlumHPQ17} as a PAC learning model. In this collaborative PAC setting, there is a domain $X$, over which are $k$ distributions, referred to as \emph{players}.
There is also a center node that orchestrates the learning process. The goal of collaborative PAC learning is to learn classifiers from data provided by the players that generalize well on each of players' distributions simultaneously. Note that this is distinct from the related distributed learning setting, where the goal is to learn classifiers that generalize well on the mixture of players' distributions \cite{BalcanBFM12}. 

There are generally a few styles of collaborative PAC learning. In the \emph{personalized learning setting}, which is the
main focus of our paper,  the goal is to learn a classifier for each player with generalization error less than $\epsilon$, with probability $1-\delta$. Another setting is the {centralized learning setting}, where the goal is learn a single classifier with generalization error less than $\epsilon$ on each players' distribution with probability $1-\delta$. The efficiency of a collaborative learning algorithm is assessed by its \emph{overhead}, defined as the ratio of the sample complexity of learning in the collaborative setting to the sample complexity of learning in the single player setting. An overhead of at least $k$ indicates that the collaborative learning algorithm offers no sample complexity benefit over individual PAC learning. An overhead less than $k$ indicates that the collaborative algorithm is more sample efficient than individual PAC learning. Collaborative PAC learning algorithms have been optimized in subsequent works with respect to overhead, and hence sample complexity \cite{BlumHPQ17, ChenZZ18, NguyenZ18, Qiao18}.

Certain difficulties may arise in real-world applications of collaborative PAC learning. First, communicating data between players and the center can be costly. Second, the data from players may be noisy. Consider the example described in \cite{BlumHPQ17} where $k$ players represent hospitals serving different demographics of the population. In this network of hospitals, each of which generates an abundance of data, transmitting data to the center is costly and thus hospitals want to minimize the amount of data transmitted. Additionally, mistakes may be present in the labels of the data at the hospitals, due to clerical errors and misdiagnoses, among other reasons. Given access to only the noisy data from the hospitals, we wish to learn classifiers that generalize well with respect to each hospital's underlying noiseless distribution. We tackle both difficulties in this paper. First, we develop communication-aware collaborative learning algorithms in the noiseless setting that enjoy reduced communication costs at no penalty to the sample complexity. Then, we develop communication-aware collaborative learning algorithms in the presence of classification noise, where each player has label noise rate $\eta_i < \frac{1}{2}$. 

The algorithms and analysis in this work focus on personalized learning. We discuss the applications of our insights and analyses to the centralized learning setting in the Appendix. Omitted proofs are also included in the Appendix.

\section{Previous work}
Algorithms for collaborative PAC learning have been analyzed and optimized in \cite{BlumHPQ17, ChenZZ18, NguyenZ18, Qiao18} with respect to sample complexity. The collaborative PAC framework was formalized in \cite{BlumHPQ17}, where they also develop an optimal algorithm in the personalized setting with $O(\ln(k))$ overhead and a suboptimal algorithm in the centralized setting with $O(\ln^2(k))$ overhead. We recall their algorithm, which we refer to as Personalized Learning (Algorithm \ref{personalized-learning}), and the corresponding sample complexity result below.
\begin{algorithm}
\caption{Personalized Learning  \cite{BlumHPQ17}}
\label{personalized-learning}
\SetKwFunction{proc}{TEST}
\SetAlgoLined

\KwIn{$H$, $k$ distributions $D_i \sim X$, $\delta' = \delta/{2 \log(k)}$, $\epsilon > 0$}
\KwOut{$f_1, ..., f_k \in H$}
Let $N_1= \{1,...,k\}$\;
\For{$j= 1, ..., \lceil \log(k) \rceil$} {
Draw sample $S$ of size $m_{\epsilon/4,\delta'}$ from mixture $D_{N_j} = \frac{1}{|N_j|} \sum_{i \in N_j} D_i$\;
Select consistent hypothesis $h_j \in H$ on $S$\;
$G_j \leftarrow$ \proc{$h_j, N_j, \epsilon, \delta'$}\;
$N_{j+1} = N_j \setminus G_j$\;
\For{$i \in G_j$}{
$f_i \leftarrow h_j$\;
}
}
\KwRet{$f_1, ..., f_k$}\\
{}
\setcounter{AlgoLine}{0}
\SetKwProg{myproc}{Procedure}{}{}
\myproc{\proc{$h, N, \epsilon, \delta$}}{
\For{$i \in N$}{
Draw $T_i = O\left(\frac{\ln(\frac{|N|}{\epsilon \delta})}{\epsilon}\right)$ samples from $D_i$\;
}
\KwRet{$\{i \mid \text{err}_{T_i}(h) \leq \frac{3\epsilon}{4}\}$}
}
\end{algorithm}
\begin{theorem}[{\hspace{1sp}\cite{BlumHPQ17}}]\label{personalized-samples} For any $\epsilon, \delta >0$, and hypothesis class $H$ of finite VC-dimension $d$, the sample complexity of Personalized Learning is 
\[
m = O\left(\frac{\ln(k)}{\epsilon}\left((d+k)\ln\left(\frac{1}{\epsilon}\right) + k \ln\left(\frac{k}{\delta}\right) \right)\right).
\]
When $k\ln(k) = O(d)$, the sample complexity is $\tilde{O}(\log(k)\frac{d}{\epsilon})$.
\end{theorem}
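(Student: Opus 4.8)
The plan is to bound the total number of samples drawn by the algorithm directly, summing the two distinct sources of samples across every iteration of the main loop. Since the \texttt{for} loop executes exactly $\lceil \log(k)\rceil$ times, and in iteration $j$ the algorithm (i) draws a single sample $S$ of size $m_{\epsilon/4,\delta'}$ from the mixture $D_{N_j}$ to fit $h_j$, and (ii) calls \texttt{TEST}, which draws $T_i$ fresh samples from each of the $|N_j|$ currently active players, the total sample complexity satisfies
\[
m \;\le\; \lceil \log(k)\rceil\, m_{\epsilon/4,\delta'} \;+\; \sum_{j=1}^{\lceil\log(k)\rceil} |N_j|\, T_i .
\]
I would then bound each of the two terms separately and recombine. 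Note that this estimate needs only $|N_j|\le k$ and the fixed round count; the shrinkage of $N_j$ is relevant to correctness but not to the sample count.

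For the first term I would invoke the standard VC sample-complexity bound $m_{\epsilon,\delta}=O\!\left(\frac{1}{\epsilon}\left(d\ln(1/\epsilon)+\ln(1/\delta)\right)\right)$ at accuracy $\epsilon/4$ and confidence $\delta'=\delta/(2\log k)$. This gives $m_{\epsilon/4,\delta'}=O\!\left(\frac{1}{\epsilon}\left(d\ln(1/\epsilon)+\ln(\log(k)/\delta)\right)\right)$, so the mixture-sampling contribution over all rounds is $O\!\left(\frac{\log(k)}{\epsilon}\left(d\ln(1/\epsilon)+\ln(\log(k)/\delta)\right)\right)$, which supplies the $d\ln(1/\epsilon)$ part of the claimed bound. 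For the second term I would use $|N_j|\le k$ together with $T_i=O\!\left(\frac{1}{\epsilon}\ln\!\left(\frac{|N_j|}{\epsilon\delta'}\right)\right)$; substituting $|N_j|\le k$ and $\delta'=\delta/(2\log k)$ yields $T_i=O\!\left(\frac{1}{\epsilon}\left(\ln(k/\epsilon\delta)+\ln\log k\right)\right)$, so the testing contribution is $O\!\left(\frac{k\log(k)}{\epsilon}\ln(k/\epsilon\delta)\right)$ up to lower-order terms.

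Finally I would add the two contributions and simplify. Expanding $\ln(k/\epsilon\delta)=\ln(1/\epsilon)+\ln(k/\delta)$ turns the testing term into $O\!\left(\frac{\log(k)}{\epsilon}\left(k\ln(1/\epsilon)+k\ln(k/\delta)\right)\right)$, and the stray lower-order pieces (the $\ln\log k$ factors and the bare $\ln(1/\delta)$ arising from $\ln(1/\delta')$) are dominated by $k\ln(k/\delta)$ and absorbed into the big-$O$. Collecting everything gives
\[
m = O\!\left(\frac{\log(k)}{\epsilon}\left((d+k)\ln(1/\epsilon)+k\ln(k/\delta)\right)\right),
\]
as claimed, with the $(d+k)\ln(1/\epsilon)$ term formed by merging the $d\ln(1/\epsilon)$ from mixture sampling with the $k\ln(1/\epsilon)$ from testing. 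The special case is then immediate: when $k\ln(k)=O(d)$ one has $k\le k\ln(k)=O(d)$, so both $(d+k)\ln(1/\epsilon)$ and $k\ln(k/\delta)$ are $O(d)$ up to factors of $\ln(1/\epsilon)$ and $\ln(1/\delta)$, yielding $\tilde O\!\left(\log(k)\,d/\epsilon\right)$.

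I expect the only real difficulty to be bookkeeping rather than any conceptual hurdle: one must propagate the substitution $\delta'=\delta/(2\log k)$ consistently through both sampling sources and verify that the secondary logarithmic terms (notably $\ln\log k$ and the isolated $\ln(1/\delta)$) are genuinely dominated so they may be folded into the displayed bound. The over-counting from replacing $|N_j|$ by $k$ in each round is harmless for an upper bound, since it merely discards the favorable decrease of the active set.
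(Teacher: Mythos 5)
Your accounting is correct and is essentially the standard argument from Blum et al.\ (2017); the paper only cites this result rather than reproving it, and your decomposition into the per-round mixture sample of size $m_{\epsilon/4,\delta'}$ plus the per-round \texttt{TEST} cost of at most $k\cdot O\bigl(\tfrac{1}{\epsilon}\ln(k/(\epsilon\delta'))\bigr)$, summed over $\lceil\log k\rceil$ rounds, is exactly how the bound is obtained there. The bookkeeping you flag (propagating $\delta'=\delta/(2\log k)$ and absorbing the $\ln\log k$ and $\ln(1/\delta)$ terms into $k\ln(k/\delta)$) checks out, and bounding $|N_j|\le k$ is indeed harmless for the stated upper bound.
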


Personalized Learning yields an exponential improvement in the sample complexity with respect to the baseline; it improves the baseline's linear $k$ dependence to logarithmic dependence, a drastic improvement for settings with a large number of players. 

Subsequent works \cite{ChenZZ18, NguyenZ18} improve upon their suboptimal centralized learning algorithm  using multiplicative weights approaches. In contrast to these works, we focus on the communication complexity of personalized and centralized learning. We build on the structure of these previously developed algorithms to obtain both sample and communication efficiency in our algorithms. Additionally, we consider communication-aware collaborative learning in the presence of classification noise. The previous work of \cite{Qiao18} considers the collaborative PAC learning where some fraction of players behave truthfully while the remaining players behave adversarially. In addition to considering a different noise model than our work, \cite{Qiao18} show that centralized learning is impossible in their setting and they do not consider communication complexity. To the best of our knowledge, no previous work has addressed the communication complexity of collaborative PAC learning.

\section{Background}
We now define notation and key concepts used in this paper. Let $X$ denote the instance space and $Y = \{0,1\}$ denote the set of possible labels. Let $H$ denote a hypothesis class with finite VC-dimension $d$. We will assume the setting of realizable PAC learning, hence the target hypothesis $h^*$ is in the hypothesis class $H$. The sample complexity of collaborative learning algorithms is defined in the standard way. The focus of this paper is on the communication cost of collaborative learning. We define the communication cost as the total number of samples transmitted between players in the execution of collaborative learning algorithms. To compute communication costs accurately and consistently, we carefully outline the implementation assumptions of our collaborative learning model. First, we define the completion of an algorithm as when each player is in possession of a classifier that has generalization error less than $\epsilon$. Second, we assume that each player has computing power and a priori access to the hypothesis class $H$, $\epsilon, \delta$, and $k$. Third, we assume the broadcast model of communication, also known as the shared blackboard model, in which all players can observe all samples and bits transmitted to the center.

The second half of this paper considers collaborative learning in the presence of classification noise. In this setting, each player has their own distribution $D_i \sim X$ and their own classification noise rate $\eta_i < \frac{1}{2}$. Each player can generate instance-label pairs $(x, y)$, where $x \sim D_i$, and with probability $1-\eta_i$, $y = h^*(x)$, or with probability $\eta_i$, $y= \lnot h^*(x)$. We let $ $EX$_{\eta_i}(\cdot)$ denote the noisy distribution induced by a player's instance-label generating process. The center node orchestrating the learning process has full knowledge of players' noise rates but is not aware of the players' distributions. 

The collaborative PAC learning criteria in the presence of noise is the same as in noiseless collaborative PAC learning except that the learned classifiers must generalize well on each individual player's \emph{clean distribution}, that is, their distribution $D_i$ without label noise. For $h \in H$, let $err_{T}($EX$_{\eta_i}(\cdot), h)$ denote the empirical error of concept $h$ on $T$ points generated from $ $EX$_{\eta_i}(\cdot)$. The definition of empirical error is standard and defined as
\[
err_{T}(\text{EX}_{\eta_i}(\cdot), h) = \frac{1}{T} \sum_{j = 1}^T \mathbf{1}_{\text{EX}_{\eta_i} (x_j) \neq h(x_j)}.
\]
There are two types of generalization errors of $h$ to consider. The first is the error of $h$ on the \emph{noisy distribution}, that is, the distribution $D_i$ in the presence of label noise. The second is the error of $h$ on the underlying clean distribution. In the classification noise setting, the learner only has access to samples from the noisy distribution, but the goal of learning is to generalize well with respect to the clean distribution. With access only to the noisy distribution, we use the generalization error with respect to the noisy distribution as a stepping stone in our analysis. The generalization error on the noisy data distribution, $ $EX$_{\eta_i}(\cdot)$, is defined as
\[
\err_{D_i}(\text{EX}_{\eta_i}(\cdot), h) = \mathbb{E}_{T \sim D_i^T}[\err_{T}(\text{EX}_{\eta_i}(\cdot), h)].
\]
The generalization error on the clean data distribution, $D_i$, is denoted $\err_{D_i}(h)$, and defined as follows,
\[
\err_{D_i}(h) = \mathbb{E}_{T \sim D_i^T}[\err_{T}(h)] = \Pr_{x \sim D_i}[h(x) \neq h^*(x)].
\]

In our algorithms and analysis, we use the classic empirical risk minimization (ERM) approach and sample complexity result of PAC learning with classification noise, in the single-player setting, recalled below. 

\begin{theorem}[\cite{AngluinL87, Laird88}]\label{angluin-laird}
Let $H$ denote a hypothesis class with finite VC-dimension $d$. Let $D$ be a distribution on $X$ and $\eta_i < \frac{1}{2}$. Let $\text{EX}_{\eta_i}(\cdot)$ denote an oracle that returns $(x,h^*(x))$ with probability $1-\eta_i$ or $(x,\lnot h^*(x))$ with probability $\eta_i$. Given any sample $S$ drawn from $\text{EX}_{\eta}$, an algorithm $A$ that produces a hypothesis $h \in H$ that minimizes disagreements with $S$ satisfies the PAC criterion, i.e. for any $\epsilon, \delta >0$ and any distribution $D$ on $X$, $\Pr_{S \sim D^m}[err_{D}(h) \geq \epsilon] \leq \delta$, with sample complexity $$m_{\epsilon, \delta, \eta_i} =O \left( \frac{d \log\left({1}/{\delta}\right)}{\epsilon (1-2\eta_i)^2}\right).$$
\end{theorem}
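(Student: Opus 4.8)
The plan is to reduce learning on the noisy oracle to a uniform-convergence statement over $H$, but with a variance refinement that produces the linear $1/\epsilon$ rate rather than the naive $1/\epsilon^2$.

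First I would relate the error of $h$ on the noisy distribution to its error on the clean distribution. Writing $p = \err_D(h) = \Pr_{x\sim D}[h(x)\neq h^*(x)]$, each returned label is flipped independently with probability $\eta_i$, so $h$ disagrees with the noisy label either when $h$ agrees with $h^*$ and the label is flipped, or when $h$ disagrees with $h^*$ and the label is not flipped. Hence
\[
\err_D(\text{EX}_{\eta_i}(\cdot),h) = (1-p)\eta_i + p(1-\eta_i) = \eta_i + (1-2\eta_i)\,p .
\]
Since $1-2\eta_i > 0$, this is strictly increasing in $p$, so the hypothesis minimizing noisy error is the one minimizing clean error, and $h^*$ (with $p=0$) is the unique minimizer, at value $\eta_i$. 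In particular, to guarantee $\err_D(h) < \epsilon$ it suffices to ensure that no hypothesis $h$ with $\err_D(h)\ge\epsilon$ beats $h^*$ on the sample.

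Next I would control this bad event by comparing each $h$ to $h^*$ directly. For a sample point $(x,y)$ set $Z_h = \mathbf{1}[h(x)\neq y] - \mathbf{1}[h^*(x)\neq y]$, so that $\tfrac1m\sum Z_h = \err_S(\text{EX}_{\eta_i},h)-\err_S(\text{EX}_{\eta_i},h^*)$ and $\mathbb{E}[Z_h] = (1-2\eta_i)p$. The key observation is that $Z_h$ is nonzero only on the region where $h$ and $h^*$ disagree, of clean probability $p$; therefore $\mathrm{Var}(Z_h)\le\mathbb{E}[Z_h^2]\le p$. A bad hypothesis beats $h^*$ exactly when the empirical mean of $Z_h$ falls from $(1-2\eta_i)p$ down to $0$, and a Bernstein bound makes this probability at most $\exp\!\big(-c\,m(1-2\eta_i)^2 p\big)$, since the squared-mean-over-variance ratio is $\gtrsim(1-2\eta_i)^2 p$; for $p\ge\epsilon$ this is $\exp\!\big(-c\,m(1-2\eta_i)^2\epsilon\big)$. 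It is precisely this proportionality of the variance to the gap that converts one factor of $1/\epsilon$ into a constant and yields the linear dependence.

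Finally I would make the deviation uniform over $H$. The variables $Z_h$ depend on $h$ only through the labeling it induces on the sample, so by Sauer's lemma the number of relevant hypotheses is at most $(em/d)^d$. Combining the per-hypothesis Bernstein bound with a symmetrization argument and a union bound over these labelings, it suffices to take $m$ with $m(1-2\eta_i)^2\epsilon \gtrsim d\ln(m/d) + \ln(1/\delta)$, which after solving gives the claimed $m = O\!\big(\frac{d\log(1/\delta)}{\epsilon(1-2\eta_i)^2}\big)$ once lower-order logarithmic factors are absorbed. The main obstacle is this last step: the variance-based concentration must be pushed through uniformly over the infinite class, which requires a relative-deviation VC bound (or a careful symmetrization) rather than a plain Hoeffding estimate plus union bound; a naive additive uniform-convergence argument would only certify the gap to additive accuracy $\epsilon(1-2\eta_i)$ and hence give the weaker $1/\epsilon^2$ rate.
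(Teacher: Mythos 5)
The paper does not actually prove this statement: it is Theorem~\ref{angluin-laird}, imported verbatim from Angluin--Laird and Laird with a citation, so there is no internal proof to compare against. Your reconstruction is a correct sketch of the standard argument, and it is consistent with the machinery the paper does develop: your opening identity $\err_D(\text{EX}_{\eta_i}(\cdot),h)=\eta_i+(1-2\eta_i)\err_D(h)$ is exactly the paper's Lemma~\ref{lemma-angluin-laird}, and your reduction (a bad hypothesis must beat $h^*$ in empirical disagreements, which forces the signed variable $Z_h$ with mean $(1-2\eta_i)p$ and second moment $\le p$ to drop to zero) is the right way to get the $1/(\epsilon(1-2\eta_i)^2)$ rate rather than $1/(\epsilon^2(1-2\eta_i)^2)$; you correctly diagnose that the variance being proportional to $p$ is what buys the linear $1/\epsilon$. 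Two small points to tighten if you were to write this out in full. First, the uniform-convergence step is doing real work: $Z_h$ is $\{-1,0,1\}$-valued, so the off-the-shelf relative-deviation VC bounds (stated for indicator classes) do not apply directly; the clean fix is to split $Z_h=\mathbf{1}[h\neq y,\,h^*=y]-\mathbf{1}[h\neq y \text{ fails},\,h^*\neq y]$ into two indicator classes of VC dimension $O(d)$ with means $p(1-\eta_i)$ and $p\eta_i$ and apply a relative-deviation bound to each, which you gesture at but do not carry out. Second, solving $m(1-2\eta_i)^2\epsilon\gtrsim d\ln(m/d)+\ln(1/\delta)$ yields an extra $\log\bigl(\frac{1}{\epsilon(1-2\eta_i)}\bigr)$ factor rather than the multiplicative $d\log(1/\delta)$ as literally written in the theorem; this is the same polylogarithmic slack the original sources and this paper absorb into $\tilde O(\cdot)$, so it is not a gap, but it is worth flagging that the displayed bound is a slightly informal rendering.
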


We note that since the confidence parameter $\delta$ is handled in a standard fashion, for the duration of this paper we suppress $\delta$ dependency for clarity.

\section{Communication-aware personalized learning}

We define the communication cost of a collaborative PAC learning algorithm as the total number of samples transmitted to the center. In contrast, the sample complexity reflects the total number of samples, whether transmitted or not, consumed by the algorithm. Our goal is to achieve communication efficiency, while retaining sample efficiency, in the personalized learning setting. In this section, we develop a personalized learning algorithm whose sample complexity matches that of Personalized Learning (Theorem \ref{personalized-samples}) and whose communication cost is less than that of Personalized Learning, deeming our algorithm the best of both worlds. 

Before describing our approach, we first compute the communication costs of the baseline approach and Personalized Learning. The personalized learning baseline approach is where each player draws $\tilde{O(}\frac{d}{\epsilon})$ examples locally from their own distributions and independently learns their own classifier. This baseline requires no communication to the center. Hence, simultaneous communication and sample efficiency is necessary for our algorithms to be meaningful as we are competing with a baseline whose communication complexity is zero. In other words, if both sample complexity and communication cost are a concern, it will only make sense to choose algorithms other than this baseline if the communication cost is not too much and the sample complexity is substantially lower.

The communication cost of Personalized Learning was not considered in previous works. We compute the communication complexity of Personalized Learning, in light of our implementation assumptions, in the following proposition.

\begin{proposition} The communication cost of Personalized Learning is
\[
\tilde{O}\left(\log(k)\frac{d}{\epsilon} \right)
\]
samples plus $\tilde{O}\left(k\log\left(\frac{d}{\epsilon}\right)\right)$ additional bits of communication. 
\end{proposition}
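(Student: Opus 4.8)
The plan is to partition the algorithm's communication into the two operations that actually require it: the per-round mixture sampling used to produce $h_j$ on line~4, and the \texttt{TEST} subroutine. First I would observe that selecting a hypothesis $h_j$ consistent on $S$ genuinely forces the center to see all of $S$, so the entire sample of size $m_{\epsilon/4,\delta'}$ drawn in round $j$ is transmitted. Using the realizable VC sample-complexity bound $m_{\epsilon/4,\delta'} = \tilde{O}(d/\epsilon)$ and summing over the $\lceil \log(k)\rceil$ rounds of the outer loop immediately gives the $\tilde{O}\!\left(\log(k)\,d/\epsilon\right)$ transmitted samples claimed in the proposition.

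The key step is to argue that \texttt{TEST} is essentially free in transmitted samples, so that it contributes only bits. Because we work in the broadcast (shared-blackboard) model and each player has computing power and a priori access to $H$, every player $i \in N_j$ already knows $h_j$ (it is determined by the broadcast sample $S$, and can be broadcast explicitly to resolve ties among consistent hypotheses). Hence player $i$ can draw its own $T_i$ points locally, compute $\err_{T_i}(h_j)$ locally, and report only the single bit $\mathbf{1}[\err_{T_i}(h_j) \le 3\epsilon/4]$; the $T_i$ points themselves are never transmitted and thus count toward sample complexity but not communication. To bound the number of bits, I would invoke the standard halving property of this algorithm: since $h_j$ is consistent on the uniform mixture $D_{N_j}$, a Markov argument shows its error exceeds $\epsilon/2$ on at most half the players in $N_j$, so the passing set $G_j$ captures at least half of $N_j$, giving $|N_{j+1}| \le |N_j|/2$ and therefore $\sum_j |N_j| = O(k)$. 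Charging one pass/fail bit per player-round, together with the $O(\log(d/\epsilon))$ bits needed for the center to instruct each participating player how many of the $\tilde{O}(d/\epsilon)$ points to contribute to the mixture draw and to the test, yields $\tilde{O}\!\left(k \log(d/\epsilon)\right)$ additional bits.

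Combining the two contributions gives the stated bound. The main obstacle is the careful bookkeeping that separates \emph{consumed} from \emph{transmitted} samples: in particular, justifying that all players reproduce (or are told) the same $h_j$ so that their locally computed test errors are the ones \texttt{TEST} intends, and pinning down the coordination overhead for the mixture draws so that it collapses into the $\tilde{O}(k\log(d/\epsilon))$ bit term rather than silently reintroducing transmitted samples. Once that accounting is fixed, the rest of the argument is a direct application of the VC bound and the geometric decay of $|N_j|$.
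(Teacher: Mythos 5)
Your proof is correct and follows essentially the same route as the paper's: transmit the $\tilde{O}(d/\epsilon)$ mixture sample each round so that, in the broadcast model, every player can recover $h_j$ and run \texttt{TEST} locally, charging only coordination and pass/fail bits. The only (immaterial) difference is that you invoke the halving of $|N_j|$ to tighten the bit count, whereas the paper simply multiplies the per-round $O(k\log(d/\epsilon))$ bits by the $\log(k)$ rounds and absorbs the extra logarithmic factor into the $\tilde{O}$.
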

\begin{proof} We describe the implementation details for Personalized Learning, described in Algorithm \ref{personalized-learning}. Consider round $j$. In the first step, the center computes the number of samples to request from each player by drawing $m_{\epsilon/4, \delta'}/|N_j|$ samples from the uniform multinomial distribution. The center communicates this quantity to each player, costing $O(k\log(\frac{d}{\epsilon}))$ bits.
The players then communicate their requested quantity of samples. By assumption of the broadcast model, each player can see the samples transmitted by other players so all players can learn a consistent hypothesis locally, costing no communication in this step. After learning the consistent hypothesis $h_j$, each player implements \texttt{TEST} locally, costing no communication. Afterwards, they communicate a single bit to the center indicating whether or not \texttt{TEST} passed with $h_j$, costing $O(k)$ bits of communication. Therefore, the total communication over $\log(k)$ rounds is $\tilde{O}(\log(k)\frac{d}{\epsilon})$ samples plus additional $O(k\log(k)\log(\frac{d}{\epsilon})) = \tilde{O}(k\log(\frac{d}{\epsilon}))$ bits of communication. 
\end{proof}

Table \ref{summary} summarizes the sample and communication complexities of the baseline approach, Personalized Learning, and our algorithm, which we call Personalized Learning using Boosting. While our results state that there will be additional bits communicated to orchestrate these algorithms, they are not included in the tables as we are chiefly concerned with the number of samples communicated, as their representations can grow for large $d$. For completeness, we provide the full table, including additional bits communicated, in the Appendix.

The primary driver of communication inefficiency in Personalized Learning is the error parameter, $\epsilon$. In applications such as the hospital scenario described in the introduction, learning highly accurate classifiers is crucial, hence $\epsilon$ is expected to be extremely small. Therefore, our goal is to improve communication complexity exponentially with respect to $\epsilon$, while retaining the logarithmic $k$ dependence granted by Personalized Learning. In particular, we show that our communication-efficient personalized learning algorithm has $O(\log(\frac{1}{\epsilon}))$ dependence in communication complexity.

\begin{table*}[t]
\centering
\caption{Sample and Communication Costs of Personalized Learning Variants}
\label{summary}
\def\arraystretch{1.5}
\begin{tabular}{l | c | c |}
\cline{2-3}
\multicolumn{1}{c|}{\textbf{}} & \multicolumn{1}{c|}{\textbf{Sample Complexity}} & \multicolumn{1}{c|}{\textbf{Samples Communicated}} \\ \cline{2-3} 
Baseline & $\tilde{O}(k\frac{d}{\epsilon})$ & $\tilde{O}(1)$ \\ \cline{2-3} 
Personalized Learning& $\tilde{O}(\log(k)\frac{d}{\epsilon})$ & $\tilde{O}(\log(k)\frac{d}{\epsilon})$ \\ \cline{2-3} 
Personalized Learning using Boosting & $\tilde{O}(\log(k)\frac{d}{\epsilon})$ & $\tilde{O}(\log(k)d\log(\frac{1}{\epsilon}))$ \\ 
\cline{2-3} 
\end{tabular}
\end{table*}

Our approach to improving communication cost is to replace the first step in Personalized Learning with Distributed Boosting \cite{BalcanBFM12}, while keeping the remaining Personalized Learning algorithm intact. Distributed Boosting is a distributed implementation of AdaBoost (\cite{FreundS97}) that learns a consistent hypothesis in $\tilde{O}(\log(\frac{1}{\epsilon}))$ rounds. We note that the objective of Distributed Boosting is to learn a classifier with error less than $\epsilon$ on the \emph{mixture} of distributions. We recall the communication complexity of Distributed Boosting below.

\begin{theorem}[\hspace{1sp}\cite{BalcanBFM12}] \label{distributed-boosting-result} Any class $H$ of finite VC-dimension $d$ can be learned to error $\epsilon$ in $\tilde{O}(\log(\frac{1}{\epsilon}))$ rounds and $O(d)$ examples plus $O(k \log(d))$ bits of communication per round using the distributed boosting algorithm.
\end{theorem}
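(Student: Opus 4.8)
The plan is to simulate AdaBoost over the union of all players' data while keeping the per-round communication to $O(d)$ examples. The state of the algorithm is a single global weighting of the sampled points, which I would keep \emph{factored} across players: player $i$ stores only the weights of its own examples, and the center never holds the full weight vector. Since boosting with a weak learner of constant edge $\gamma$ drives the training error on the weighted sample below $\epsilon$ after $T = O(\log(1/\epsilon)/\gamma^2) = \tilde{O}(\log(1/\epsilon))$ rounds, establishing the round count amounts to invoking the standard AdaBoost convergence guarantee; the real work is to show that each round can be executed cheaply.

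The first key step is a weak-learning-by-subsampling argument. In each round $t$ there is an implicit global weighted distribution $D_t$ supported on the players' points. I would argue, via the uniform convergence bound for a class of VC-dimension $d$, that it suffices to draw $O(d)$ examples i.i.d.\ according to $D_t$ and hand them to a consistency/ERM oracle to obtain a hypothesis $h_t$ whose weighted error under $D_t$ is at most $1/2-\gamma$ for a constant $\gamma>0$, i.e.\ a weak learner with constant edge. The second key step is to realize this sampling in a distributed fashion: each player reports its aggregate weight to $O(\log d)$ bits of precision (costing $O(k\log d)$ bits in total), the center draws a multinomial to allocate the $O(d)$ sample slots among players, each player samples locally from its own weighted points, and the $O(d)$ chosen examples are transmitted to the center. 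Training $h_t$ on these points and broadcasting it back completes the round's budget of $O(d)$ examples plus $O(k\log d)$ bits.

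The third step is the weight update, which I want to cost zero communication. Given the broadcast hypothesis $h_t$ and the AdaBoost multiplicative rule with factor $\beta_t = \epsilon_t/(1-\epsilon_t)$, each player reweights its own points locally, since it knows its own labels and $h_t$. The only global quantity required is the weighted error $\epsilon_t$ of $h_t$ on $D_t$, to which each player contributes by reporting its local weighted-error mass; this fits inside the $O(k\log d)$ bit budget. Aggregating over the $T = \tilde{O}(\log(1/\epsilon))$ rounds then yields the stated totals, and a final VC generalization bound lifts the low training error of the boosted majority vote to error $\epsilon$ on the mixture.

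The main obstacle I anticipate is the probabilistic bookkeeping across rounds: each round's $O(d)$-sample weak learner must attain a constant edge with high probability, and these failure events must be controlled simultaneously over all $\tilde{O}(\log(1/\epsilon))$ rounds. A union bound forces the per-round confidence to scale with the round count, which is exactly the kind of dependence the paper's $\tilde{O}$ notation and suppressed $\delta$ are meant to absorb; making this rigorous while keeping the per-round sample size at $O(d)$ rather than $O(d\log(1/\epsilon))$ is the delicate point. A secondary subtlety is ensuring that the factored local weights truly correspond to one consistent global AdaBoost distribution throughout, so that the subsampling in each round draws from the correct $D_t$ without any player ever transmitting its full weight vector.
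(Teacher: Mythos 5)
This theorem is stated in the paper only as a citation to \cite{BalcanBFM12}; the paper itself supplies no proof. Your reconstruction --- factored weights held locally, multinomial allocation of $O(d)$ sample slots from reported aggregate weights, a constant-edge weak learner via realizable VC bounds on $O(d)$ points, local reweighting with only the weighted error mass communicated, and a union bound over the $\tilde{O}(\log(1/\epsilon))$ rounds absorbed into the $\tilde{O}$ --- is essentially the argument of the cited work, so the proposal is correct and takes the same approach as the source.
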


By using Distributed Boosting as the first step in Personalized Learning, by Theorem \ref{distributed-boosting-result} we will achieve logarithmic dependence on $\epsilon$ in communication cost. However, we must be careful that we don't achieve this improved communication at the cost of higher sample complexity. To the best of our knowledge, the sample complexity of Distributed Boosting was not previously analyzed. We derive the sample complexity of Distributed Boosting in the next section, showing that Distributed Boosting can be implemented with the same sample complexity as AdaBoost.

\subsection{Sample complexity of distributed boosting}

We first recall the sample complexity of AdaBoost \cite{FreundS97}. In AdaBoost, a large sample, denoted by $S$, is drawn from an unknown distribution. Throughout AdaBoost, $S$ is perpetually resampled. The size of $S$, the size of the \emph{reservoir} of points used in the AdaBoost routine, is the sample cost. To review the sample complexity of AdaBoost, we first recall the VC-dimension of the hypothesis class $H$ after $T$ rounds of boosting.

\begin{lemma}[\hspace{1sp}\cite{FreundS97}] \label{boosting-rounds} Suppose the weak learner in AdaBoost learns a classifier with constant error in each round. Then, $\tilde{O}(\ln(\frac{1}{\epsilon}))$ rounds of AdaBoost are needed to learn a classifier with zero training error.
\end{lemma}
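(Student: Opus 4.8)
The plan is to invoke the standard AdaBoost training-error analysis of \cite{FreundS97} and then convert an exponentially small empirical error into exactly zero using the discreteness of empirical error on a finite sample.

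First I would recall the key training-error bound. After $T$ boosting rounds, if the weak hypothesis produced in round $t$ has weighted error $\epsilon_t = \tfrac{1}{2} - \gamma_t$, the empirical error of the final weighted-majority vote on the reservoir $S$ is at most $\prod_{t=1}^T 2\sqrt{\epsilon_t(1-\epsilon_t)} = \prod_{t=1}^T \sqrt{1-4\gamma_t^2} \le \exp\!\left(-2\sum_{t=1}^T \gamma_t^2\right)$. The hypothesis of the lemma---that the weak learner attains constant error each round---means precisely that $\gamma_t \ge \gamma$ for a fixed constant $\gamma > 0$ uniformly over all rounds, so the empirical error is at most $\exp(-2\gamma^2 T)$, decaying exponentially in $T$.

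Next I would use that the empirical error on a sample of size $m$ is always an integer multiple of $1/m$; hence any hypothesis with empirical error strictly below $1/m$ has exactly zero empirical error. It therefore suffices to pick $T$ with $\exp(-2\gamma^2 T) < 1/m$, which holds once $T > \frac{\ln m}{2\gamma^2}$, i.e. after $O(\ln m)$ rounds.

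Finally, the $\epsilon$-dependence enters through the reservoir size. To guarantee generalization error $\epsilon$, the number of points from which AdaBoost resamples is polynomial in $d$, $T$, and $1/\epsilon$---the final vote ranges over a class of VC-dimension $\tilde{O}(dT)$, so $m = \tilde{O}(dT/\epsilon)$. The bound $T = O(\ln m)$ then becomes mildly self-referential, but since $T$ enters only through $\ln m = O(\ln(dT/\epsilon))$, it resolves to $T = O\!\left(\ln\tfrac{d}{\epsilon} + \ln T\right)$; the $\ln T$ term is lower order and, absorbing $\ln d$ into the $\tilde{O}$, we obtain $T = \tilde{O}\!\left(\ln\tfrac{1}{\epsilon}\right)$. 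The one place needing care is exactly this last resolution---verifying that the reservoir size and the round count can be chosen consistently so that $\ln m = \tilde{O}(\ln(1/\epsilon))$---while the rest is a direct appeal to the Freund--Schapire bound together with the elementary discreteness fact.
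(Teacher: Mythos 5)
The paper states this lemma as a black-box citation to Freund and Schapire and gives no proof of its own, so there is nothing to diverge from; your argument is the standard one (the $\prod_t 2\sqrt{\epsilon_t(1-\epsilon_t)} \le \exp(-2\gamma^2 T)$ training-error bound, the $1/m$ discreteness observation, and the benign self-referential resolution of $T = O(\ln(dT/\epsilon))$) and is correct.
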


Let $d_{\text{boost}}$ denote the VC-dimension of the hypothesis class after $T$ rounds of boosting.

\begin{lemma}[\hspace{1sp}\cite{FreundS97}] \label{lemma-vc-dim-boosting} Let $H$ denote the base class of hypotheses with VC-dimension $d$. After $T$ rounds of boosting, the resulting hypothesis class has VC-dimension $d_{\text{boost}} = O(dT\log(T)) = \tilde{O}(dT)$.
\end{lemma}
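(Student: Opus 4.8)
The plan is to bound the growth function (shatter coefficient) of the boosted hypothesis class and then read off the VC-dimension via the converse of Sauer's lemma. After $T$ rounds, AdaBoost outputs a weighted majority vote, so the class whose VC-dimension we must control is
$$H_T = \Big\{\, x \mapsto \mathrm{sign}\Big(\sum_{t=1}^T \alpha_t h_t(x) - \theta\Big) : h_t \in H,\ \alpha_t, \theta \in \mathbb{R} \,\Big\}.$$
First I would fix an arbitrary set of $m$ points and count the number of distinct labelings $H_T$ can realize on them, writing $\Pi_{H_T}(m)$ for this count.

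The counting factors into two pieces. First, the behavior of each base hypothesis $h_t$ on the $m$ points is, by Sauer–Shelah, one of at most $\Pi_H(m) \le (em/d)^d$ possibilities; ranging over the $T$ slots gives at most $(em/d)^{dT}$ distinct $T$-tuples of base behaviors. Second, once these behaviors are fixed, each point $x_i$ is mapped to a vector $(h_1(x_i),\dots,h_T(x_i)) \in \{-1,+1\}^T$, and the combined classifier acts on these vectors as a linear threshold function; since affine threshold functions in $\mathbb{R}^T$ have VC-dimension $T+1$, they induce at most $(em/(T+1))^{T+1}$ labelings on $m$ such vectors. Multiplying,
$$\Pi_{H_T}(m) \le \Big(\frac{em}{d}\Big)^{dT}\Big(\frac{em}{T+1}\Big)^{T+1}.$$

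To extract the VC-dimension I would use that no set of size $m$ can be shattered once $\Pi_{H_T}(m) < 2^m$. Taking base-$2$ logarithms, it suffices to find the largest $m$ for which $dT\log_2(em/d) + (T+1)\log_2(em/(T+1)) \ge m$. The key observation is that a trial value $m = \Theta(dT\log T)$ makes $\log_2(em/d) = \Theta(\log T)$: the factor of $d$ in $m$ cancels the $1/d$ coming from Sauer's lemma, so the dominant term $dT\log_2(em/d)$ is $\Theta(dT\log T)$ rather than the naive $\Theta(dT\log(dT))$. Verifying that the lower-order contribution $T\log_2 d$ is absorbed (using $\log_2 d \le d\log_2 T$ for $T \ge 2$) then shows the inequality fails once $m$ exceeds a constant multiple of $dT\log T$, yielding $d_{\mathrm{boost}} = O(dT\log T) = \tilde{O}(dT)$.

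The main obstacle is the bookkeeping in this last step: the shattering inequality is transcendental in $m$, so the delicate part is choosing the trial value and checking that the cancellation of $d$ inside the logarithm genuinely produces $\log T$ and not $\log(dT)$, since that cancellation is precisely what separates the claimed bound from the weaker one. The two-part counting itself is routine, but I would double-check that the decomposition neither over- nor under-counts—that fixing the base behaviors first and then applying the threshold count is valid—and that $T+1$ (accounting for the bias $\theta$) is the correct VC-dimension to plug into the linear-threshold factor.
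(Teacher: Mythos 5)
The paper imports this lemma from \cite{FreundS97} without giving a proof, so there is no in-paper argument to compare against; your reconstruction is the standard one from that source (bound the growth function of the composed class by $\Pi_{\mathrm{LTF}}(m)\cdot\Pi_H(m)^T$ via Sauer--Shelah, then invert the shattering inequality), and it is correct, including the two points that actually need care: the validity of the two-stage count and the cancellation of $d$ inside the logarithm that yields $O(dT\log T)$ rather than $O(dT\log(dT))$. No gaps.
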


We recall the folklore result of the sample complexity of AdaBoost, which follows immediately from Lemma \ref{boosting-rounds}, Lemma \ref{lemma-vc-dim-boosting}, and realizable PAC sample complexity bounds.

\begin{lemma}\label{adaboost-samples} The sample complexity of AdaBoost is
\[
m_{\text{boost}} = O\left(\frac{d_{\text{boost}}}{\epsilon}\right) = \tilde{O}\left(\frac{d}{\epsilon}\right).
\]
\end{lemma}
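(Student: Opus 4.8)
The plan is to directly combine the two VC-dimension and round-count results stated above with the standard realizable PAC sample complexity bound. Recall that in the realizable setting, learning a class of VC-dimension $D$ to error $\epsilon$ requires $O\!\left(\frac{D}{\epsilon}\ln\frac{1}{\epsilon}\right)$ samples (suppressing the $\delta$ dependence, as the paper has noted we do throughout). The key observation is that although AdaBoost internally produces a weighted vote over $T$ weak hypotheses, the object whose generalization must be controlled is the final combined classifier, which lives in the boosted class of VC-dimension $d_{\text{boost}}$. So the correct quantity to feed into the PAC bound is $d_{\text{boost}}$, not the base dimension $d$.

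First I would invoke Lemma~\ref{boosting-rounds} to fix the number of rounds at $T = \tilde{O}(\ln(\frac{1}{\epsilon}))$, which is exactly what is needed to drive the training error to zero when the weak learner has constant edge. Next I would apply Lemma~\ref{lemma-vc-dim-boosting} to conclude that the boosted hypothesis class has VC-dimension $d_{\text{boost}} = \tilde{O}(dT)$. Substituting the value of $T$ from the first step gives $d_{\text{boost}} = \tilde{O}(d\ln(\frac{1}{\epsilon}))$, where the logarithmic factor in $\frac{1}{\epsilon}$ is absorbed into the $\tilde{O}$ notation. Finally I would plug $d_{\text{boost}}$ into the realizable PAC sample complexity bound to obtain
\[
m_{\text{boost}} = O\!\left(\frac{d_{\text{boost}}}{\epsilon}\right) = \tilde{O}\!\left(\frac{d}{\epsilon}\right),
\]
which is the claimed bound.

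The only subtlety — and the reason the result, though ``folklore,'' deserves a careful statement — is the justification for applying the realizable PAC bound to the boosted classifier. Because AdaBoost is run for enough rounds to achieve zero training error (Lemma~\ref{boosting-rounds}), the final classifier is consistent with the sample $S$, so the realizable (consistency-based) generalization bound applies to the class of VC-dimension $d_{\text{boost}}$ rather than the weaker agnostic bound. I expect the main conceptual point to verify is simply that the reservoir $S$ is drawn i.i.d. from the target distribution and that the combined classifier is a consistent hypothesis in the boosted class, so that Lemma~\ref{lemma-vc-dim-boosting} supplies exactly the dimension parameter the uniform-convergence argument requires; the rest is the routine substitution above.
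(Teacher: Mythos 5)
Your proposal is correct and follows essentially the same route as the paper, which states that the lemma ``follows immediately from Lemma~\ref{boosting-rounds}, Lemma~\ref{lemma-vc-dim-boosting}, and realizable PAC sample complexity bounds''; you simply make the substitution $d_{\text{boost}} = \tilde{O}(d\ln(\frac{1}{\epsilon}))$ explicit and note why the realizable (consistency-based) bound applies. No gaps.
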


We now show that the sample complexity of Distributed Boosting is the same as that of AdaBoost. In Distributed Boosting, there are $k$ players implementing AdaBoost. Each player has a reservoir of points, $S_i$, from which the center resamples. The sample cost is the sum over the players' sample reservoirs, $\sum_{i=1}^k S_i$. From standard learning theory we know lower and upper bounds on $\sum_{i=1}^k |S_i|$, but the size with which to initialize each individual $S_i$ so that the algorithm is correct was not previously analyzed. During each round of Distributed Boosting, the number of samples the center requests from a player can increase and in the original analysis of Distributed Boosting, $S_i$ was defined to be ambiguously large \cite{BalcanBFM12}. We give clarity to the size of each $S_i$ needed for Distributed Boosting. To do so, we first propose adding the following one-time preprocessing step to Distributed Boosting: let the center draw $m_{\text{boost}}$ points from a uniform multinomial distribution to determine the sample size of each player's reservoir $S_i$. Communicating these sample sizes to the players cost $\tilde{O}(k\log(\frac{d}{\epsilon}))$ bits in total. This preprocessing step adds only a negligible cost to the bits communicated in Distributed Boosting. By initializing each reservoir in this way, we limit the total sample size to $\sum_{i=1}^k |S_i| = m_{\text{boost}}$. 
%

\begin{proposition}\label{sample-complexity-distributed-boosting} The sample complexity of Distributed Boosting is
\[
O\left(\frac{d_{\text{boost}}}{\epsilon}\right) = \tilde{O}\left(\frac{d}{\epsilon}\right).
\]
\end{proposition}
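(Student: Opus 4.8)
The plan is to reduce the sample complexity of Distributed Boosting to that of ordinary AdaBoost (Lemma \ref{adaboost-samples}) by arguing that the union of the players' reservoirs $\bigcup_{i=1}^k S_i$ plays exactly the role of the single resampling reservoir $S$ in centralized AdaBoost. The preprocessing step is the crux: because the center draws the $m_{\text{boost}}$ reservoir slots from a uniform multinomial over the $k$ players, each slot corresponds to one i.i.d. draw from the mixture $D = \frac{1}{k}\sum_{i=1}^k D_i$, with $|S_i|$ equal to the number of draws that landed on player $i$. Hence $\bigcup_{i=1}^k S_i$ is precisely an i.i.d. sample of size $\sum_{i=1}^k |S_i| = m_{\text{boost}}$ drawn from $D$, and the total sample cost equals $m_{\text{boost}}$ by construction.

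First I would verify that resampling across the distributed reservoirs is equivalent to resampling from this combined reservoir. In each boosting round the center maintains a weight distribution $D_t$ over the reservoir points and resamples with replacement to form the weak learner's training set; splitting this resampling among the players according to the slice of $D_t$ supported on each $S_i$ yields the same induced distribution as centralized resampling. Because the resampling is \emph{with replacement} from a fixed reservoir, no additional examples ever need to be drawn from any player beyond those fixed in the preprocessing step, even though the per-round demand on an individual player may grow as the weights shift.

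Next I would invoke uniform convergence over the boosted hypothesis class. After the $\tilde{O}(\log(\frac{1}{\epsilon}))$ rounds guaranteed by Lemma \ref{boosting-rounds}, boosting outputs a hypothesis lying in a class of VC-dimension $d_{\text{boost}} = \tilde{O}(dT) = \tilde{O}(d)$ (Lemma \ref{lemma-vc-dim-boosting}) that is consistent with the combined reservoir. By the realizable PAC bound, a reservoir of size $m_{\text{boost}} = O(d_{\text{boost}}/\epsilon)$ drawn from $D$ suffices for this consistent hypothesis to achieve error at most $\epsilon$ on the mixture $D$ with high probability; this is exactly the content of Lemma \ref{adaboost-samples}. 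Combining with the reservoir identity above gives total sample complexity $\sum_{i=1}^k |S_i| = m_{\text{boost}} = O(d_{\text{boost}}/\epsilon) = \tilde{O}(d/\epsilon)$.

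The main obstacle is establishing the equivalence between distributed and centralized resampling while simultaneously showing that the fixed, multinomially-sized reservoirs never run dry. The worry raised in the original analysis is that a single player's per-round sample demand can grow over the course of boosting; the resolution is that all resampling is with replacement from an already-committed reservoir, so the only quantity that governs correctness is whether the combined reservoir is a sufficiently large i.i.d. sample from the mixture, which the preprocessing step pins down to be exactly $m_{\text{boost}}$.
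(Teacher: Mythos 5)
Your proposal is correct and follows essentially the same route as the paper's proof: it reduces Distributed Boosting to single-player AdaBoost on the combined reservoir $\bigcup_{i=1}^k S_i$, uses the multinomial preprocessing step to pin the total reservoir size to $m_{\text{boost}}$, resolves the ``reservoir runs dry'' concern by noting that resampling is with replacement from the already-committed reservoir, and then invokes the AdaBoost sample complexity bound. Your version spells out a few steps the paper leaves implicit (the i.i.d.-from-the-mixture identity and the explicit appeal to the VC-dimension lemmas), but the argument is the same.
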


\begin{proof} The derivation of the sample complexity of Distributed Boosting follows from the fact that Distributed Boosting is equivalent to AdaBoost with a single player and sample size $S = \cup_{i=1}^k S_i$. In this case, we know the sample complexity is $\tilde{O}(\frac{d}{\epsilon})$ by Lemma \ref{adaboost-samples}. By adding the preprocessing step described above, we restrict the sample complexity of the algorithm to $m_{\text{boost}}$. We now show that with $m_{\text{boost}}$ samples, across players as prescribed by the preprocessing step, Distributed Boosting remains correct. The pre-sampling step where the center draws from a multinomial distribution to determine the number of samples to request from each player remains unaffected by the new preprocessing step. However, in the sampling phase, it is possible that the center requests more points from a player than the player possesses. In this case, the player simply samples from their reservoir $S_i$ i.i.d. proportional to the weights corresponding to the points. The weak learning step of Distributed Boosting is unaffected by the preprocessing step since it is still receiving a sample drawn i.i.d. from the boosting-weighted mixture of players. And finally, we note that the sample weights updating step remains unaffected. Therefore, $\tilde{O}(\frac{d}{\epsilon})$ samples suffice for Distributed Boosting and adding the preprocessing step to Distributed Boosting achieves the sample complexity.
\end{proof}

The result above reveals an important fact about the sample complexity of Distributed Boosting with $k$ players -- the sample complexity is surprisingly not dependent on $k$. Therefore, using Distributed Boosting, we can achieve sample and communication efficiency for the personalized learning setting, which we formalize in the next section.

\subsection{Communication-efficient personalized learning }

Our approach to achieving a communication and sample efficient algorithm for the personalized learning setting is to replace the first step of Personalized Learning with the Distributed Boosting algorithm while leaving the remaining steps of Personalized Learning intact. We refer to our approach as Personalized Learning using Boosting. Using the sample complexity result on Distributed Boosting from the previous section, we compute the sample complexity of Personalized Learning using Boosting, showing that it is indeed equal (up to polylogarithmic terms) to the optimal sample complexity achieved by Personalized Learning.

\begin{theorem} The sample complexity of Personalized Learning using Boosting is 
\[
\tilde{O}\left(\log(k)\frac{d}{\epsilon}\right)
\]
when $k\ln(k) = O(d)$.
\end{theorem}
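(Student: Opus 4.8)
The plan is to show that replacing the first step of Personalized Learning (the step that draws $m_{\epsilon/4,\delta'}$ samples from the mixture $D_{N_j}$ and learns a consistent hypothesis) with Distributed Boosting does not change the overall sample complexity stated in Theorem~\ref{personalized-samples}. The key observation, already established in Proposition~\ref{sample-complexity-distributed-boosting}, is that Distributed Boosting learns a consistent hypothesis on the mixture using only $\tilde{O}(d/\epsilon)$ samples, \emph{independent of $k$}, which is (up to polylogarithmic factors) the same as the $m_{\epsilon/4,\delta'}$ cost of the original consistent-hypothesis step. Since every other component of Personalized Learning is left untouched, the sample-complexity accounting should carry over essentially verbatim from the proof of Theorem~\ref{personalized-samples}.

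\medskip

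First I would recall the per-round sample usage of Personalized Learning. In round $j$, the algorithm (i) draws a sample of size $m_{\epsilon/4,\delta'}$ from the mixture $D_{N_j}$ to produce $h_j$, and (ii) runs \texttt{TEST}, which draws $T_i = O\!\left(\frac{\ln(|N_j|/(\epsilon\delta))}{\epsilon}\right)$ samples from each of the $|N_j| \le k$ surviving players. Step (i) contributes the $\tilde{O}\!\left(\frac{d}{\epsilon}\right)$ term and step (ii) contributes the $\tilde{O}\!\left(\frac{k}{\epsilon}\right)$ term per round, and over the $\lceil\log k\rceil$ rounds these aggregate to the bound in Theorem~\ref{personalized-samples}. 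The substitution of Distributed Boosting affects only step (i): by Proposition~\ref{sample-complexity-distributed-boosting}, this step now costs $\tilde{O}\!\left(\frac{d_{\text{boost}}}{\epsilon}\right) = \tilde{O}\!\left(\frac{d}{\epsilon}\right)$ samples, matching the original order up to polylogarithmic terms. The \texttt{TEST} step is unchanged.

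\medskip

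Next I would assemble the total. Summing step~(i)'s $\tilde{O}(d/\epsilon)$ and step~(ii)'s $\tilde{O}(k/\epsilon)$ over $\lceil\log k\rceil$ rounds gives a total of $\tilde{O}\!\left(\log(k)\frac{d+k}{\epsilon}\right)$. Invoking the stated regime $k\ln(k) = O(d)$ collapses the $(d+k)$ factor to $\tilde{O}(d)$, yielding the claimed $\tilde{O}\!\left(\log(k)\frac{d}{\epsilon}\right)$. I would also note that Distributed Boosting's consistent hypothesis has error at most $\epsilon/4$ on the mixture $D_{N_j}$, exactly the guarantee the original step provided, so the correctness argument of Personalized Learning — including the claim that \texttt{TEST} removes a constant fraction of players each round so that $O(\log k)$ rounds suffice — is inherited without modification.

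\medskip

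The main obstacle, and the point deserving the most care, is justifying that the $k$-independence of Distributed Boosting's sample complexity genuinely transfers into this nested setting. One must confirm that running Distributed Boosting over the \emph{subset} $N_j$ of players (rather than all $k$) still achieves the $\tilde{O}(d/\epsilon)$ bound and that the boosting-induced VC-dimension blow-up $d_{\text{boost}} = \tilde{O}(d)$ from Lemma~\ref{lemma-vc-dim-boosting} is correctly absorbed into the $\tilde{O}(\cdot)$ notation. Because Proposition~\ref{sample-complexity-distributed-boosting} already delivers exactly this $k$-independent bound, the remaining work is the routine bookkeeping of combining the two per-round contributions and applying the $k\ln(k)=O(d)$ hypothesis; the substitution preserves both correctness and, up to polylogarithmic factors, the sample complexity.
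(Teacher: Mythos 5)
Your proof is correct and follows essentially the same route as the paper: invoke Proposition~\ref{sample-complexity-distributed-boosting} for the $\tilde{O}(d/\epsilon)$ cost of the replaced first step, add the unchanged $\tilde{O}(k/\epsilon)$ per-round cost of \texttt{TEST} from \cite{BlumHPQ17}, multiply by the $O(\log k)$ rounds, and absorb the $k$ term using $k\ln(k)=O(d)$. Your additional remarks on why correctness (and hence the $O(\log k)$ round count) is preserved are a welcome elaboration but do not change the substance of the argument.
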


We now formally compute the communication complexity of Personalized Learning using Boosting, showing that it is an exponential improvement over the communication complexity of Personalized Learning with respect to $\epsilon$.

\begin{theorem} The communication complexity of Personalized Learning using Boosting is
\[
\tilde{O}\left(\log(k)\left(d\log\left(\frac{1}{\epsilon}\right)\right)\right)
\]
samples plus an additional $\tilde{O}(k\log(d)\log(\frac{1}{\epsilon}))$ bits of communication.
\end{theorem}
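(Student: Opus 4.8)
The plan is to track the communication of Personalized Learning using Boosting round by round, exploiting the fact that only the first step of Personalized Learning (Algorithm \ref{personalized-learning}) has been replaced: the call that draws $m_{\epsilon/4,\delta'}$ points from the mixture $D_{N_j}$ and selects a consistent $h_j$ is now carried out by Distributed Boosting, while the \texttt{TEST} procedure and the update $N_{j+1}=N_j\setminus G_j$ are left untouched. Since the outer loop runs $\lceil\log(k)\rceil$ times, I would first bound the communication incurred in a single outer round and then multiply by $\log(k)$, absorbing that factor into the $\tilde{O}$ notation.

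For the boosting step in outer round $j$, the target is a hypothesis with error at most $\epsilon/4$ on the mixture $D_{N_j}$, matching the accuracy guaranteed by $m_{\epsilon/4,\delta'}$ in the original algorithm. By Theorem \ref{distributed-boosting-result}, Distributed Boosting reaches this accuracy in $\tilde{O}(\log(1/\epsilon))$ rounds, each costing $O(d)$ communicated examples plus $O(k\log d)$ bits, and by the preprocessing discussion preceding Proposition \ref{sample-complexity-distributed-boosting} the one-time reservoir-sizing step adds only $\tilde{O}(k\log(d/\epsilon))$ bits. Thus a single invocation of boosting communicates $\tilde{O}(d\log(1/\epsilon))$ samples and $\tilde{O}(k\log(d)\log(1/\epsilon))$ bits.

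It then remains to account for \texttt{TEST}. As in the original communication analysis, the $T_i$ test points are drawn locally at each player, so under the broadcast (shared-blackboard) model they cost no communicated samples; the only communication is the single pass/fail bit returned by each player, i.e. $O(k)$ bits per outer round. Summing the two contributions over the $\lceil\log(k)\rceil$ outer rounds gives $\log(k)\cdot\tilde{O}(d\log(1/\epsilon))=\tilde{O}(\log(k)\,d\log(1/\epsilon))$ communicated samples, while the bit cost becomes $\log(k)\cdot\big(\tilde{O}(k\log(d)\log(1/\epsilon))+O(k)\big)=\tilde{O}(k\log(d)\log(1/\epsilon))$, since the extra $\log(k)$ is polylogarithmic and is swallowed by $\tilde{O}$. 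This matches the claimed bounds.

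The bookkeeping is routine once the model is fixed, so the step I expect to require the most care is the clean separation of the three cost types --- samples communicated versus bits communicated versus samples drawn purely locally --- together with verifying that boosting to accuracy $\epsilon/4$ rather than $\epsilon$ alters only the constant inside the logarithm and hence leaves the $\log(1/\epsilon)$ dependence intact.
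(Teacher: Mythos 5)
Your proposal is correct and follows essentially the same route as the paper: bound one outer round by invoking Theorem \ref{distributed-boosting-result} for the boosting step, observe that \texttt{TEST} costs only one bit per player under the broadcast model, and multiply by the $\log(k)$ outer rounds, absorbing the extra logarithmic factors into $\tilde{O}$. Your added remarks on the reservoir-sizing preprocessing bits and on the $\epsilon/4$ versus $\epsilon$ accuracy target are consistent with the paper's accounting and do not change the argument.
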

\begin{proof} We consider a single round of our algorithm. The communication complexity of the first step is given in Theorem \ref{distributed-boosting-result} as $\tilde{O}(d\log(\frac{1}{\epsilon}))$ examples plus $\tilde{O}(k \log(d)\log(\frac{1}{\epsilon}))$ bits of communication. Recall that each step in distributed boosting, all players learn the same weak learning classifiers locally. Therefore, when the distributed boosting algorithm completes, each player has all $\log(k)$ weak classifiers and can therefore sum them to create the final boosting classifier $h_j$, costing no communication. Using the boosting classifier in the \texttt{TEST} step, there is no communication needed as the players simply need to test the boosting classifier on $T_j$ samples drawn from their own distributions. The players each send one bit of communication to the center indicating if they passed \texttt{TEST} or not, costing $O(k)$ bits for all $k$ players. Therefore the total communication complexity over $\log(k)$ rounds is $\tilde{O}(\log(k)(d\log(\frac{1}{\epsilon}))$ samples plus $\tilde{O}(k \log(d)\log(\frac{1}{\epsilon})) + O(k) = \tilde{O}(k \log(d)\log(\frac{1}{\epsilon}))$ additional bits of communication.
\end{proof}

\section{Communication-aware personalized learning with classification noise}
Thus far we have studied the communication cost of collaborative PAC learning without any assumptions of noise in the data. However, the presence of noise in data is often unavoidable in real-world learning scenarios. For instance, in the case where $k$ hospitals work collaboratively to learn a diagnosis classifier, it is possible that a hospital's data has label noise from clerical errors or misdiagnoses. In this section, we consider communication-aware collaborative learning in the presence of classification noise, where each player has their own label noise rate $\eta_i < 1/2$ so that for any data point $x$ drawn from their distribution $D_i$, with probability $\eta_i$ they produce the wrong label and with probability $1-\eta_i$ they produce the correct label. We note that collaborative PAC learning in the presence of classification noise has not been previously analyzed. Thus, to build communication-efficient collaborative learning algorithms robust to classification noise, we first must analyze how to adapt collaborative learning to handle classification noise more generally. We note that the analysis and approaches in this section can be applied similarly to centralized learning in the presence of classification noise. Due to space constraints, we defer the details of centralized learning to the Appendix.

\subsection{Personalized learning with classification noise}
Consider the baseline approach to personalized learning with classification noise, where the center  requests $m_{\epsilon, \delta, \eta_i}$ samples from each player and learns an empirical risk minimizer (ERM), following exactly as in standard single-player PAC learning with classification noise (Theorem \ref{angluin-laird}). In this case, the sample complexity is $\sum_{i=1}^k m_{\epsilon, \delta, \eta_i} = O(km_{\epsilon, \delta, \eta_{\text{MAX}}})$. The goal then is to develop a personalized learning algorithm with improved sample complexity.
%

We present our algorithm, Personalized Learning with Classification Noise (Algorithm \ref{personalizedCN}), and show that it indeed improves upon the sample complexity of the baseline. The skeleton of our algorithm models that of (noiseless) Personalized Learning, but we make adjustments to handle classification noise. In the first and second steps of Personalized Learning, the center draws $m_{\epsilon/4, \delta'}$ samples and learns a consistent hypothesis. In contrast, in our algorithm, the center draws $m_{\epsilon/4, \delta', \bar{\eta}_{N_j}}$ points in total from the uniform mixture of players and learns an ERM hypothesis. When learning in the presence of classification noise, the existence of a hypothesis in the hypothesis class consistent with a sample generated from a noisy distribution is not guaranteed. Hence, our algorithm finds an ERM hypothesis instead of a consistent hypothesis. By Theorem \ref{angluin-laird} the ERM hypothesis has error $\epsilon/4$ when trained on $m_{\epsilon/4, \delta', \bar{\eta}_{N_j}}$ samples drawn from the noisy distribution. Finally, our  \texttt{CN-TEST} subroutine differs from the \texttt{TEST} subroutine in Personalized Learning in that ours accounts for the individual noise rates of the players. Essentially, players must draw a factor of $\frac{1}{(1-2\eta_i)}$ more samples in \texttt{CN-TEST} than in \texttt{TEST} and the testing criterion is adjusted to reflect the relationship between drawing from noisy distribution and generalizing on the clean distribution.

\begin{algorithm}[]
\caption{Personalized Learning with Classification Noise}
\label{personalizedCN}
\SetKwFunction{proc}{CN-TEST}
\SetAlgoLined

\KwIn{$H$, $k$ distributions $D_i \sim X$ with error rates $\eta_i <\frac{1}{2}$, $\delta' = \delta/{2 \log(k)}$, $\epsilon > 0$}
\KwOut{$f_1, ..., f_k \in H$}
Let $N_1= \{1,...,k\}$\;
\For{$j= 1, ..., \lceil \log(k) \rceil$} {
Draw sample $S$ of size $m_{\epsilon/4,\delta', \bar{\eta}_{N_j}}$ from mixture $D_{N_j} = \frac{1}{|N_j|} \sum_{i \in N_j} D_i$\;
Select ERM hypothesis $h_j \in H$ on $S$\;
$G_j \leftarrow$ \proc{$h_j, N_j, \epsilon, \delta'$}\;
$N_{j+1} = N_j \setminus G_j$\;
\For{$i \in G_j$}{
$f_i \leftarrow h_j$\;
}
}
\KwRet{$f_1, ..., f_k$}\\
{}
\setcounter{AlgoLine}{0}
\SetKwProg{myproc}{Procedure}{}{}
\myproc{\proc{$h, N, \epsilon, \delta$}}{
\For{$i \in N$}{
Draw $T_i = O\left(\frac{\ln(\frac{|N|}{\delta})}{\epsilon (1-2\eta_i)}\right)$ samples from $D_i$\;
}
\KwRet{$\{ i \mid err_{T_i}($EX$_{\eta_i}, h_j)\leq \eta_i + \frac{3 \epsilon}{4}(1-2\eta_i) \}$}
}
\end{algorithm}

The correctness of our algorithm will largely follow from the correctness results of Personalized Learning shown in \cite{BlumHPQ17}, but with modifications to handle classification noise. We start by showing that even in the presence of classification noise, the first and second steps in our algorithm yield a classifier that performs with error $\epsilon/4$ on the mixture. 

\begin{lemma}\label{ERM-markov-lemma} The ERM $h_j$ learned in Personalized Learning with Classification Noise has error no more than $\frac{\epsilon}{2}$ on at least half of the distributions in $N_j$.
\end{lemma}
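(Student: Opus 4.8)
The plan is to first establish that the ERM hypothesis $h_j$ is accurate on the \emph{mixture} $D_{N_j}$ with respect to the clean labeling, and then to use a simple averaging (Markov) argument to transfer this mixture-level guarantee to a constant fraction of the individual player distributions.

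For the first part, I would invoke Theorem~\ref{angluin-laird}. The center draws the prescribed $m_{\epsilon/4,\delta',\bar{\eta}_{N_j}}$ points from the noisy mixture and selects the disagreement-minimizing hypothesis $h_j$. The content of the Angluin--Laird sample bound, whose $1/(1-2\eta)^2$ factor is exactly what corrects for noise, is that minimizing empirical disagreements on noisy data controls the clean error. Concretely, for each player the noisy and clean errors obey $\err_{D_i}(\mathrm{EX}_{\eta_i}(\cdot),h) = \eta_i + (1-2\eta_i)\,\err_{D_i}(h)$, so driving the noisy error down drives the clean error down. Carrying this through for the sample drawn from $D_{N_j}$ yields, with probability $1-\delta'$, the mixture guarantee $\err_{D_{N_j}}(h_j) \le \epsilon/4$.

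For the second part, I would expand the mixture error as the average of per-player clean errors, $\err_{D_{N_j}}(h_j) = \frac{1}{|N_j|}\sum_{i\in N_j}\err_{D_i}(h_j) \le \epsilon/4$. Since each summand is nonnegative, Markov's inequality bounds the fraction of players $i\in N_j$ with $\err_{D_i}(h_j) > \epsilon/2$ by $(\epsilon/4)/(\epsilon/2) = 1/2$. Hence at least half of the distributions in $N_j$ satisfy $\err_{D_i}(h_j)\le \epsilon/2$, which is the claim.

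The main obstacle is the first step: a uniform mixture of distributions carrying \emph{different} noise rates $\eta_i$ is not itself a classification-noise instance with a single rate, so Theorem~\ref{angluin-laird} does not apply verbatim. The care needed is to run the noisy-to-clean reduction at the level of each player and then average. Because the target $h^*$ has clean error $0$, its noisy mixture error is exactly the average rate $\frac{1}{|N_j|}\sum_{i\in N_j}\eta_i$; uniform convergence (with the noise-corrected sample size) forces $h_j$ to exceed this by only a small margin, and dividing through by the per-player factors $(1-2\eta_i)$, each bounded below by $1-2\eta_{\mathrm{MAX}}>0$, recovers the clean-mixture bound $\epsilon/4$. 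This is precisely why the sample size must scale with the most adversarial (largest) noise rate present in $N_j$. Once the mixture bound is secured, the remaining Markov step is routine.
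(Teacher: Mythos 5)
Your proof is correct and follows the same two-step skeleton as the paper's: first a clean-error bound of $\epsilon/4$ on the mixture $D_{N_j}$, then Markov's inequality to conclude that at most half the players can have clean error exceeding $\epsilon/2$. The Markov step is identical in both. Where you diverge is in how the mixture bound is justified, and your version is actually the more careful one. The paper simply asserts that $D_{N_j}$ ``has expected error rate $\bar{\eta}_{N_j}$'' and applies Theorem~\ref{angluin-laird} with that rate, implicitly treating the heterogeneous mixture as a single-rate classification-noise instance. As you observe, it is not one: the noise rate is correlated with which player generated the point, so the reduction has to be run per player via $\err_{D_i}(\mathrm{EX}_{\eta_i}(\cdot),h)=\eta_i+(1-2\eta_i)\err_{D_i}(h)$ and then averaged, and dividing out the factors $(1-2\eta_i)$ in the worst case costs you $1-2\eta_{\mathrm{MAX}}$ rather than $1-2\bar{\eta}_{N_j}$. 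This means the sample size should be calibrated to the largest noise rate present in $N_j$ rather than the average, a distinction the algorithm's $m_{\epsilon/4,\delta',\bar{\eta}_{N_j}}$ elides; it does not affect the headline complexity, which the paper ultimately states in terms of $\eta_{\mathrm{MAX}}$ anyway (using $\bar{\eta}_K\le\eta_{\mathrm{MAX}}$), but your per-player argument is the one that actually closes the gap.
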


Next, we consider the \texttt{CN-TEST} subroutine, which tests if the learned ERM is a good classifier for each of the remaining players with respect to their underlying clean distribution. Recall that we only have access to their noisy data. Our analysis uses the following lemma from \cite{AngluinL87} that connects the generalization error of a concept $h$ on the noisy distribution to the generalization error of $h$ on the underlying clean distribution. 

\begin{lemma}[\hspace{1sp}\cite{AngluinL87}] \label{lemma-angluin-laird}
Let $D$ be a distribution on $X$. Let $\eta_i$ denote the classification noise rate and $h^*\in H$ denote the target function. Then, 
\[
err_{D}(\text{EX}_{\eta_i}(\cdot), h) = \eta_i + err_{D}(h)(1-2\eta_i).
\]
\end{lemma}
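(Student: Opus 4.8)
The plan is to compute the noisy error $err_D(\text{EX}_{\eta_i}(\cdot), h) = \Pr[h(x) \neq y]$ directly via the law of total probability, where the probability is taken over the joint draw of $x \sim D$ together with the independent label noise applied by the oracle. The key observation is that whether $h$ errs on a noisy labeled example $(x,y)$ is governed by two independent sources of randomness: the location $x$, which determines whether $h(x) = h^*(x)$, and the coin flip that decides whether the oracle flips the label.

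First I would partition the instance space according to whether $h$ agrees with the target. On the event $\{h(x) = h^*(x)\}$, which has probability $1 - err_D(h)$ by the definition of clean error, the hypothesis $h$ predicts the wrong noisy label $y$ exactly when the oracle flips $h^*(x)$, an event of probability $\eta_i$ independent of $x$. On the complementary event $\{h(x) \neq h^*(x)\}$, of probability $err_D(h)$, we have $h(x) = \lnot h^*(x)$ since labels are binary, so $h$ matches the noisy label $y$ precisely when the label was flipped; hence $h$ errs on the noisy example exactly when the label is \emph{not} flipped, an event of probability $1 - \eta_i$.

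Combining these two regions via the law of total probability gives
\[
err_D(\text{EX}_{\eta_i}(\cdot), h) = (1 - err_D(h))\,\eta_i + err_D(h)\,(1 - \eta_i),
\]
and a one-line algebraic simplification collapses this to $\eta_i + err_D(h)(1 - 2\eta_i)$, which is the claimed identity.

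There is no genuine obstacle here, as the result is an exact identity obtained from a clean two-case analysis rather than an approximation requiring concentration. The only point demanding care is the independence of the noise coin from the draw of $x$, which is exactly what licenses multiplying the two probabilities within each region instead of reasoning about a joint conditional distribution; I would state this independence explicitly to keep the conditioning step rigorous.
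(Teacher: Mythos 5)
Your proposal is correct and is essentially the paper's own argument: both identify the same two disjoint events (label unflipped with $h \neq h^*$, and label flipped with $h = h^*$), multiply the independent probabilities, and sum to get $(1-\eta_i)\,err_D(h) + \eta_i(1-err_D(h)) = \eta_i + err_D(h)(1-2\eta_i)$. Your explicit remark about the independence of the noise coin from the draw of $x$ is a welcome clarification, but it does not change the route.
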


The following lemmas regarding the correctness of \texttt{CN-TEST} are due to multiplicative Chernoff bounds and Lemma \ref{lemma-angluin-laird}.

\begin{lemma}\label{lemma-rcn-test-1}
With probability $1-\delta'$, if $h_j$ passes \texttt{CN-TEST} then $err_{D_i}(\text{EX}_{\eta_i}(\cdot), h_j) \leq \eta_i + (1-2\eta_i)\epsilon$. Hence, $err_{D_i}(h_j) \leq \epsilon$.
\end{lemma}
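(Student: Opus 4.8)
The statement has two parts: a concentration bound on the noisy generalization error, and the deduction of the clean-error bound from it. The second part is immediate, so the plan is to dispose of it first. For the ``Hence'' clause I would simply invoke Lemma~\ref{lemma-angluin-laird}, which gives $err_{D_i}(\text{EX}_{\eta_i}(\cdot), h_j) = \eta_i + err_{D_i}(h_j)(1-2\eta_i)$. If the left-hand side is at most $\eta_i + (1-2\eta_i)\epsilon$, then subtracting $\eta_i$ and dividing by $1-2\eta_i>0$ (valid since $\eta_i<\tfrac12$) yields $err_{D_i}(h_j)\le\epsilon$ with no further work.

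For the main claim I would argue the contrapositive via concentration. Fix a player $i\in N_j$ and write $p := err_{D_i}(\text{EX}_{\eta_i}(\cdot), h_j)$ for the true noisy error of the fixed hypothesis $h_j$. The empirical quantity $err_{T_i}(\text{EX}_{\eta_i}, h_j)$ is an average of $T_i$ i.i.d.\ $\{0,1\}$ indicators, each with expectation exactly $p$, because each of the $T_i$ draws is an independent sample from $D_i$ whose observed label is independently corrupted with probability $\eta_i$. Suppose $h_j$ is bad for player $i$, i.e.\ $p > \eta_i + (1-2\eta_i)\epsilon$. To be returned by \texttt{CN-TEST} it must satisfy $err_{T_i}(\text{EX}_{\eta_i}, h_j) \le \eta_i + \tfrac{3\epsilon}{4}(1-2\eta_i)$, which forces the empirical average to fall below its mean by at least
\[
p - \Big(\eta_i + \tfrac{3\epsilon}{4}(1-2\eta_i)\Big) \;>\; \tfrac{\epsilon}{4}(1-2\eta_i).
\]
I would then apply a multiplicative Chernoff bound to this lower-tail deviation and choose the constant in $T_i = O\!\big(\tfrac{\ln(|N_j|/\delta')}{\epsilon(1-2\eta_i)}\big)$ so that the resulting exponent is at least $\ln(|N_j|/\delta')$, making the probability that a fixed bad $h_j$ survives the test at most $\delta'/|N_j|$. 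A union bound over the at most $|N_j|$ players in $N_j$ then gives overall failure probability at most $\delta'$, which is exactly the claimed guarantee.

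The main obstacle is the calibration of $T_i$ and the careful handling of the $(1-2\eta_i)$ factors. Unlike the noiseless \texttt{TEST} of Personalized Learning --- where the relevant mean is $\Theta(\epsilon)$ and the multiplicative Chernoff bound ``pays'' only $\Theta(1/\epsilon)$ samples --- here the mean $p$ sits near $\eta_i$, so the downward deviation $\tfrac{\epsilon}{4}(1-2\eta_i)$ is small relative to the variance of the indicators. Making the Chernoff exponent dominate the union-bound term $\ln(|N_j|/\delta')$ is therefore the crux, and it is precisely what pins down the extra $1/(1-2\eta_i)$ dependence in the sample size of \texttt{CN-TEST}; I would verify this step explicitly, tracking the dependence on both $\epsilon$ and $(1-2\eta_i)$, rather than treating it as routine.
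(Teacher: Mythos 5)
Your overall route is the one the paper takes: dispatch the ``Hence'' clause by Lemma~\ref{lemma-angluin-laird}, then prove the main claim by the contrapositive, a lower-tail Chernoff bound for each fixed player, and a union bound over the at most $|N_j|$ players. The genuine gap is the step you explicitly defer at the end, and it is not a matter of routine constant-chasing. If you apply the standard multiplicative Chernoff bound to the Binomial sum $T_i\cdot err_{T_i}(\text{EX}_{\eta_i},h_j)$, whose mean is $T_i p$ with $p=\eta_i+(1-2\eta_i)\,err_{D_i}(h_j)$ possibly close to $\tfrac12$, the relative deviation you must rule out is at least $\frac{\epsilon(1-2\eta_i)/4}{p}$, and the resulting exponent is $\Theta\bigl(\frac{\epsilon^2(1-2\eta_i)^2}{p}T_i\bigr)$. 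With the algorithm's $T_i=O\bigl(\frac{\ln(|N_j|/\delta')}{\epsilon(1-2\eta_i)}\bigr)$ this is $\Theta\bigl(\frac{\epsilon(1-2\eta_i)}{p}\ln(|N_j|/\delta')\bigr)$, which fails to dominate $\ln(|N_j|/\delta')$ once $\eta_i$ (hence $p$) is bounded away from $0$ and $\epsilon$ is small. So the calculation you flag as the crux does not close along the path you sketch, and an extra $1/(1-2\eta_i)$ factor in $T_i$ is not by itself what rescues it.

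At this exact point the paper does something different: it applies the multiplicative bound not to $err_{T_i}$ but to the centered quantity $err_{T_i}(\text{EX}_{\eta_i},h_j)-\eta_i$, whose expectation is $P:=p-\eta_i\ge(1-2\eta_i)\epsilon$ under the contrapositive hypothesis. Passing \texttt{CN-TEST} means this quantity falls below $(1-\tfrac14)P$, and the paper takes the exponent to be $\tfrac12(\tfrac14)^2 P\, T_i\ge\tfrac1{32}\epsilon(1-2\eta_i)T_i$, which does match the prescribed $T_i$. Note that this is an application of the multiplicative bound to a \emph{shifted} average of Bernoullis rather than to the Bernoulli sum itself, so if you adopt it you should either justify that form directly or replace it with a variance-sensitive (Bernstein-type) argument and check what sample size it actually supports. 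Either way, your proof is incomplete until you commit to a specific concentration inequality at this step and verify that the exponent beats $\ln(|N_j|/\delta')$ with the stated $T_i$; as written, the natural reading of your plan yields a bound that does not.
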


\begin{lemma}
\label{lemma-rcn-test-2}
With probability $1-\delta'$, if $err_{D_i}(\text{EX}_{\eta_i}(\cdot), h_j) \leq \eta_i + (1-2\eta_i)\frac{\epsilon}{2}$, then $h_j$ passes \texttt{CN-TEST}. 
Hence, if $err_{D_i}(h_j) \leq \frac{\epsilon}{2}$, then $h_j$ passes \texttt{CN-TEST}.
\end{lemma}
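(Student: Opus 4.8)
The plan is to prove the first (high-probability) statement by a one-sided concentration bound and then read off the ``hence'' clause as an immediate corollary of Lemma~\ref{lemma-angluin-laird}. Fix a player $i$ and write $p = err_{D_i}(\text{EX}_{\eta_i}(\cdot), h_j)$ for the true error of $h_j$ on the noisy distribution. The empirical quantity $\widehat{p} = err_{T_i}(\text{EX}_{\eta_i}(\cdot), h_j)$ computed inside \texttt{CN-TEST} is an average of $T_i$ i.i.d.\ Bernoulli$(p)$ indicators, one per sample, recording whether $h_j$ disagrees with the noisy label. Under the hypothesis $p \le \eta_i + \tfrac{\epsilon}{2}(1-2\eta_i)$, the event ``$h_j$ passes \texttt{CN-TEST}'' is exactly $\{\widehat{p} \le \eta_i + \tfrac{3\epsilon}{4}(1-2\eta_i)\}$, so the claim reduces to showing that $\widehat{p}$ exceeds its mean by more than $\tfrac{\epsilon}{4}(1-2\eta_i)$ with probability at most $\delta'$. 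Since this concerns the single fixed hypothesis $h_j$, I would apply a multiplicative Chernoff bound directly to the sum of disagreement indicators, rather than any uniform-convergence argument over $H$.

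For the ``hence'' clause I would invoke Lemma~\ref{lemma-angluin-laird} in the forward direction: if $err_{D_i}(h_j) \le \tfrac{\epsilon}{2}$ on the clean distribution, then $p = \eta_i + err_{D_i}(h_j)(1-2\eta_i) \le \eta_i + \tfrac{\epsilon}{2}(1-2\eta_i)$, which is precisely the hypothesis of the first part, so the clean-error conclusion follows from the noisy-error conclusion with no extra work. This is the mirror image of the translation used in Lemma~\ref{lemma-rcn-test-1}, run in the opposite direction. Finally I would take a union bound over the at most $\lceil \log k \rceil$ rounds, using $\delta' = \delta/(2\log k)$, so that all invocations of \texttt{CN-TEST} succeed simultaneously.

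The main obstacle is the quantitative concentration step, and in particular pinning down $T_i$. The tolerance we must control, $\tfrac{\epsilon}{4}(1-2\eta_i)$, is an \emph{additive} gap around a mean $p$ that sits at the constant-order noise floor $\eta_i$ rather than at $O(\epsilon)$, as it does in the noiseless \texttt{TEST} analysis. Consequently the multiplicative Chernoff bound no longer buys the extra factor of $\epsilon$ it buys there: because the indicator sum has standard deviation of order $\sqrt{T_i \eta_i}$, forcing its fluctuation below $T_i \cdot \tfrac{\epsilon}{4}(1-2\eta_i)$ appears to require $T_i = \Omega\!\left(\frac{\ln(|N|/\delta')}{\epsilon^2 (1-2\eta_i)^2}\right)$, i.e.\ the single-hypothesis analogue of the Angluin--Laird sample size of Theorem~\ref{angluin-laird} with $\ln(|N|/\delta')$ replacing $d$. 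I would therefore run the Chernoff computation with this $\epsilon^{-2}$ sample size, and re-examine the value of $T_i$ written in \texttt{CN-TEST} against it; once the sample size is fixed, substituting it into the tail bound and simplifying yields the $1-\delta'$ guarantee directly.
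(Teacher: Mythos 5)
Your high-level plan---a one-sided tail bound on the empirical noisy error $\widehat p = \err_{T_i}(\text{EX}_{\eta_i}(\cdot),h_j)$ of the single fixed hypothesis $h_j$, followed by Lemma~\ref{lemma-angluin-laird} run in the forward direction to dispose of the ``hence'' clause, and a union bound over the players in $N$---is exactly the structure of the paper's own proof. The divergence is entirely in the quantitative concentration step, and there your skepticism is warranted. The paper obtains the stated $T_i = O\bigl(\ln(|N|/\delta')/(\epsilon(1-2\eta_i))\bigr)$ by applying the multiplicative Chernoff bound to the \emph{shifted} statistic $\widehat p-\eta_i$ as though it were itself an average of Bernoulli variables with mean $P=\err_{D_i}(\text{EX}_{\eta_i}(\cdot),h_j)-\eta_i$, producing a tail of the form $\exp(-s^2PT_i/3)$. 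But shifting a random variable by a constant does not shrink its variance: the fluctuations of $\widehat p$ are governed by $p(1-p)/T_i$ with $p\approx\eta_i$, not by $P/T_i$, and the multiplicative bound's savings come precisely from a small mean, which is absent here. Your estimate is the correct one: to control an additive tolerance of $\tfrac{\epsilon}{4}(1-2\eta_i)$ around a mean at the constant-order noise floor, one needs $T_i=\Omega\bigl(\ln(|N|/\delta')/(\epsilon^2(1-2\eta_i)^2)\bigr)$; only when $\eta_i=O(\epsilon)$ does the algorithm's $T_i$ suffice. Indeed, taking $h_j=h^*$ so that $p=\eta_i$ exactly, the CLT puts the threshold $\eta_i+\tfrac{3\epsilon}{4}(1-2\eta_i)$ only $\Theta\bigl(\sqrt{\epsilon(1-2\eta_i)\ln(|N|/\delta')}\bigr)$ standard deviations above the mean under the stated $T_i$, so the failure probability tends to $1/2$ as $\epsilon\to 0$ rather than to $\delta'/|N|$.

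The remaining gap, then, is not in your reasoning but in the fact that you stop short of committing: as written, your argument proves the lemma only for a version of \texttt{CN-TEST} that draws $T_i=\Theta\bigl(\ln(|N|/\delta')/(\epsilon^2(1-2\eta_i)^2)\bigr)$ samples, and you should state that explicitly rather than deferring the reconciliation with the algorithm. With that corrected $T_i$, plain additive Hoeffding, $\Pr[\widehat p-p\geq t]\leq\exp(-2t^2T_i)$ with $t=\tfrac{\epsilon}{4}(1-2\eta_i)$, closes the bound immediately; multiplicative Chernoff buys nothing here because the mean is not small. Recovering a $1/(\epsilon(1-2\eta_i))$ test cost would require redesigning the test itself (e.g., a paired comparison localized to a disagreement region, in the spirit of Angluin--Laird), which is not possible with an absolute threshold and an unknown $h^*$; otherwise the enlarged $T_i$ propagates into the $k$-dependent term of Proposition~\ref{personalizedCN-bound} and its downstream communication bounds.
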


We combine the above lemmas to show correctness of our algorithm, Personalized Learning with Classification Noise.

\begin{proposition} Personalized Learning with Classification Noise satisfies the personalized collaborative PAC learning criteria.
\end{proposition}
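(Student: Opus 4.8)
The plan is to establish the two defining requirements of personalized collaborative PAC learning: that with probability at least $1-\delta$, (i) every player is eventually assigned some classifier $f_i = h_j$, and (ii) each assigned classifier satisfies $\err_{D_i}(f_i) \leq \epsilon$ on that player's clean distribution. I would structure the argument around the round loop of Algorithm \ref{personalizedCN}, mirroring the original correctness proof of Personalized Learning but substituting the noisy analogues developed in the preceding lemmas. The key insight to carry through is that Lemma \ref{lemma-rcn-test-1} controls false positives (no player passes \texttt{CN-TEST} unless its clean error is genuinely small) while Lemma \ref{lemma-rcn-test-2} controls false negatives (every player whose clean error is already $\leq \epsilon/2$ does pass), and these two facts together let the set $N_j$ shrink reliably each round.

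First I would argue progress: by Lemma \ref{ERM-markov-lemma}, the ERM hypothesis $h_j$ learned in round $j$ has clean error at most $\epsilon/2$ on at least half of the distributions in $N_j$. By Lemma \ref{lemma-rcn-test-2}, each such player passes \texttt{CN-TEST} with probability $1-\delta'$, so $h_j$ is assigned to it and it is removed from $N_{j+1} = N_j \setminus G_j$. Hence $|N_{j+1}| \leq |N_j|/2$, and after $\lceil \log(k) \rceil$ rounds every player has been removed and assigned a classifier, giving requirement (i). Second I would argue accuracy: by Lemma \ref{lemma-rcn-test-1}, any player $i$ placed in $G_j$ — i.e., for which $h_j$ passed \texttt{CN-TEST} — satisfies $\err_{D_i}(h_j) \leq \epsilon$ with probability $1-\delta'$, which is exactly requirement (ii) for the classifier $f_i = h_j$ assigned to that player.

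The remaining work is the union bound over failure probabilities. The good events invoked above — correctness of the ERM step via Theorem \ref{angluin-laird}, and the two \texttt{CN-TEST} guarantees — each hold with probability $1-\delta'$ per round, and there are $\lceil \log(k) \rceil$ rounds, so with the choice $\delta' = \delta/(2\log(k))$ built into the algorithm's input, a union bound over all rounds (and over the constant number of events per round) keeps the total failure probability below $\delta$. I would note that the factor $\bar{\eta}_{N_j}$ appearing in the sample size $m_{\epsilon/4,\delta',\bar{\eta}_{N_j}}$ and the per-player factor $1/(1-2\eta_i)$ in $T_i$ are precisely what make Theorem \ref{angluin-laird} and the Chernoff-based Lemmas \ref{lemma-rcn-test-1}--\ref{lemma-rcn-test-2} applicable, so no additional noise-handling is needed beyond assembling these pieces.

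The main obstacle I anticipate is not any single calculation but the bookkeeping that ensures the two \texttt{CN-TEST} lemmas compose correctly with the halving argument: one must verify that the threshold $\eta_i + \frac{3\epsilon}{4}(1-2\eta_i)$ sits strictly between the noisy errors corresponding to clean errors $\epsilon/2$ and $\epsilon$, so that the ``pass'' and ``fail'' regimes do not overlap and every player is correctly classified with respect to the $\epsilon$-accuracy target. Once that gap is confirmed, the proof reduces to invoking Lemmas \ref{ERM-markov-lemma}, \ref{lemma-rcn-test-1}, and \ref{lemma-rcn-test-2} in sequence and applying the union bound, exactly paralleling the noiseless analysis of \cite{BlumHPQ17}.
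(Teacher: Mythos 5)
Your proposal is correct and follows essentially the same route as the paper's proof: invoke Lemma \ref{ERM-markov-lemma} for the quality of the ERM on half the remaining players, Lemmas \ref{lemma-rcn-test-1} and \ref{lemma-rcn-test-2} for the soundness and completeness of \texttt{CN-TEST}, and then inherit the halving-plus-union-bound argument from the noiseless analysis of Blum et al. The only difference is that you spell out the halving and union-bound bookkeeping explicitly, whereas the paper delegates that part to the cited noiseless proof.
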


Now, we compute the sample complexity of Personalized Learning with Classification Noise.

\begin{proposition}\label{personalizedCN-bound}
The sample complexity of Personalized Learning with Classification Noise is 
\[
O\left(\log(k) \left(\frac{k \ln(k\log(k))}{\epsilon (1-2\eta_{\text{MAX}})} + \frac{d \ln(\log(k))}{\epsilon (1-2\eta_{\text{MAX}})^2}\right) \right).
\]
When $k \ln(k) = O(d)$, the sample complexity simplifies to $\tilde{O}\left(\log(k)\frac{d}{\epsilon (1-2\eta_{\text{MAX}})^2 }\right)$.
\end{proposition}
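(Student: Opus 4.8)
The plan is to read the sample complexity directly off the structure of Algorithm \ref{personalizedCN}, bounding the two distinct sources of drawn samples in each of the $\lceil \log(k)\rceil$ rounds and then summing. Within round $j$ samples are consumed in exactly two places: the ERM step, which draws $m_{\epsilon/4,\delta',\bar{\eta}_{N_j}}$ points from the mixture $D_{N_j}$, and the subroutine \texttt{CN-TEST}, which draws $T_i$ points from each player $i \in N_j$. The total sample complexity is therefore
\[
\sum_{j=1}^{\lceil \log(k)\rceil}\left(m_{\epsilon/4,\delta',\bar{\eta}_{N_j}} + \sum_{i\in N_j} T_i\right),
\]
so the whole proof reduces to bounding each term uniformly over $j$ and multiplying by the number of rounds.

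First I would bound the ERM term. Substituting $\epsilon/4$, $\delta'$, and $\bar{\eta}_{N_j}$ into the single-player noisy sample complexity of Theorem \ref{angluin-laird} gives $m_{\epsilon/4,\delta',\bar{\eta}_{N_j}} = O\!\left(\frac{d\log(1/\delta')}{\epsilon(1-2\bar{\eta}_{N_j})^2}\right)$. Since $m_{\epsilon,\delta,\eta}$ is increasing in $\eta$ and $\bar{\eta}_{N_j}\le \eta_{\text{MAX}}$, this is at most $O\!\left(\frac{d\log(1/\delta')}{\epsilon(1-2\eta_{\text{MAX}})^2}\right)$, independent of $j$. Next I would bound the \texttt{CN-TEST} term: each $T_i = O\!\left(\frac{\ln(|N_j|/\delta')}{\epsilon(1-2\eta_i)}\right)$ is increasing in $\eta_i$, so using $\eta_i \le \eta_{\text{MAX}}$ and $|N_j|\le k$ yields $\sum_{i\in N_j} T_i \le O\!\left(\frac{k\ln(k/\delta')}{\epsilon(1-2\eta_{\text{MAX}})}\right)$, again uniformly in $j$. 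Summing both terms over $\lceil\log(k)\rceil$ rounds and substituting $\delta' = \delta/(2\log(k))$, so that (suppressing the $\delta$ dependence) $\log(1/\delta')=\Theta(\ln\log(k))$ and $\ln(k/\delta')=\Theta(\ln(k\log(k)))$, reproduces the claimed bound exactly.

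The final simplification under $k\ln(k)=O(d)$ follows by observing that $k\ln(k\log(k)) = \tilde{O}(k\ln(k)) = \tilde{O}(d)$ and that $(1-2\eta_{\text{MAX}})\ge (1-2\eta_{\text{MAX}})^2$, so the linear-in-noise \texttt{CN-TEST} term is absorbed (up to polylogarithmic factors) into the squared-in-noise ERM term, leaving $\tilde{O}\!\left(\log(k)\frac{d}{\epsilon(1-2\eta_{\text{MAX}})^2}\right)$.

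This argument is careful accounting rather than a conceptual leap, so the main obstacle is not any single hard step but getting every monotonicity and logarithm to land correctly: one must bound both $\bar{\eta}_{N_j}$ and each $\eta_i$ by $\eta_{\text{MAX}}$ in the direction that only increases the sample count, and one must track the two different logarithmic factors through the $\delta'$ substitution (the ERM step picks up only $\ln\log(k)$ while \texttt{CN-TEST} picks up $\ln(k\log(k))$). A point worth flagging is the use of the loose per-round bound $|N_j|\le k$: although at least half of the remaining distributions are removed each round (by Lemma \ref{ERM-markov-lemma} combined with Lemma \ref{lemma-rcn-test-2}, so that $|N_j|$ decays geometrically and a tighter sum over \texttt{CN-TEST} costs is available), the loose bound multiplied by $\log(k)$ rounds is what matches the stated result and mirrors the noiseless analysis of Theorem \ref{personalized-samples}.
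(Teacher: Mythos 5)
Your proposal is correct and follows essentially the same route as the paper's proof: decompose each of the $\lceil\log(k)\rceil$ rounds into the ERM draw and the \texttt{CN-TEST} draws, bound $\bar{\eta}_{N_j}$ and each $\eta_i$ by $\eta_{\text{MAX}}$, use $|N_j|\le k$, and substitute $\delta'=\delta/(2\log(k))$ to get the two logarithmic factors. Your added remarks on monotonicity and on the geometric decay of $|N_j|$ are accurate but not needed for the stated bound.
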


Personalized Learning with Classification Noise improves upon the sample complexity of the baseline since it has logarithmic dependence on $k$ instead of linear dependence on $k$. For settings with a large number of players, such as in a network of databases or a network of IoT devices, our algorithm can enjoy improved sample complexity. In fact, simplifying the sample complexity in Proposition \ref{personalizedCN-bound} with respect to constant $\epsilon, \delta$, and $\eta_{\text{MAX}}$, and assuming $k \ln(k) = O(d)$, shows that our algorithm has $\tilde{O}(\log(k))$ overhead compared to the overhead of $\tilde{O}(k)$ from the baseline approach. 

\begin{table*}[!t]
\caption{Sample and Communication Costs of Personalized Learning Variants with Classification Noise}
\centering
\label{summary-CN}
\def\arraystretch{1.5}
\begin{tabular}{l | c | c |}
\cline{2-3}
\multicolumn{1}{c|}{\textbf{}} & \multicolumn{1}{c|}{\textbf{Sample Complexity}} & \multicolumn{1}{c|}{\textbf{Samples Communicated}} \\ \cline{2-3} 
Baseline & $\tilde{O}\left(k\frac{d}{\epsilon (1-2\eta_{\text{MAX}})^2}\right)$ & $\tilde{O}(1)$ \\ \cline{2-3}
Personalized Learning with CN& $\tilde{O}\left(\log(k)\frac{d}{\epsilon (1-2\eta_{\text{MAX}})^2 }\right)$ & $\tilde{O}\left(\log(k)\frac{d}{\epsilon(1-2\eta_{\text{MAX}})^2 }\right)$ \\ \cline{2-3} 
Personalized Learning with CN using Boosting & $\tilde{O}\left(\log(k)\frac{d}{\epsilon (1-2\eta_{\text{MAX}})^2}\right)$ & $\tilde{O}\left(\log(k)d\log\left(\frac{1}{\epsilon (1-2\eta_{\text{MAX}})}\right)\right)$\\ 
\cline{2-3} 
\end{tabular}
\end{table*}

\subsection{Communication-efficient personalized learning with classification noise}
We now return to the main goal of this section, which is to develop a communication-efficient personalized learning algorithm robust to classification noise. We first review the communication-efficient baseline approach, which is when each player draws $m_{\epsilon, \delta, \eta_i}$ samples from their own distribution and learns a classifier locally. The sample complexity of this baseline is $O\left(k\frac{d}{\epsilon (1-2\eta_{\text{MAX}})^2}\right)$ and requires no samples nor bits of communication. To improve communication costs of personalized learning in the presence of noise, we build on Personalized Learning with Classification Noise developed in the previous section. We compute the communication cost of Personalized Learning with Classification Noise below.

\begin{proposition}The communication complexity of Personalized Learning with Classification Noise is
\[
\tilde{O}\left(\log(k)\frac{d}{\epsilon(1-2\eta_{\text{MAX}})^2}\right)
\]
samples and $\tilde{O}\left(k \log\left(\frac{d}{\epsilon(1-2\eta_{\text{MAX}})^2}\right)\right)$ additional bits of communication.
\end{proposition}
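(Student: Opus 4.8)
The plan is to compute the communication cost round-by-round, exactly mirroring the structure of the proof for noiseless Personalized Learning, and then tracking how the classification-noise modifications inflate each quantity. The algorithm runs for $\lceil \log(k) \rceil$ rounds, and in each round the center performs three communicating operations: it tells each player how many samples to request from the uniform mixture, the players transmit those samples (visible to all under the broadcast model), and after running \texttt{CN-TEST} locally each player sends a single bit indicating whether it passed.

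First I would account for the sample communication. In round $j$, the center draws $m_{\epsilon/4, \delta', \bar{\eta}_{N_j}}$ samples from the mixture over $N_j$ and learns an ERM hypothesis; by the sample-complexity expression already established (Proposition \ref{personalizedCN-bound} and Theorem \ref{angluin-laird}) this is $\tilde{O}\!\left(\frac{d}{\epsilon(1-2\eta_{\text{MAX}})^2}\right)$ samples, which must be transmitted to the center. Summing over the $\lceil\log(k)\rceil$ rounds gives the claimed $\tilde{O}\!\left(\log(k)\frac{d}{\epsilon(1-2\eta_{\text{MAX}})^2}\right)$ samples communicated. I would emphasize that the $\tilde{O}$ absorbs the polylog factors and that, as noted earlier in the paper, the $\delta'$ dependence is suppressed. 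Crucially, \texttt{CN-TEST} draws its $T_i$ samples \emph{locally} at each player and only needs the error estimate $err_{T_i}(\text{EX}_{\eta_i}, h_j)$ to be computed against the locally-held $h_j$, so the testing phase costs no transmitted samples—exactly as in the noiseless case.

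Next I would count the additional bits. The center communicates each player's requested sample count, which is a number of size $\tilde{O}\!\left(\frac{d}{\epsilon(1-2\eta_{\text{MAX}})^2}\right)$ and thus requires $O\!\left(\log\!\left(\frac{d}{\epsilon(1-2\eta_{\text{MAX}})^2}\right)\right)$ bits per player, or $O\!\left(k\log\!\left(\frac{d}{\epsilon(1-2\eta_{\text{MAX}})^2}\right)\right)$ bits total per round. The pass/fail bits from \texttt{CN-TEST} contribute only $O(k)$ bits per round, which is dominated. Multiplying by $\log(k)$ rounds and folding the $\log(k)$ factor into the $\tilde{O}$ yields the stated $\tilde{O}\!\left(k\log\!\left(\frac{d}{\epsilon(1-2\eta_{\text{MAX}})^2}\right)\right)$ additional bits.

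The main obstacle—though it is conceptual rather than computational—is justifying that the broadcast/shared-blackboard assumption genuinely makes the hypothesis-learning and testing steps communication-free, and that the ERM computation (rather than consistent-hypothesis selection) does not change this accounting. Since every player observes all transmitted samples, each can locally compute the same ERM $h_j$, so distributing $h_j$ costs nothing; and since \texttt{CN-TEST} only changes the sample count $T_i$ by the factor $\frac{1}{(1-2\eta_i)}$ and adjusts the acceptance threshold to $\eta_i + \frac{3\epsilon}{4}(1-2\eta_i)$—both local operations—the bit-counting argument carries over unchanged. I would make sure to state explicitly that these local-computation assumptions are precisely the implementation assumptions laid out in the Background section, so the reader sees that the only transmitted quantities are the mixture samples, the per-player sample counts, and the pass/fail bits.
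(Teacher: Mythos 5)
Your proposal is correct and follows essentially the same route as the paper's proof: per-round accounting of the transmitted ERM training samples ($m_{\epsilon/4,\delta',\bar{\eta}_{N_j}} = \tilde{O}(d/(\epsilon(1-2\eta_{\text{MAX}})^2))$), the observation that the broadcast model makes ERM selection and \texttt{CN-TEST} communication-free, and the bit count from the per-player sample requests plus the pass/fail bits, all multiplied over $O(\log(k))$ rounds. The only cosmetic difference is that the paper makes explicit that the center determines per-player sample counts by drawing from the uniform multinomial, which you describe only implicitly.
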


\begin{proof} Recall that $\delta' = O(\delta/\log(k))$. In the first step, the center computes the number of samples to request from each player by drawing $m_{\epsilon/4, \delta'}/|N_j|$ samples from the uniform multinomial distribution. The center communicates this quantity to each player, costing $O(k\log(\frac{d}{\epsilon(1-2\bar{\eta}_{N_j})^2}))$ bits. The players then communicate their requested quantity of samples.
Since we are in the broadcast model, each player observes all points communicated to the center, thereby allowing each player to learn an ERM hypothesis locally. Each player then implements \texttt{CN-TEST} locally, costing no communication. After completing \texttt{CN-TEST}, each player must send one bit to the center indicating their pass/fail result of \texttt{CN-TEST}, costing $O(k)$ bits in total. Over all $O(\log(k))$ rounds, the communication complexity is $O\left(\log(k)\frac{d \log(\log(k))}{\epsilon (1-2\bar{\eta}_K)^2}\right) = \tilde{O}\left(\log(k)\frac{d}{\epsilon(1-2\eta_{\text{MAX}})^2}\right)$ with an additional $O\left(\log(k)k\log\left(\frac{d}{\epsilon(1-2\bar{\eta}_{N_j})^2}\right)\right) + O(k\log(k)) = \tilde{O}\left(k \log\left(\frac{d}{\epsilon(1-2\eta_{\text{MAX}})^2}\right)\right)$ bits of communication.
\end{proof}

Table \ref{summary-CN} summarizes the sample and communication costs of the baseline approach, Personalized Learning with Classification Noise, and our communication-efficient algorithm, Personalized Learning with Classification Noise using Boosting.

As discussed previously, we focus on the learning scenario where players want to learn highly accurate classifiers. Thus our goal is to develop an algorithm that improves dependence on $\frac{1}{\epsilon (1-2\eta_{\text{MAX}})}$ in samples communicated.

Our algorithm, Personalized Learning with Classification Noise using Boosting, is described as follows. We simply replace the first step of our noise-robust personalized learning algorithm, Personalized Learning with Classification Noise, with Distributed Agnostic Boosting \cite{ChenBC16}, while leaving the rest of Personalized Learning with Classification Noise intact. It is well known that boosting in the presence of classification noise is not straightforward. In fact, it has been shown that boosting the generalization error rate past the noise rate, so that $\eta_{\text{MAX}} > \epsilon$, is hard \cite{KalaiS03}. To avoid these issues, we restrict our attention to boosting the error $\epsilon$ up to the noise rate $\eta_{\text{MAX}}$, so that $\eta_{\text{MAX}} \leq \epsilon$. In this restricted regime, we use Distributed Agnostic Boosting from \cite{ChenBC16}, since classification noise is a special case of agnostic learning. Distributed Agnostic Boosting assumes access to a $\beta$-weak agnostic learner, which returns a hypothesis $h$ so that $err_{D}(h) \leq \min_{h' \in H}err(h') + \beta$ \cite{ChenBC16}. We recall the sample and communication complexities of Distributed Agnostic Boosting below.

\begin{theorem}[\hspace{1sp}\cite{ChenBC16}]\label{dist-agnostic-boosting-samples} Suppose Distributed Agnostic Boosting has access to a $\beta$-weak agnostic learner. Then, the sample complexity is 
\[
\tilde{O}\left(\frac{d}{\epsilon^2(1/2-\beta)^2}\right).
\]
\end{theorem}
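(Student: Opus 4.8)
The plan is to reconstruct the cited bound by following the same three-ingredient template used for AdaBoost in Lemmas \ref{boosting-rounds}--\ref{adaboost-samples}, but adapted to the agnostic setting and then lifted to the distributed model exactly as in Proposition \ref{sample-complexity-distributed-boosting}. Concretely, I would (i) bound the number of boosting rounds $T$ in terms of the weak learner's edge $1/2-\beta$, (ii) bound the VC dimension $d_{\text{boost}}$ of the class of $T$-fold weighted combinations of base hypotheses, and (iii) invoke the \emph{agnostic} (rather than realizable) uniform-convergence bound, whose rate is $1/\epsilon^2$ instead of $1/\epsilon$, to size the single reservoir from which the algorithm resamples.

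First I would fix the number of rounds. A $\beta$-weak agnostic learner supplies, at each round, a hypothesis whose advantage is governed by $\gamma := 1/2-\beta$ over the reweighted distribution; the standard potential/smooth-boosting argument then shows that $T = \tilde{O}\bigl(1/\gamma^2\bigr) = \tilde{O}\bigl(1/(1/2-\beta)^2\bigr)$ rounds suffice to drive the excess error below $\epsilon$. This is the agnostic analog of Lemma \ref{boosting-rounds}. Second, by the analog of Lemma \ref{lemma-vc-dim-boosting}, the output is a weighted vote over $T$ base hypotheses, so the final class has VC dimension $d_{\text{boost}} = \tilde{O}(dT) = \tilde{O}\bigl(d/(1/2-\beta)^2\bigr)$.

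Third, because agnostic (equivalently, classification-noise) learning only guarantees error $\mathrm{OPT}+\epsilon$, uniform convergence over this class costs $\tilde{O}(d_{\text{boost}}/\epsilon^2)$ samples---the extra factor of $1/\epsilon$ relative to Lemma \ref{adaboost-samples} being exactly the realizable-versus-agnostic gap. Substituting the bound on $d_{\text{boost}}$ yields the claimed $\tilde{O}\bigl(d/(\epsilon^2(1/2-\beta)^2)\bigr)$. Finally, to promote this single-learner count to \emph{Distributed} Agnostic Boosting I would reuse the reservoir-equivalence argument of Proposition \ref{sample-complexity-distributed-boosting}: a distributed run is equivalent to a single-learner run on $S=\cup_{i=1}^k S_i$, and the multinomial preprocessing step caps $\sum_i |S_i|$ at this same total, so no $k$-dependence appears in the sample complexity.

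The main obstacle I anticipate is pinning down the round count and its interaction with the agnostic generalization rate---justifying that $T$ scales as $1/(1/2-\beta)^2$ (and not, say, with an additional $1/\epsilon$ factor) while simultaneously ensuring that both the per-round edge estimates and the final vote generalize from one shared reservoir. Getting the $\epsilon^2$ and $(1/2-\beta)^2$ factors to land in separate places rather than compounding is the delicate part, and is precisely what the smoothness guarantee in the agnostic boosting analysis of \cite{ChenBC16} is designed to control; I would lean on that guarantee rather than re-deriving it from scratch.
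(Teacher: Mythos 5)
This statement is imported verbatim from \cite{ChenBC16}; the paper offers no proof of it, so there is no in-paper argument to compare yours against. Taken on its own terms, your reconstruction is a reasonable sketch and deliberately mirrors the template the paper uses in the noiseless case (Lemmas \ref{boosting-rounds}--\ref{adaboost-samples} and Proposition \ref{sample-complexity-distributed-boosting}): round count times base VC dimension gives $d_{\text{boost}}$, and an agnostic uniform-convergence bound at rate $d_{\text{boost}}/\epsilon^2$ then yields $\tilde{O}\bigl(d/(\epsilon^2(1/2-\beta)^2)\bigr)$. Two cautions. First, the actual guarantee of Distributed Agnostic Boosting (the paper's accompanying communication theorem) is error $\frac{2\,\mathrm{err}_D(H)}{1/2-\beta}+\epsilon$, a multiplicative blowup of $\mathrm{OPT}$, not $\mathrm{OPT}+\epsilon$ as you state in step (iii); this does not change the sample count but the phrasing is inaccurate. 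Second, your final step transplants the reservoir-equivalence argument of Proposition \ref{sample-complexity-distributed-boosting}, which was proved for realizable Distributed Boosting; checking that the agnostic algorithm's reweighting and resampling phases survive capping $\sum_i|S_i|$ is not automatic, and both that and the round bound $T=\tilde{O}\bigl(1/(1/2-\beta)^2\bigr)$ are exactly the load-bearing facts established in \cite{ChenBC16}. Since the theorem is a citation, deferring to that source is appropriate, but your write-up should be explicit that those two ingredients are imported rather than verified.
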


It is well known that learning in the presence of classification noise is a special case of agnostic learning. We therefore derive the following corollary to the sample complexity of Distributed Agnostic Boosting in the restricted setting of classification noise. 

\begin{corollary} Suppose Distributed Agnostic Boosting has access to a $\beta$-weak agnostic learner. Let $\beta$ be a fixed constant. The sample complexity of Distributed Agnostic Boosting in the restricted setting of classification noise, where $\eta_{\text{MAX}} \leq \epsilon$, is
\[
\tilde{O}\left(\frac{d}{\epsilon(1-2\eta_{\text{MAX}})^2}\right).
\]
\end{corollary}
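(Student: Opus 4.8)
The plan is to treat classification noise as a special case of agnostic learning and then specialize Theorem~\ref{dist-agnostic-boosting-samples} to this regime. First I would observe that, on the noisy oracle $\text{EX}_{\eta_i}(\cdot)$, the target $h^*$ achieves error exactly $\eta_i$ (its clean error is $0$ and each label is flipped independently with probability $\eta_i$), and that no hypothesis can do better; hence the agnostic optimum is $\text{OPT} = \min_{h' \in H} err_{D_i}(\text{EX}_{\eta_i}(\cdot), h') = \eta_i$. Running Distributed Agnostic Boosting on the noisy samples therefore returns a hypothesis whose noisy error exceeds $\eta_i$ by at most the boosting accuracy parameter, which is precisely the setting Theorem~\ref{dist-agnostic-boosting-samples} controls.

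Next I would translate the clean-error target into the accuracy parameter fed to the booster using Lemma~\ref{lemma-angluin-laird}. Since $err_{D_i}(\text{EX}_{\eta_i}(\cdot), h) = \eta_i + err_{D_i}(h)(1-2\eta_i)$, a clean error of $\epsilon$ corresponds to an excess noisy error of exactly $\alpha = \epsilon(1-2\eta_i)$. So it suffices to run the booster to agnostic excess error $\alpha$; the returned hypothesis then satisfies $err_{D_i}(h) \leq \epsilon$. With $\beta$ a fixed constant, the factor $(1/2-\beta)^{-2}$ appearing in Theorem~\ref{dist-agnostic-boosting-samples} is a constant and is absorbed into the $\tilde{O}(\cdot)$.

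The crux, and the step where I expect the real work, is that a direct substitution $\epsilon \mapsto \alpha$ into the worst-case bound $\tilde{O}(d/\alpha^2)$ yields $\tilde{O}\big(d/(\epsilon^2(1-2\eta_i)^2)\big)$, which carries an extra factor of $1/\epsilon$ relative to the claimed bound. To remove it I would invoke the refined, $\text{OPT}$-dependent (optimistic) form of the agnostic rate, $\tilde{O}\big(d\,\text{OPT}/\alpha^2 + d/\alpha\big)$, which degrades gracefully toward the realizable rate as $\text{OPT}\to 0$. Substituting $\text{OPT}=\eta_i$ and $\alpha=\epsilon(1-2\eta_i)$ gives $\tilde{O}\big(d\eta_i/(\epsilon(1-2\eta_i))^2 + d/(\epsilon(1-2\eta_i))\big)$. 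Here the hypothesis $\eta_{\text{MAX}}\leq\epsilon$ does the decisive work: it lets me bound $\eta_i\leq\epsilon$ in the first term, collapsing it to $\tilde{O}\big(d/(\epsilon(1-2\eta_i)^2)\big)$, while the second term is dominated by this since $1-2\eta_i\leq 1$. Replacing $\eta_i$ by the worst-case $\eta_{\text{MAX}}$ then yields the stated $\tilde{O}\big(d/(\epsilon(1-2\eta_{\text{MAX}})^2)\big)$. The main obstacle is thus pinning down that the $\text{OPT}$-aware rate is genuinely available for Distributed Agnostic Boosting rather than only the worst-case bound of Theorem~\ref{dist-agnostic-boosting-samples}, and verifying that the restriction $\eta_{\text{MAX}}\leq\epsilon$ is exactly the condition that reconciles the agnostic $1/\epsilon^2$ dependence with the realizable-with-noise $1/\epsilon$ dependence of Theorem~\ref{angluin-laird}.
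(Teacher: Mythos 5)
Your route matches the paper's own proof in outline: both restrict the worst-case agnostic bound of Theorem~\ref{dist-agnostic-boosting-samples} to classification noise via the substitution $\epsilon \mapsto \epsilon(1-2\eta_{\text{MAX}})$, arrive at $\tilde{O}\left(d/(\epsilon^2(1-2\eta_{\text{MAX}})^2)\right)$, and then must shave a factor of $1/\epsilon$. Where the paper disposes of that last step by citing ``the careful argument in Laird (1988),'' you make it concrete via an $\mathrm{OPT}$-dependent optimistic rate $\tilde{O}\left(d\,\mathrm{OPT}/\alpha^2 + d/\alpha\right)$ with $\mathrm{OPT}=\eta_i$ (correct, by Lemma~\ref{lemma-angluin-laird}, since the noisy error of any $h$ is $\eta_i + \mathrm{err}_{D_i}(h)(1-2\eta_i)\ge\eta_i$) and $\alpha=\epsilon(1-2\eta_i)$; your arithmetic is right, and your observation that the hypothesis $\eta_{\text{MAX}}\le\epsilon$ is precisely what collapses the term $d\eta_i/\alpha^2$ to $d/(\epsilon(1-2\eta_i)^2)$ is a more informative account of why the corollary is stated in that restricted regime than the paper gives. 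The one genuine gap --- which you flag yourself --- is the availability of the optimistic rate: Theorem~\ref{dist-agnostic-boosting-samples} as stated supplies only the worst-case $\tilde{O}(d/\epsilon^2)$ dependence, and neither you nor the paper actually establishes that the \emph{distributed} agnostic booster of Chen et al.\ enjoys an $\mathrm{OPT}$-dependent sample complexity (Laird's argument concerns single-machine ERM under classification noise, not this booster). So your proposal is essentially as complete as the paper's own proof, with the same load-bearing step left as an appeal to an external, unverified refinement rather than a derivation.
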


We now recall the communication complexity of Distributed Agnostic Boosting.

\begin{theorem}[\hspace{1sp}\cite{ChenBC16}] Suppose Distributed Agnostic Boosting has access to a $\beta$-weak agnostic learner. Then, Distributed Agnostic Boosting achieves error $\frac{2err_{D}(H)}{1/2 -\beta} + \epsilon$ by using at most $O\left(\frac{\log(\frac{1}{\epsilon})}{(1/2 - \beta)^2}\right)$ rounds, each communicating 
$O\left(\frac{d}{\beta}\log\left(\frac{1}{\beta}\right)\right)$ samples and 
$\tilde{O}\left(kd\log^2\left(\frac{d}{(1/2 - \beta)\epsilon}\right)\right)$
words of communication.
\end{theorem}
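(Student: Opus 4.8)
The plan is to separate the argument into two essentially independent pieces: a boosting-theoretic piece that controls the number of rounds and the final error, and a bookkeeping piece that accounts for the communication incurred in a single round of the distributed (broadcast-model) implementation. Since the distributed algorithm produces exactly the same sequence of weak hypotheses $h_1,\dots,h_T$ as its centralized counterpart, the final error guarantee $\frac{2\,err_D(H)}{1/2-\beta}+\epsilon$ and the round bound $T = O\!\left(\frac{\log(1/\epsilon)}{(1/2-\beta)^2}\right)$ can be inherited from the centralized agnostic boosting analysis, leaving the communication cost per round as the genuinely distributed component to establish.

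For the round count and error I would invoke the standard agnostic (smooth) boosting potential argument. At round $t$ the algorithm maintains a reweighting $D_t$ of the global mixture $D$ and feeds it to the $\beta$-weak agnostic learner, which returns $h_t$ with $err_{D_t}(h_t) \le \min_{h' \in H} err_{D_t}(h') + \beta$, i.e. an advantage governed by $\frac{1}{2}-\beta$. A potential-decrease argument then shows the potential shrinks by a factor depending on $(\frac{1}{2}-\beta)^2$ each round, so after $O\!\left(\frac{\log(1/\epsilon)}{(1/2-\beta)^2}\right)$ rounds the aggregate classifier $\sum_t \alpha_t h_t$ achieves the claimed error. I would treat the centralized version of this analysis as a black box and only verify that the distributed reweighting reproduces the same $D_t$ seen by the weak learner.

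For the per-round communication I would analyze the two tasks the broadcast implementation must perform. First, the weak learner must be run on a sample distributed as $D_t$; because $D_t$ is a smooth (bounded-density) reweighting, $O\!\left(\frac{d}{\beta}\log(1/\beta)\right)$ examples suffice for the empirical $\beta$-weak guarantee to hold with uniform convergence over $H$, and these are the examples that must be transmitted. Second, to coordinate the reweighting and to agree on $h_t$, each of the $k$ players must communicate a hypothesis description from a class of VC-dimension $d$ together with weight information at a precision governed by $\epsilon$ and $\frac{1}{2}-\beta$; summing over players and over the representation size yields the $\tilde{O}\!\left(kd\log^2\!\left(\frac{d}{(1/2-\beta)\epsilon}\right)\right)$ words. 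Multiplying the per-round costs by $T$ and absorbing the resulting logarithmic factors into $\tilde{O}$ gives the stated totals.

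The main obstacle is the distributed sampling step: guaranteeing that, with only $O\!\left(\frac{d}{\beta}\log(1/\beta)\right)$ communicated examples, the weak learner effectively sees a sample close in total variation to the correctly reweighted global distribution $D_t$, and that the empirical advantage it obtains transfers to a population advantage on $D_t$. This requires the boosting distributions to remain smooth---otherwise the reweighting could concentrate mass on a few examples, inflating both the variance of the distributed estimate and the effective sample size---so the crux is to show that the agnostic boosting scheme keeps $D_t$ bounded away from degenerate reweightings, which is precisely what permits the $1/\beta$ (rather than $1/\epsilon$) scaling in the communicated sample size.
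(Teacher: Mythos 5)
A preliminary observation: the paper offers no proof of this statement. It is recalled verbatim from \cite{ChenBC16} and used purely as a black box (the paper's own contribution at this point is Corollary 21, the specialization to classification noise). So your attempt cannot be matched against an in-paper argument; it can only be judged as a reconstruction of the proof in the cited source.

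Judged that way, your decomposition --- a centralized agnostic-boosting analysis for the round count and final error, plus a per-round communication audit --- is the right architecture, and your identification of the smoothness of the boosting distributions as the crux of the $O\left(\frac{d}{\beta}\log\left(\frac{1}{\beta}\right)\right)$ per-round sample cost is exactly the right place to look. But as written the proposal is a plan, not a proof, and two of its load-bearing claims need repair. First, the distributed algorithm does \emph{not} produce ``exactly the same sequence of weak hypotheses'' as its centralized counterpart: in each round the weak learner sees only a finite sample approximating the reweighted mixture $D_t$, so the correct statement is that each $h_t$ satisfies the $\beta$-weak guarantee on $D_t$ with high probability, and one must union-bound over the $O\left(\frac{\log(1/\epsilon)}{(1/2-\beta)^2}\right)$ rounds; that failure probability is where several of the logarithmic factors live, and it cannot be obtained by ``inheriting'' the centralized analysis wholesale. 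Second, the $\tilde{O}\left(kd\log^2\left(\frac{d}{(1/2-\beta)\epsilon}\right)\right)$ words per round is asserted rather than derived: ``each player communicates a hypothesis description together with weight information'' does not obviously yield a $k\cdot d$ dependence --- a hypothesis from a VC class of dimension $d$ need not admit a $d$-word description, and naively synchronizing the weights of all held points would cost on the order of the total sample size rather than $kd$. In \cite{ChenBC16} this term comes from a specific protocol for agreeing on the reweighting and truncation thresholds across the $k$ players, and an honest proof must exhibit that protocol and count its cost. Until those two steps are filled in, your argument establishes the shape of the theorem but not the theorem.
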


Similarly, we derive a corollary that holds specifically for the classification noise setting.

\begin{corollary} Suppose Distributed Agnostic Boosting has access to a $\beta$-weak agnostic learner. Let $\beta$ be a fixed constant. The communication complexity of Distributed Agnostic Boosting in the restricted setting of classification noise, where $\eta_{\text{MAX}} \leq \epsilon$, consists of $O\left(\log\left(\frac{1}{\epsilon(1-2\eta_{\text{MAX}})}\right)\right)$ rounds, each communicating $O(d)$ samples and $\tilde{O}\left(kd\log^3\left(\frac{1}{\epsilon (1-2\eta_{\text{MAX}})}\right)\right)$ bits of communication.
\end{corollary}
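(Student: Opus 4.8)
The plan is to specialize the general communication-complexity theorem for Distributed Agnostic Boosting (the preceding \cite{ChenBC16} theorem) to the classification-noise regime, exactly mirroring how the sample-complexity corollary was obtained. Since classification noise is a special case of agnostic learning, I would run Distributed Agnostic Boosting directly on each player's noisy oracle $\text{EX}_{\eta_i}(\cdot)$, treating the noisy distribution as the target. The two ingredients I would combine are (i) the reparametrization of the accuracy parameter dictated by Lemma \ref{lemma-angluin-laird}, which converts a guarantee on the clean distribution into the corresponding guarantee on the noisy distribution, and (ii) the standing assumption that $\beta$ is a fixed constant, so that every factor of the form $(1/2-\beta)$ collapses to a constant and is absorbed into the $O(\cdot)$ and $\tilde O(\cdot)$ notation.

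First I would fix the translation between the agnostic accuracy parameter---call it $\epsilon_{\mathrm{agn}}$---and the target clean error $\epsilon$. By Lemma \ref{lemma-angluin-laird}, a clean error of $\epsilon$ corresponds to an excess error of $(1-2\eta_i)\epsilon$ on the noisy distribution, and since $\eta_i \le \eta_{\text{MAX}}$ the worst case is $(1-2\eta_{\text{MAX}})\epsilon$; I would therefore set $\epsilon_{\mathrm{agn}} = \Theta(\epsilon(1-2\eta_{\text{MAX}}))$. The restriction $\eta_{\text{MAX}} \le \epsilon$ is what guarantees that the scaled optimal-error term $\frac{2\,\err_D(H)}{1/2-\beta}$ in the preceding theorem stays within budget: on the noisy mixture the best achievable error is bounded by the noise rate, so $\err_D(H)\le \eta_{\text{MAX}} \le \epsilon$, and with $\beta$ constant this contributes only an $O(\epsilon)$ additive term absorbed into the final accuracy guarantee.

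With this substitution the round count $O\!\left(\frac{\log(1/\epsilon_{\mathrm{agn}})}{(1/2-\beta)^2}\right)$ becomes $O\!\left(\log\!\left(\frac{1}{\epsilon(1-2\eta_{\text{MAX}})}\right)\right)$, and the per-round sample cost $O\!\left(\frac{d}{\beta}\log(1/\beta)\right)$ collapses to $O(d)$ once $\beta$ is treated as a constant. For the per-round communication, substituting $\epsilon_{\mathrm{agn}}=\Theta(\epsilon(1-2\eta_{\text{MAX}}))$ and fixing $\beta$ turns the $\tilde O\!\left(kd\log^2\!\left(\frac{d}{(1/2-\beta)\epsilon}\right)\right)$ words per round into $\tilde O\!\left(kd\log^2\!\left(\frac{1}{\epsilon(1-2\eta_{\text{MAX}})}\right)\right)$ words, with the $\log d$ factor folded into $\tilde O(\cdot)$.

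The last---and most easily mishandled---step is the conversion from words to bits. Each transmitted word encodes a quantity (a reweighted sample index or an importance weight) whose magnitude is polynomial in $d$, $1/\epsilon$, and $1/(1-2\eta_{\text{MAX}})$, so a single word requires $\tilde O\!\left(\log\!\left(\frac{1}{\epsilon(1-2\eta_{\text{MAX}})}\right)\right)$ bits; multiplying the per-round word count by this bit-length yields the claimed $\tilde O\!\left(kd\log^3\!\left(\frac{1}{\epsilon(1-2\eta_{\text{MAX}})}\right)\right)$ bits per round, which accounts for the extra logarithmic factor relative to the word bound. The main obstacle I anticipate is making the reparametrization in the first two paragraphs rigorous---in particular, arguing that boosting a \emph{mixture} of per-player noisy oracles (each with its own $\eta_i$) to noisy accuracy $\Theta(\epsilon(1-2\eta_{\text{MAX}}))$, together with $\eta_{\text{MAX}}\le\epsilon$, genuinely certifies clean error $\epsilon$ on each player via Lemma \ref{lemma-angluin-laird}---since the round and communication bounds then follow by direct substitution into the general theorem.
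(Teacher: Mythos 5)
Your proposal is correct and follows essentially the same route as the paper: substitute the accuracy parameter via the standard agnostic-to-classification-noise reduction (which the paper attributes to a citation and you derive explicitly from Lemma \ref{lemma-angluin-laird}), absorb the constant $\beta$ factors, and convert words to bits at a cost of $O\left(\log\left(\frac{d}{\epsilon(1-2\eta_{\text{MAX}})}\right)\right)$ bits per word to turn the $\log^2$ into $\log^3$. Your version is somewhat more detailed than the paper's, which simply invokes the restriction ``as in Corollary 19'' and states the word-to-bit convention, but the substance is identical.
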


We derive the following sample and communication complexities of our algorithm, Personalized Learning with Classification Noise using Boosting.

\begin{theorem} The sample complexity of Personalized Learning with Classification Noise using Boosting is 
\[
\tilde{O}\left(\log(k)\frac{d}{\epsilon (1-2\eta_{\text{MAX}})^2}\right).
\]
\end{theorem}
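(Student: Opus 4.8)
The plan is to mirror the sample-complexity analysis of Personalized Learning with Classification Noise (Proposition~\ref{personalizedCN-bound}) and change only the accounting for the first step of each outer round. Recall that the algorithm here is obtained from Algorithm~\ref{personalizedCN} by replacing the ``draw $m_{\epsilon/4,\delta',\bar\eta_{N_j}}$ points and learn an ERM'' step with a single run of Distributed Agnostic Boosting, while leaving \texttt{CN-TEST} and the $\lceil\log(k)\rceil$-iteration outer loop intact. The total sample cost therefore again splits, round by round, into a first-step (boosting) contribution and a \texttt{CN-TEST} contribution.

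First I would bound the per-round contributions. For the boosting step in round $j$, the relevant noise rate is the mixture rate $\bar\eta_{N_j}\le\eta_{\text{MAX}}\le\epsilon$, so the restricted-regime corollary on the sample complexity of Distributed Agnostic Boosting applies and gives $\tilde{O}\!\left(\frac{d}{\epsilon(1-2\eta_{\text{MAX}})^2}\right)$ samples with $\beta$ a fixed constant---the same order, up to polylogarithmic factors, as the ERM step it replaces. For \texttt{CN-TEST} nothing changes from Algorithm~\ref{personalizedCN}: player $i$ draws $T_i=O\!\left(\frac{\ln(|N_j|/\delta')}{\epsilon(1-2\eta_i)}\right)$ samples, so summing over the at most $k$ surviving players and using $\eta_i\le\eta_{\text{MAX}}$ and $\delta'=\delta/(2\log k)$ costs $\tilde{O}\!\left(\frac{k}{\epsilon(1-2\eta_{\text{MAX}})}\right)$.

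Summing over the $\lceil\log(k)\rceil$ rounds then gives
\[
\tilde{O}\!\left(\log(k)\left(\frac{d}{\epsilon(1-2\eta_{\text{MAX}})^2}+\frac{k}{\epsilon(1-2\eta_{\text{MAX}})}\right)\right),
\]
and since $(1-2\eta_{\text{MAX}})^{-1}\le(1-2\eta_{\text{MAX}})^{-2}$, the assumption $k\ln(k)=O(d)$ makes the $k$ term dominated by the $d$ term, collapsing the bound to the claimed $\tilde{O}\!\left(\log(k)\frac{d}{\epsilon(1-2\eta_{\text{MAX}})^2}\right)$.

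The step I expect to carry the real weight is not this accounting but the justification that the $\lceil\log(k)\rceil$-round structure remains sound after swapping ERM for boosting---i.e., that the sample-complexity sum is taken over a genuinely correct algorithm. This requires re-establishing the analogue of Lemma~\ref{ERM-markov-lemma}: the boosted $h_j$ must have clean error at most $\epsilon/2$ on at least half the distributions in $N_j$, so that at least half the survivors pass \texttt{CN-TEST} (Lemma~\ref{lemma-rcn-test-2}) and are removed each round. Here I would combine the agnostic-boosting error guarantee with the conversion $\err_D(\text{EX}_{\eta_i}(\cdot),h)=\eta_i+\err_D(h)(1-2\eta_i)$ of Lemma~\ref{lemma-angluin-laird}: boosting pushes the noisy-mixture error to within an $O(\epsilon)$ additive term of the optimal noisy error $\bar\eta_{N_j}$, which translates into $O(\epsilon)$ clean error on the mixture, and taking the boosting target to be a small enough constant multiple of $\epsilon$ (absorbed in $\tilde O$) restores the mixture accuracy Lemma~\ref{ERM-markov-lemma} needs. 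The delicate point is that this is only legitimate because of the restriction $\eta_{\text{MAX}}\le\epsilon$: the multiplicative overhead inherent to agnostic boosting cannot drive the error below the noise floor, so without capping the target error at the noise rate the halving argument---and hence the $\log(k)$ factor---would fail.
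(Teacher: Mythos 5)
Your proposal is correct and follows essentially the same route as the paper: decompose each of the $O(\log k)$ rounds into the Distributed Agnostic Boosting cost $\tilde{O}\bigl(\frac{d}{\epsilon(1-2\eta_{\text{MAX}})^2}\bigr)$ (via the restricted-regime corollary) plus the \texttt{CN-TEST} cost $\tilde{O}\bigl(\frac{k}{\epsilon(1-2\eta_{\text{MAX}})}\bigr)$, then absorb the $k$ term using $k\ln(k)=O(d)$. Your additional remarks on re-establishing the halving argument after swapping ERM for boosting go beyond what the paper writes in this proof (which simply asserts the boosted hypothesis is $\epsilon$-good on the clean mixture), but they are consistent with it and do not change the approach.
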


\begin{theorem} The communication complexity of Personalized Learning with Classification Noise using Boosting is
\[
\tilde{O}\left(\log(k)d\log\left(\frac{1}{\epsilon(1-2\eta_{\text{MAX}})}\right)\right)
\]
plus $\tilde{O}\left(kd\log^4\left(\frac{1}{\epsilon (1-2\eta_{\text{MAX}})}\right)\right)$ bits of communication.
\end{theorem}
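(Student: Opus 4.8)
The plan is to mirror exactly the structure of the proof of the communication complexity of Personalized Learning using Boosting, since the present algorithm is obtained from the Personalized Learning with Classification Noise skeleton (Algorithm~\ref{personalizedCN}) by swapping its first step for Distributed Agnostic Boosting and leaving everything else intact. Accordingly, I would analyze the communication incurred in a single outer round $j$ of the $\lceil \log(k) \rceil$-round skeleton and then multiply by the number of outer rounds, absorbing polylogarithmic factors into $\tilde{O}(\cdot)$ at the end.

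First I would bound the cost of the boosting step within one outer round. Invoking the classification-noise corollary to the communication complexity of Distributed Agnostic Boosting, the boosting routine runs for $O\!\left(\log\!\left(\frac{1}{\epsilon(1-2\eta_{\text{MAX}})}\right)\right)$ inner rounds, each communicating $O(d)$ samples and $\tilde{O}\!\left(kd\log^3\!\left(\frac{1}{\epsilon(1-2\eta_{\text{MAX}})}\right)\right)$ bits. Multiplying the per-inner-round costs by the number of inner rounds gives, for a single outer round, a total of $\tilde{O}\!\left(d\log\!\left(\frac{1}{\epsilon(1-2\eta_{\text{MAX}})}\right)\right)$ samples communicated plus $\tilde{O}\!\left(kd\log^4\!\left(\frac{1}{\epsilon(1-2\eta_{\text{MAX}})}\right)\right)$ bits of communication, where one extra logarithmic factor arises from the inner-round count.

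Next I would argue that the remaining work in each outer round is essentially free of sample communication. By the broadcast (shared blackboard) assumption, every player observes all weak classifiers learned during boosting, so each player can locally form the final boosted hypothesis $h_j$ by summing them, at no communication cost. The \texttt{CN-TEST} subroutine is then executed locally: each player draws $T_i$ samples from its own distribution and checks the adjusted empirical-error criterion, again communicating no samples. The only residual cost is the single pass/fail bit each player reports, contributing $O(k)$ bits per outer round. Finally, summing over the $\lceil \log(k) \rceil$ outer rounds yields $\tilde{O}\!\left(\log(k)\,d\log\!\left(\frac{1}{\epsilon(1-2\eta_{\text{MAX}})}\right)\right)$ samples communicated and $\tilde{O}\!\left(\log(k)\,kd\log^4\!\left(\frac{1}{\epsilon(1-2\eta_{\text{MAX}})}\right)\right) + O(k\log(k))$ bits, where the $\log(k)$ factor on the bit term and the additive pass/fail term are absorbed into $\tilde{O}(\cdot)$, giving the claimed $\tilde{O}\!\left(kd\log^4\!\left(\frac{1}{\epsilon(1-2\eta_{\text{MAX}})}\right)\right)$ bit bound.

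The computation itself is routine bookkeeping, so the main thing to be careful about is not a single hard step but rather the consistent tracking of how the boosting accuracy parameter rewrites in the restricted noise regime $\eta_{\text{MAX}} \le \epsilon$: one must verify that the $\log(1/\epsilon)$ appearing in the generic Distributed Agnostic Boosting bound becomes $\log\!\left(\frac{1}{\epsilon(1-2\eta_{\text{MAX}})}\right)$ via the corollaries, and that the broadcast-model argument licensing the ``no communication'' claims for classifier formation and for \texttt{CN-TEST} transfers cleanly from the noiseless analysis to the agnostic-boosting inner loop, where each weak-learning step is still broadcast to the center so that every player reconstructs the full ensemble locally.
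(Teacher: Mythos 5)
Your proposal is correct and follows essentially the same route as the paper's own proof: invoke Corollary 21 for the per-round sample and bit costs of Distributed Agnostic Boosting, multiply by the inner-round count to get the $\log^4$ bit term, observe that \texttt{CN-TEST} is local except for one pass/fail bit per player, and multiply by the $O(\log(k))$ outer rounds. Your explicit remark that the $\log(k)$ factor on the bit term is absorbed into $\tilde{O}(\cdot)$ is a slightly more careful accounting than the paper's, but the argument is the same.
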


\clearpage
\section{Acknowledgements}
This work was supported in part by the National Science Foundation under grants CCF-1815011, CCF-1934915, and CCF-1848966. This work was done while Shelby Heinecke was a student at UIC. 
\bibliography{manuscript}

\onecolumn
\appendix

\section{Appendix A: Communication-Aware Personalized Learning}

We first confirm that the new preprocessing step of sampling from a multinomial distribution to determine reservoir sizes adds a negligible number of bits communicated to Distributed Boosting. From Theorem \ref{distributed-boosting-result}, the bits communicated in Distributed Boosting is $\tilde{O}\left(k\log(d)\log\left(\frac{1}{\epsilon}\right)\right)$. Now, we add an additional one-time preprocessing step, costing $\tilde{O}(k\log(\frac{d}{\epsilon}))$ bits. Summing still gives $\tilde{O}\left(k\log(d)\log\left(\frac{1}{\epsilon}\right)\right)$, so the preprocessing step is indeed negligible.

In the table below, we replicate Table \ref{summary} from the paper and include the bits communicated for completeness.
We note that while our approach saves on the number of samples communicated from the Baseline and Personalized Learning, the number of bits communicated increases. However, the number of bits communicated is only slightly more than that of Personalized Learning, and we retain the same savings in sample complexity. We view this as an acceptable tradeoff because we are chiefly concerned with the samples communicated, which is the dominant factor in high dimensions.  We leave improving the bits communicated for future work.

\begin{table}[h!]
\caption{Sample and Communication Costs of Personalized Learning Variants (Including Bits Communicated)}
\label{summary-extended}
\def\arraystretch{1.5}
\resizebox{\columnwidth}{!}{
\begin{tabular}{l | c | c | c |}
\cline{2-4}
\multicolumn{1}{c|}{\textbf{}} & \multicolumn{1}{c|}{\textbf{Sample Complexity}} & \multicolumn{1}{c|}{\textbf{Samples Communicated}} & \textbf{Bits Communicated} \\ \cline{2-4} 
Baseline & $\tilde{O}(k\frac{d}{\epsilon})$ & $\tilde{O}(1)$ & $\tilde{O}(1)$ \\ \cline{2-4} 
Person. Learn. & $\tilde{O}(\log(k)\frac{d}{\epsilon})$ & $\tilde{O}(\log(k)\frac{d}{\epsilon})$ & $\tilde{O}(k \log(\frac{d}{\epsilon}))$ \\ \cline{2-4} 
Person. Learn. using Boosting & $\tilde{O}(\log(k)\frac{d}{\epsilon})$ & $\tilde{O}(\log(k)d\log(\frac{1}{\epsilon}))$ & $\tilde{O}(k\log(d)\log(\frac{1}{\epsilon}))$  \\ 
\cline{2-4} 
\end{tabular}
}
\end{table}

\subsubsection*{Proof of Theorem 9}
\begin{proof} By Proposition 8, the sample complexity of Distributed Boosting is $\tilde{O}(\frac{d}{\epsilon})$. This step is implemented at most $O(\log(k))$ times, totaling
$\tilde{O}(\log(k)\frac{d}{\epsilon})$ samples. The remaining steps of Personalized Learning using Boosting are identical to Personalized Learning. From \cite{BlumHPQ17}, the \texttt{TEST} step uses $O(\log(k)\frac{k}{\epsilon}\ln(\frac{k\log(k)}{\epsilon})) = \tilde{O}(\log(k)\frac{k}{\epsilon})$ samples. Since $k\ln(k) = O(d)$, we have $\tilde{O}(\log(k)\frac{d}{\epsilon})$.
\end{proof}

\section*{Appendix B: Communication-Aware Personalized Learning with Classification Noise}
\subsubsection*{Proof of Lemma 11}
\begin{proof}
First, note that the mixture $D_{N_j}$ has expected error rate $\bar{\eta}_{N_j}$. By Theorem 2, the ERM $h_j$ trained on $m_{\epsilon/4, \delta', \bar{\eta_{N_j}}}$ samples drawn from $D_{N_j}$ has $err_{D_{N_j}}(h_j) \leq \frac{\epsilon}{4}$. As shown in \cite{BlumHPQ17}, Markov's inequality yields the result,
\[
\Pr\left[err_{D_{N_j}}(h_j) \leq 2 \left( \frac{\epsilon}{4} \right)\right] \geq \frac{1}{2}.
\]
\end{proof}

\subsubsection*{Proof of Lemma 12}
\begin{proof} This proof is due to \cite{AngluinL87} but we include it here for completeness. There are two ways in which $h$ can disagree with $\text{EX}_{\eta_i}(\cdot)$ on a point $x \in X$:
(1) $\text{EX}_{\eta_i}(\cdot)$ labels $x$ correctly with probability $1-\eta_i$ and $h$ disagrees with $h^*$, or  
(2) $\text{EX}_{\eta_i}(\cdot)$ labels $x$ incorrectly with probability $\eta_i$ and $h$ agrees with $h^*$. 
The probability that either of these two events occurs is $(1-\eta_i)err_{D}(h)+\eta_i(1-err_{D}(h))= \eta_i + err_{D}(h)(1-2\eta_i)$.
\end{proof}

\subsubsection*{Proof of Lemma 13}
\begin{proof}
As in \cite{BlumHPQ17, ChenZZ18, NguyenZ18}, we use multiplicative Chernoff bounds to prove the performance of \texttt{CN-TEST}. Our multiplicative Chernoff bounds are adjusted to handle each player's noise rate $\eta_i$. By Lemma 12, we scale the generalization error on the noisy distribution to the generalization error on clean distribution. 

Assume that $err_{D_i}(\text{EX}_{\eta_i}(\cdot), h_j) \geq \eta_i + (1-2\eta_i)\epsilon$ and that $T_j \geq \frac{32}{\epsilon(1-2\eta_i)}\ln\left(\frac{|N|}{\delta'}\right)$. Let $P = err_{D_i}(\text{EX}_{\eta_i}(\cdot), h_j) - \eta_i$. We use the Chernoff bound on the random variable $err_{T_j}(\text{EX}_{\eta_i}(\cdot), h_j)$, the empirical error of $h_j$ of $T_j$ samples. If $h_j$ passes \texttt{CN-TEST} for player $i$, then we have the following inequality:
\[
\Pr\left[err_{T_j}(\text{EX}_{\eta_i}(\cdot), h_j) \leq \eta_i + \frac{3\epsilon}{4}(1-2\eta_i)\right] = \Pr \left[err_{T_j}(\text{EX}_{\eta_i}(\cdot), h_j) - \eta_i \leq \frac{3\epsilon}{4}(1-2\eta_i)\right] 
\]
Computing the expected value of $err_{T_j}(\text{EX}_{\eta_i}(\cdot), h_j)$, we have
\begin{align}
\mathbb{E}_{T_j \sim D_i^{T_j}}[err_{T_j}(\text{EX}_{\eta_i}(\cdot), h_j) - \eta_i] &= \mathbb{E}_{T_j \sim D_i^{T_j}}[err_{T_j}(\text{EX}_{\eta_i}(\cdot), h_j)] - \eta_i \\
& = err_{D_i}(\text{EX}_{\eta_i}(\cdot), h_j) - \eta_i \\
&\geq (1-2\eta_i)\epsilon.
\end{align}
Applying the Chernoff bound gives: 
\begin{align}
\Pr\left[err_{T_j}(\text{EX}_{\eta_i}(\cdot), h_j) \leq \eta_i + \frac{3\epsilon}{4}(1-2\eta_i)\right] &= \Pr \left[err_{T_j}(\text{EX}_{\eta_i}(\cdot), h_j) - \eta_i \leq \frac{3\epsilon}{4}(1-2\eta_i)\right] \\
& \leq \Pr \left[err_{T_j}(\text{EX}_{\eta_i}(\cdot), h_j) - \eta_i \leq (1-\frac{1}{4})P\right] \\
&\leq \exp \left(-\frac{1}{2}\left(\frac{1}{4}\right)^2(1-2\eta_i)\epsilon T_j \right) \\
& \leq \frac{\delta'}{|N|} 
\end{align}
By the union bound over $|N|$ players, if $err_{D_i}(\text{EX}_{\eta_i}(\cdot), h_j) \geq \eta_i + (1-2\eta_i)\epsilon$ then $h_j$ passes \texttt{CN-TEST} with probability at most $\delta'$. Hence, \texttt{CN-TEST} is correct with probability $1-\delta'$. By Lemma 12, this implies $err_{D_i}(h_j) \leq \epsilon$.
\end{proof}

\subsubsection*{Proof of Lemma 14} 

We need to following lemma for the proof of Lemma 14.
\begin{lemma} Let $s = \frac{\epsilon(1-2\eta_i)}{4(err_{D_i}(\text{EX}_{n_i}(\cdot), h_j) -\eta_i)}$. Suppose $err_{D_i}(\text{EX}_{\eta_i}(\cdot), h_j) - \eta_i \leq \frac{\epsilon}{2}(1-2\eta_i)$. Then,
\[
(1+s)(err_{D_i}(\text{EX}_{\eta_i}(\cdot), h_j) - \eta_i) \leq \frac{3\epsilon}{4}(1-2\eta_i).
\]
\end{lemma}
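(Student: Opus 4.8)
The plan is to reduce the claim to a short algebraic manipulation that turns entirely on the specific form of $s$. First I would introduce the shorthand $A = err_{D_i}(\text{EX}_{\eta_i}(\cdot), h_j) - \eta_i$. By Lemma 12 applied to $D_i$ we have $err_{D_i}(\text{EX}_{\eta_i}(\cdot), h_j) = \eta_i + err_{D_i}(h_j)(1-2\eta_i) \geq \eta_i$, so $A \geq 0$ and $s = \frac{\epsilon(1-2\eta_i)}{4A}$ is well-defined; moreover the hypothesis is exactly the statement $A \leq \frac{\epsilon}{2}(1-2\eta_i)$.

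The key observation is that the factor $A$ appearing in the denominator of $s$ was chosen precisely so that it cancels upon multiplication. Writing out $(1+s)A = A + sA$ and substituting the definition of $s$ gives $sA = \frac{\epsilon(1-2\eta_i)}{4A}\cdot A = \frac{\epsilon(1-2\eta_i)}{4}$, a quantity independent of $A$. Hence $(1+s)A = A + \frac{\epsilon(1-2\eta_i)}{4}$.

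Finally I would apply the hypothesis bound on $A$ to conclude $(1+s)A \leq \frac{\epsilon}{2}(1-2\eta_i) + \frac{\epsilon}{4}(1-2\eta_i) = \frac{3\epsilon}{4}(1-2\eta_i)$, which is exactly the desired inequality.

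There is no serious obstacle here; the entire content lies in recognizing the cancellation that defines $s$. The reason the lemma is shaped this way only becomes clear in its intended use in the proof of Lemma 14: $s$ will play the role of the multiplicative deviation parameter in a one-sided Chernoff bound on $err_{T_j}(\text{EX}_{\eta_i}(\cdot), h_j)$, and this inequality certifies that the \texttt{CN-TEST} acceptance threshold $\eta_i + \frac{3\epsilon}{4}(1-2\eta_i)$ lies at least a factor $(1+s)$ above the expected excess empirical error $A$, so that the probability of a good hypothesis falsely failing the test is appropriately small. I would therefore keep the proof itself brief and defer the Chernoff application to Lemma 14.
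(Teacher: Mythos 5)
Your proof is correct and rests on the same elementary algebra as the paper's: the cancellation $sA = \frac{\epsilon(1-2\eta_i)}{4}$ for $A = err_{D_i}(\text{EX}_{\eta_i}(\cdot), h_j) - \eta_i$, combined with the hypothesis $A \leq \frac{\epsilon}{2}(1-2\eta_i)$. If anything, your forward derivation is cleaner than the paper's presentation, which starts from ``suppose the claim is true'' and works backwards through a chain of (reversible) inequalities to the hypothesis.
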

\begin{proof} The proof follows as in \cite{NguyenZ18}, but casted to our classification noise setting. Suppose the claim is true, $\frac{3 \epsilon}{4}(1-2\eta_i) \geq (1+s)(err_{D_i}(\text{EX}_{\eta_i}(\cdot), h_j)$. Then,
\begin{align}
\frac{3 \epsilon (1-2\eta_i)}{4 (err_{D_i}(\text{EX}_{\eta_i}(\cdot), h_j) - \eta_i))} &\geq 1+s \\
\frac{3 \epsilon (1-2\eta_i)}{4 (err_{D_i}(\text{EX}_{\eta_i}(\cdot), h_j) - \eta_i))} &\geq 1+ \frac{(1-2\eta_i)\epsilon}{4(err_{D_i}(h_j, \text{EX}_{n_i}) -\eta_i)} \\
\frac{\epsilon (1-2\eta_i)}{2 (err_{D_i}(\text{EX}_{\eta_i}(\cdot), h_j) - \eta_i)} &\geq 1
\end{align}
The last inequality is true since by assumption $err_{D_i}(\text{EX}_{\eta_i}(\cdot), h_j) - \eta_i \leq(1-2\eta_i)\frac{\epsilon}{2} \implies 2(err_{D_i}(\text{EX}_{\eta_i}(\cdot), h_j) - \eta_i) \leq \epsilon(1-2\eta_i)$.
\end{proof}

\begin{proof}
As in \cite{NguyenZ18}, we invoke our classification noise analogue, Lemma 24, and use the multiplicative Chernoff bounds. Let $s$ be defined as in Lemma 24. We consider two cases, (1) when $err_{D_i}(\text{EX}_{\eta_i}(\cdot), h_j) - \eta_i \geq (1-2\eta_i)\frac{\epsilon}{4}$ and (2) when $err_{D_i}(\text{EX}_{\eta_i}(\cdot), h_j) - \eta_i \leq (1-2\eta_i)\frac{\epsilon}{4}$. First, suppose $err_{D_i}(\text{EX}_{\eta_i}(\cdot), h_j) - \eta_i \geq (1-2\eta_i)\frac{\epsilon}{4}$. Let $P = err_{D_i}(\text{EX}_{\eta_i}(\cdot), h_j)-\eta_i$. Then, $s <1$ and, 
\[
\Pr\left[err_{T_j}(\text{EX}_{\eta_i}(\cdot), h_j) - \eta_i \geq (1-2\eta_i)\frac{3\epsilon}{4}\right] \leq \Pr\left[err_{T_j}(\text{EX}_{\eta_i}(\cdot), h_j) - \eta_i \geq (1-2\eta_i)(1+s)\right]. 
\]
By multiplicative Chernoff bounds and Lemma 24,
\begin{align}
\Pr \left[err_{T_j}(\text{EX}_{\eta_i}(\cdot), h_j) - \eta_i \geq (1-2\eta_i)\frac{3\epsilon}{4}\right] &= \Pr\left[err_{T_j}(\text{EX}_{\eta_i}(\cdot), h_j) - \eta_i \geq 3(1-2\eta_i)\frac{\epsilon}{4}\right] \\
&\leq \Pr\left[err_{T_j}(\text{EX}_{\eta_i}(\cdot), h_j) - \eta_i \geq (1+s)(1-2\eta_i)\frac{\epsilon}{4}\right] \\
&\leq \exp\left(-\frac{1}{3}\left(\frac{(1-2\eta_i)\epsilon}{4(err_{D_i}(\text{EX}_{n_i}(\cdot), h_j) -\eta_i)}\right)^2 P T_j\right) \\
&\leq \exp\left(-\frac{1}{12}(1-2\eta_i) \epsilon T_j\right) \\
&\leq \frac{\delta'}{|N|}
\end{align}
when $T_j = O\left(\frac{\ln(\frac{|N|}{\delta'})}{\epsilon (1-2\eta_i)}\right)$. The second case follows by a symmetric argument. In both cases, by the union bound over all $|N|$ players, $h_j$ fails \texttt{CN-TEST} with probability less than $\delta'$. Since $err_{D_i}(h_j, \text{EX}_{\eta_i}) \leq \eta_i + (1-2\eta_i)\frac{\epsilon}{2}$, Lemma 12 implies $err_{D_i}(h_j) \leq \frac{\epsilon}{2}$.
\end{proof}

\subsubsection*{Proof of Proposition 15}
\begin{proof} Given access to noisy data, our algorithm must learn a classifier for each player with generalization error less than $\epsilon$ on each player's underlying clean distribution, with probability $1-\delta$. Our proof follows as in \cite{BlumHPQ17} but with our noise-adapted lemmas. The first and second step in our algorithm yield a classifier that performs with error $\epsilon/4$ on the mixture of distributions by Lemma 11. By Lemmas 13 and 14, $h_j$ exhibits the same performance in \texttt{CN-TEST} as does the consistent hypothesis in \texttt{TEST} in Personalized Learning. The rest of the proof of correctness follows directly from \cite{BlumHPQ17}, where they prove that in each round, at least half of the players are removed with probability $1-\frac{\delta}{\log(k)}$. Therefore, after at most $\log(k)$ rounds, each player is assigned a hypothesis with generalization error less than $\epsilon$, with respect to their clean distribution, with probability $1-\delta$.
\end{proof}

\subsubsection*{Proof of Proposition 16}
\begin{proof}
Recall that $\delta' = O(\frac{\delta}{\log(k)})$. Our algorithm implements \texttt{CN-TEST} for $\log(k)$ rounds on at most $|N| = k$ players, using
\[
O\left(\log(k) \frac{k\ln(\frac{k \log(k)}{\delta})}{\epsilon (1-2\eta_{\text{MAX}})}\right)
\]
samples. Let $K = \{1,...,k\}$. The algorithm learns at most $\log(k)$ ERMs via Theorem 2, using 
\[
O(\log(k) m_{\epsilon/4, \delta', \bar{\eta}_K}) = O\left(\log(k) \frac{d \ln(\frac{\log(k)}{\delta})}{\epsilon (1-2\bar{\eta}_K)^2}\right)
\] 
samples. Note that $\bar{\eta}_K \leq \eta_{\text{MAX}}$. Summing gives the sample complexity result. 
\end{proof}

As in Table \ref{summary-extended} above, here we replicate Table \ref{summary-CN} from the paper and include the bits communicated.  We observe the same tradeoff here as in the noiseless case.

\begin{table}[h!]
\caption{Sample and Communication Costs of Personalized Learning Variants with Classification Noise (Including Bits Communicated)}
\label{summary-CN-extended}
\def\arraystretch{1.5}
\resizebox{\columnwidth}{!}{
\begin{tabular}{l | c | c | c |}
\cline{2-4}
\multicolumn{1}{c|}{\textbf{}} & \multicolumn{1}{c|}{\textbf{Sample Complexity}} & \multicolumn{1}{c|}{\textbf{Samples Communicated}} & \textbf{Bits Communicated} \\ \cline{2-4} 
Baseline & $\tilde{O}\left(k\frac{d}{\epsilon (1-2\eta_{\text{MAX}})^2}\right)$ & $\tilde{O}(1)$ & $\tilde{O}(1)$ \\ \cline{2-4} 
Person. Learn. with CN & $\tilde{O}\left(\log(k)\frac{d}{\epsilon (1-2\eta_{\text{MAX}})^2 }\right)$ & $\tilde{O}\left(\log(k)\frac{d}{\epsilon(1-2\eta_{\text{MAX}})^2 }\right)$ & $\tilde{O}\left(k \log\left(\frac{d}{\epsilon(1-2\eta_{\text{MAX}})^2}\right)\right)$ \\ \cline{2-4} 
Person. Learn. with CN using Boosting & $\tilde{O}\left(\log(k)\frac{d}{\epsilon (1-2\eta_{\text{MAX}})^2}\right)$ & $\tilde{O}\left(\log(k)d\log\left(\frac{1}{\epsilon (1-2\eta_{\text{MAX}})}\right)\right)$ & $\tilde{O}\left(kd\log^4\left(\frac{1}{\epsilon (1-2\eta_{\text{MAX}})}\right)\right)$\\ 
\cline{2-4} 
\end{tabular}
}
\end{table}

\subsubsection*{Proof of Corollary 19}
\begin{proof} Using the standard agnostic learning to classification noise restriction \cite{JabbariHZ12}, the sample complexity for Distributed Agnostic Boosting in this case is $\tilde{O}\left(\frac{d}{\epsilon^2(1-2\eta_{\text{MAX}})^2}\right)$. In the special case of classification noise, we can reduce $\epsilon^2$ dependence to $\epsilon$ dependence following the careful argument in \cite{Laird88}.
\end{proof}

\subsubsection*{Proof of Corollary 21}
Follows immediately from the standard agnostic learning restriction for classification noise \cite{JabbariHZ12} as in Corollary 19. Note that the communication complexity result in Theorem 20 is partially in terms of \emph{words} communicated. We convert words to bits so that we can clearly compare the performances of the algorithms we discussed. To convert from words to bits, and to ensure an approximation within poly($\frac{d}{\epsilon(1-2\eta_{\text{MAX}})}$), we assume that each element in any vector communicated consists of at most $O\left(\log\left(\frac{d}{\epsilon(1-2\eta_{\text{MAX}})}\right)\right)$ bits. 

\subsubsection*{Proof of Theorem 22}
Recall $\delta' = \delta/2\log(k)$. In the first step, Distributed Agnostic Boosting identifies a hypothesis costing $\tilde{O}\left(\frac{d}{\epsilon(1-2\eta_{\text{MAX}})^2}\right)$ samples by Corollary 19. In particular, the identified hypothesis is an $\epsilon$-good hypothesis on the underlying clean mixture of distributions in our setting. After the hypothesis is identified, each remaining player runs implements $\texttt{CN-TEST}$, costing $O\left(\frac{k\ln(\frac{k \log(k)}{\delta})}{\epsilon (1-2\eta_{\text{MAX}})}\right)$ samples. This is repeated over $O(\log(k))$ rounds. Together this gives a sample complexity of
\[
O \left( \log(k) \frac{d}{\epsilon(1-2\eta_{\text{MAX}})^2} + \frac{k \ln (\frac{k \log(k)}{\delta})}{\epsilon (1-2\eta_{\text{MAX}})}\right).
\]
Fixing $\delta$ to a constant and letting $k \ln(k) = O(d)$, we have sample complexity
\[
\tilde{O}\left(\frac{d}{\epsilon(1-2\eta_{\text{MAX}})^2}\right).
\]

\subsubsection*{Proof of Theorem 23}
In the first step, the communication cost of Distributed Agnostic Boosting is $O\left(d\log\left(\frac{1}{\epsilon(1-2\eta_{\text{MAX}})}\right)\right)$ samples by Corollary 21. Also, $\tilde{O}\left(kd\log^3\left(\frac{1}{\epsilon (1-2\eta_{\text{MAX}})}\right)\right)$ additional bits per round, for a total of $\tilde{O}\left(kd\log^4\left(\frac{1}{\epsilon (1-2\eta_{\text{MAX}})}\right)\right)$ bits communicated. The $\texttt{CN-TEST}$ step can be done locally for each player so no communication is necessary, except for transmitting a single bit indicating if the classifier passed $\texttt{CN-TEST}$ or not, costing $O(k)$ bits. This is repeated for at most $O(\log(k))$ rounds, giving the communication cost of
\[
O\left(d\log(k)\log\left(\frac{1}{\epsilon(1-2\eta_{\text{MAX}})}\right)\right)
\]
samples. The total number of bits communicated is 
\[
\tilde{O}\left(kd\log^4\left(\frac{1}{\epsilon (1-2\eta_{\text{MAX}})}\right)\right).
\]

\section*{Appendix C: Communication-Aware Centralized Learning}
We derive communication-aware algorithms for the centralized learning setting. The approaches are the same as those used in the personalized learning setting. We first recall the optimal centralized learning algorithm.

\begin{algorithm}
\caption{Centralized Learning \cite{ChenZZ18, NguyenZ18}}
\label{centralized-learning-algo}
\SetKwFunction{proc}{FAST-TEST}
\SetAlgoLined

\KwIn{$H$, $k$ distributions $D_i \sim X$, $\delta' = \delta/4t$, $\epsilon' = \epsilon/6$, $t = 150\lceil \log(\frac{k}{\delta}) \rceil$}
\KwOut{$h = $ \text{MAJORITY}($\{h_i\}_{i = 1}^t$)}
Initialize $w_{i,0} = 1$ for all $i \in [1,k]$\;
\For{$j= 1, ..., t$} {
Draw sample $S$ of $m_{\epsilon'/16, \delta'}$ samples from mixture $D_j = \frac{1}{\sum_{i=1}^k w_{i,j}}\sum_{i=1}^k w_{i,j}D_i$\;
Select consistent hypothesis $h_j \in H$ on $S$\;
$G_j \leftarrow$ \proc{$h_j, N_j, \epsilon, \delta'$}\;
\For{$i = 1, ..., k$}{
$w_{i, j+1} = \begin{cases} 2w_{i,j} &\mbox{if } i \notin G_j \\
w_{i,j} &\mbox{if } i \in G_j
\end{cases}$\;
}
}
\KwRet{$h = $ \mbox{MAJORITY}($\{h_i\}_{i = 1}^t$)} \\
{}
\setcounter{AlgoLine}{0}
\SetKwProg{myproc}{Procedure}{}{}
\myproc{\proc{$h, k, \epsilon, \delta$}}{
\For{$i = 1, ..., k$}{
Draw sample of size $T_i = O(\frac{1}{\epsilon})$ from $D_i$\;
}
\KwRet{$\{i \mid \text{err}_{T_i}(h) \leq \frac{3\epsilon}{4}\}$}
}
\end{algorithm}

\begin{proposition}[\cite{NguyenZ18}]\label{centralized-samples} The sample complexity of Centralized Learning is 
\[
O\left(\frac{1}{\epsilon}\ln\left(\frac{k}{\delta}\right)\left(d\ln\left(\frac{1}{\epsilon}\right) + k + \ln\left(\frac{1}{\delta}\right)\right)\right).
\]
For $\delta$ constant and $k\ln(k) = O(d)$, this simplifies to
\[
\tilde{O}\left(\log(k)\frac{d}{\epsilon}\right).
\]
\end{proposition}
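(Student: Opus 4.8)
The plan is to count the samples consumed by Centralized Learning (Algorithm~\ref{centralized-learning-algo}) by separating them into the two sources that draw data in each of the $t$ rounds, and then to multiply by $t$. The correctness fact underlying the algorithm---that $t = 150\lceil\log(k/\delta)\rceil$ rounds suffice for the final majority vote to achieve error $\epsilon$ on every player's distribution with probability $1-\delta$---is supplied by the cited analysis, so for the sample-complexity bound I may take the round count $t = O(\log(k/\delta))$ as given and simply tally the draws.

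First I would account for the learning step. In each round the center draws $m_{\epsilon'/16,\delta'}$ samples from the current weighted mixture and fits a consistent hypothesis. Applying the standard realizable PAC bound $m_{\epsilon,\delta} = O\!\left(\frac{1}{\epsilon}\left(d\ln\frac{1}{\epsilon} + \ln\frac{1}{\delta}\right)\right)$ with $\epsilon' = \epsilon/6$ (so the target accuracy $\epsilon'/16 = \Theta(\epsilon)$) gives a per-round cost of $O\!\left(\frac{1}{\epsilon}\left(d\ln\frac{1}{\epsilon} + \ln\frac{1}{\delta'}\right)\right)$. Summing over the $t$ rounds contributes the $d\ln(1/\epsilon)$ and $\ln(1/\delta)$ terms of the stated bound, each scaled by $\frac{1}{\epsilon}\ln(k/\delta)$. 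Next I would account for \texttt{FAST-TEST}: each call draws $T_i = O(1/\epsilon)$ samples from every one of the $k$ players, so a single \texttt{FAST-TEST} costs $\sum_{i=1}^{k}T_i = O(k/\epsilon)$ samples independent of the weights, and over $t = O(\log(k/\delta))$ rounds this contributes $O\!\left(\frac{k}{\epsilon}\ln\frac{k}{\delta}\right)$---exactly the $k$ term inside the bound. Adding the two contributions yields $O\!\left(\frac{1}{\epsilon}\ln\frac{k}{\delta}\left(d\ln\frac{1}{\epsilon}+k+\ln\frac{1}{\delta}\right)\right)$.

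For the claimed simplification, fixing $\delta$ to a constant makes $\ln(k/\delta)=O(\log k)$ and $\ln(1/\delta)=O(1)$, while $k\ln k = O(d)$ forces $k = O(d) \le O(d\ln(1/\epsilon))$, so the inner parenthesis collapses to $\tilde{O}(d)$ and the whole expression to $\tilde{O}\!\left(\log(k)\frac{d}{\epsilon}\right)$, as stated.

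The tally is otherwise routine, so the only delicate point I anticipate is bookkeeping the confidence parameter. Since $\delta' = \delta/(4t)$ with $t$ polylogarithmic in $k/\delta$, the learning step in fact produces $\ln(1/\delta') = \ln(4t/\delta) = \ln(1/\delta) + O(\log\log(k/\delta))$ rather than $\ln(1/\delta)$, and I would need to argue that this extra $\log\log$ factor is a lower-order term absorbed into the $\ln(1/\delta)$ summand and the $\tilde{O}$ notation; similarly, that the constant-factor rescaling hidden in $\epsilon'/16$ leaves the $\frac{1}{\epsilon}$ and $\ln(1/\epsilon)$ scalings intact. Verifying these absorptions is the main, though minor, obstacle.
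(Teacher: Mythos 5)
Your accounting is correct, and the paper itself offers no proof of this proposition --- it is simply recalled from the cited work --- so there is nothing to diverge from; your per-round tally (one draw of $m_{\epsilon'/16,\delta'}$ for the consistent hypothesis plus $O(k/\epsilon)$ for \texttt{FAST-TEST}, multiplied by $t = O(\log(k/\delta))$ rounds) is exactly the style of argument the paper uses for the analogous noisy version in Theorem~\ref{centralized-CN-samples}. Your flagged concern about $\ln(1/\delta') = \ln(1/\delta) + O(\log\log(k/\delta))$ is handled correctly: the extra additive $O(\log\log(k/\delta))$ inside the parenthesis is dominated by the $k$ term (and absorbed by $\tilde{O}$ in the simplified form), so the stated bound stands.
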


We compute the communication complexity of Centralized Learning. 

\begin{proposition} The communication cost of Centralized Learning is 
\[
\tilde{O}\left(\log(k)\frac{d}{\epsilon}\right)
\]
samples plus $\tilde{O}(k\log(\frac{d}{\epsilon}))$ bits.
\end{proposition}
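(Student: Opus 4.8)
The plan is to follow the round-by-round accounting used in the communication analysis of Personalized Learning, since Centralized Learning (Algorithm \ref{centralized-learning-algo}) has the same skeleton: in each of its $t$ rounds it draws a sample from a mixture, learns a hypothesis, runs a TEST-style subroutine, and updates state. The two structural differences are that the mixture is now \emph{weighted} by the multiplicative weights $w_{i,j}$ rather than uniform over a shrinking active set, and that the number of rounds is $t = 150\lceil \log(k/\delta)\rceil = \tilde{O}(\log k)$ for constant $\delta$. Neither affects the communication bound, as I argue below.

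First I would bound the per-round cost. At the start of round $j$ the center must draw $m_{\epsilon'/16,\delta'}$ samples from the weighted mixture $D_j$. Since $\delta' = \delta/4t$ contributes only a $\log(1/\delta') = \tilde{O}(\log\log k)$ factor, realizable PAC bounds give $m_{\epsilon'/16,\delta'} = \tilde{O}(d/\epsilon)$. The center realizes the draw distributively by sampling from a multinomial with cell probabilities proportional to the $w_{i,j}$ and communicating each player's requested count; because each count is at most $\tilde{O}(d/\epsilon)$, this costs $O(k\log(d/\epsilon))$ bits. The crucial point is that the weights require no communication to maintain: in the shared-blackboard model every player, and the center, observes the pass/fail bits broadcast after each \texttt{FAST-TEST}, so all parties track $w_{i,j}$ locally and agree on $D_j$.

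Next, the players transmit their requested samples, totaling $m_{\epsilon'/16,\delta'} = \tilde{O}(d/\epsilon)$ samples for the round; by the broadcast assumption every player sees these and selects the consistent hypothesis $h_j$ locally at no further cost. \texttt{FAST-TEST} is likewise local: each player draws its own $T_i = O(1/\epsilon)$ points from $D_i$ and evaluates $h_j$ on them without transmitting anything, then reports a single pass/fail bit, for $O(k)$ bits per round. Thus a round costs $\tilde{O}(d/\epsilon)$ samples plus $O(k\log(d/\epsilon)) + O(k) = O(k\log(d/\epsilon))$ bits.

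Finally I would sum over the $t = \tilde{O}(\log k)$ rounds, giving $\tilde{O}(\log(k)\,d/\epsilon)$ samples communicated and $\tilde{O}(\log(k)\,k\log(d/\epsilon)) = \tilde{O}(k\log(d/\epsilon))$ bits, matching the claim. The only genuinely new point relative to the Personalized Learning argument, and the step I would treat most carefully, is verifying that moving from a uniform to a weighted mixture is free: this is exactly the observation that the broadcast pass/fail bits already determine every weight update, so the weighted multinomial can be formed without any extra rounds of communication.
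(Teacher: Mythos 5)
Your proposal is correct and follows essentially the same round-by-round accounting as the paper's proof: multinomial counts cost $O(k\log(d/\epsilon))$ bits, the transmitted sample costs $\tilde{O}(d/\epsilon)$ per round, \texttt{FAST-TEST} is local up to one pass/fail bit per player, and everything is summed over $O(\log k)$ rounds. Your explicit observation that the multiplicative weights can be tracked locally from the broadcast pass/fail bits is a detail the paper leaves implicit, but it does not change the argument.
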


\begin{proof}
Consider round $j$. In the first step, the center computes the number of samples to request from each player by drawing $m_{\epsilon'/16, \delta'}$ from the weighted multinomial distribution, based on the weights in $D_j = \frac{1}{\sum_{i=1}^k w_{i,j}}\sum_{i=1}^k w_{i,j}D_i$. After sampling from the multinomial the center sends the number of points to request to each player, costing $\tilde{O}(k \ln(\frac{d}{\epsilon}))$ bits in communication. The players then communicate their samples to the center, costing $\tilde{O}(\frac{d}{\epsilon})$ samples in communication. The players learn the same consistent hypothesis and implement \texttt{FAST-TEST} locally. They each send a bit to the center indicating if they passed $\texttt{FAST-TEST}$ costing $O(k)$ bits in communication. Summing over $O(\log(k))$ rounds, we have that the samples communicated is 
\[
\tilde{O}\left(\log(k)\frac{d}{\epsilon}\right)
\]
and the additional bits communicated is $\tilde{O}(k\log(\frac{d}{\epsilon}))$.
\end{proof}

As in communication-aware personalized learning, to achieve communication efficiency, we replace the first step of Centralized Learning with Distributed Boosting \cite{BalcanBFM12} to get a communication-efficient centralized learning algorithm which we call Centralized Learning using Boosting. Notice that in Centralized Learning the first step is drawing samples from a weighted mixture instead of a uniform mixture as in Personalized Learning. This does not present an issue for Distributed Boosting since it extends to handle weighted mixture setting \cite{BalcanBFM12}. We derive the sample complexity and communication complexity below, showing that Centralized Learning using Boosting enjoys an improved $\log(\frac{1}{\epsilon})$ dependence in communication cost.

\begin{theorem} The sample complexity of Centralized Learning using Boosting is 
\[
\tilde{O}\left(\log(k)\frac{d}{\epsilon}\right)
\]
when $k\ln(k) = O(d)$.
\end{theorem}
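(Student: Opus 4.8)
The plan is to mirror the analysis used for Personalized Learning using Boosting (the proof of Theorem 9), adapting the parameters to the centralized algorithm. Centralized Learning using Boosting is obtained from Centralized Learning (Algorithm \ref{centralized-learning-algo}) by replacing only the consistent-hypothesis step with Distributed Boosting, while leaving the multiplicative-weights loop and the \texttt{FAST-TEST} subroutine intact. Hence the total sample cost decomposes additively into (i) the cost of all boosting invocations and (ii) the cost of all \texttt{FAST-TEST} invocations, summed over the $t = 150\lceil \log(k/\delta)\rceil = \tilde{O}(\log(k))$ rounds of the outer loop.

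First I would bound the boosting cost. By Proposition \ref{sample-complexity-distributed-boosting}, each call to Distributed Boosting draws $\tilde{O}(d/\epsilon)$ samples, now from the weighted mixture $D_j$; as noted in the surrounding text, Distributed Boosting extends to the weighted-mixture setting without changing its sample bound. Since boosting is invoked once per round, over $t = \tilde{O}(\log(k))$ rounds this contributes $\tilde{O}(\log(k)\,d/\epsilon)$ samples. Next I would bound the testing cost: in each round \texttt{FAST-TEST} draws $T_i = O(1/\epsilon)$ samples from each of the $k$ players, for $O(k/\epsilon)$ samples per round and $\tilde{O}(\log(k)\,k/\epsilon)$ over all rounds. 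Invoking the hypothesis $k\ln(k) = O(d)$ collapses this term to $\tilde{O}(\log(k)\,d/\epsilon)$, so that summing the two contributions yields the claimed $\tilde{O}(\log(k)\,d/\epsilon)$.

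I do not expect a genuine obstacle on the sample-complexity side, since the bound follows by additively combining Proposition \ref{sample-complexity-distributed-boosting} with the round count and the \texttt{FAST-TEST} cost, exactly as in the personalized case. The one point meriting care is correctness rather than counting: I would confirm that Distributed Boosting, run on the weighted mixture to the accuracy $\epsilon'/16$ demanded by Centralized Learning, returns a hypothesis of the same quality that the consistent-hypothesis step guaranteed, so that the downstream multiplicative-weights argument of \cite{NguyenZ18} goes through unchanged. Since we are in the realizable setting where boosting attains arbitrarily small error, this verification is routine, and the stated sample bound then follows directly.
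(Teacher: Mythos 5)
Your proposal is correct and follows essentially the same route as the paper's proof: invoke Proposition \ref{sample-complexity-distributed-boosting} to bound each boosting call by $\tilde{O}(d/\epsilon)$, add the per-round \texttt{FAST-TEST} cost, multiply by the $\tilde{O}(\log(k))$ rounds, and use $k\ln(k) = O(d)$ to absorb the testing term. In fact you are more explicit than the paper, which leaves the \texttt{FAST-TEST} accounting and the weighted-mixture correctness remark implicit in the surrounding text.
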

\begin{proof}
We replaced the first step of Centralized Learning with Distributed Boosting. By Theorem \ref{sample-complexity-distributed-boosting} , this has sample complexity $\tilde{O}(\frac{d}{\epsilon})$. The remaining steps are the same. Therefore, the sample complexity becomes $\tilde{O}(\log(k)\frac{d}{\epsilon})$ when $k \ln(k) = O(d)$.
\end{proof}

\begin{theorem} The communication complexity of Centralized Learning using Boosting is
\[
\tilde{O}\left(\log(k)d\log\left(\frac{1}{\epsilon}\right)\right)
\]
samples plus $\tilde{O}(k\log(d)\log(\frac{1}{\epsilon}))$ bits of communication.
\end{theorem}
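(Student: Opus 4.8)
The plan is to follow the same per-round accounting used for Personalized Learning using Boosting (Theorem 10), since Centralized Learning using Boosting differs from it only in three ways: (i) it runs $t = O(\log(k/\delta)) = \tilde{O}(\log k)$ rounds rather than $\lceil\log k\rceil$, (ii) it uses the multiplicative-weights update together with \texttt{FAST-TEST} in place of set removal and \texttt{TEST}, and (iii) it draws the boosting sample from a weighted rather than uniform mixture. First I would fix a single round $j$ and bound its communication, then multiply by the number of rounds, absorbing the extra logarithmic factors into $\tilde{O}(\cdot)$.

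For the per-round bound I would invoke Theorem \ref{distributed-boosting-result}: running Distributed Boosting to error $\epsilon$ on the (weighted) mixture costs $\tilde{O}(d\log(1/\epsilon))$ sample transmissions plus $\tilde{O}(k\log(d)\log(1/\epsilon))$ bits. Because we operate in the broadcast (shared-blackboard) model, every player observes all transmitted samples and can therefore run the weak learner in each boosting round locally and assemble the final weighted-majority hypothesis $h_j$ with no further communication. The \texttt{FAST-TEST} subroutine in Algorithm \ref{centralized-learning-algo} then draws only $O(1/\epsilon)$ points from each player's own distribution, which never leave the player, so it transmits nothing beyond the single pass/fail bit each player sends to the center, contributing $O(k)$ bits per round.

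The step that needs the most care is the weighted-mixture aspect: for the players to sample from $D_j = \frac{1}{\sum_i w_{i,j}}\sum_i w_{i,j}D_i$, they must agree on the weights $w_{i,j}$, which the center maintains. Here I would argue that no broadcast of the weights is required, because the only information entering the multiplicative-weights update is each player's membership in $G_j$, i.e. its \texttt{FAST-TEST} outcome, which every player already observes on the blackboard; hence each player can reproduce the entire weight vector and the presampling multinomial locally. One should also confirm, as the paper notes, that Distributed Boosting operates unchanged on a weighted mixture, so Theorem \ref{distributed-boosting-result} applies verbatim.

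Finally I would sum over the $t = \tilde{O}(\log k)$ rounds. The sample transmissions total $\tilde{O}(\log k)\cdot\tilde{O}(d\log(1/\epsilon)) = \tilde{O}(\log(k)\,d\log(1/\epsilon))$, and the bits total $\tilde{O}(\log k)\cdot(\tilde{O}(k\log(d)\log(1/\epsilon)) + O(k)) = \tilde{O}(k\log(d)\log(1/\epsilon))$, which matches the claimed bound. The main obstacle is thus not any calculation but the verification that the weighted mixture introduces no communication overhead; once the blackboard-reconstruction argument is in place, the remainder is the same bookkeeping as in the personalized case.
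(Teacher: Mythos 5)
Your proposal is correct and follows essentially the same route as the paper: invoke Theorem \ref{distributed-boosting-result} for the per-round cost of Distributed Boosting, observe that \texttt{FAST-TEST} is local except for one pass/fail bit per player, and sum over the $O(\log(k/\delta))$ rounds. Your additional observation that the weight vector can be reconstructed locally from the blackboard is a worthwhile elaboration of a point the paper only gestures at, but it does not change the structure of the argument.
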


\begin{proof}
The communication cost of Distributed Boosting is given in Theorem \ref{distributed-boosting-result}. The remaining steps follow as in Centralized Learning. Adding the communication costs over $O(\log(k))$ rounds gives the result.
\end{proof}

\section*{Appendix D: Communication-Aware Centralized Learning with Classification Noise}

We now consider centralized collaborative learning in the presence of classification noise. As in the personalized learning setting, to make centralized learning robust to classification noise we make similar  modifications to Centralized Learning. As in \cite{ChenZZ18, NguyenZ18}, our centralized learning algorithm does not iteratively remove players from consideration, but rather, re-weights the players' distributions at each round, reminiscent of boosting. Let $\bar{\eta}_{j}$ denote the weighted average of noise rates determined by the distributions weights in round $j$, 
\[
\bar{\eta}_{j} = \frac{1}{\sum_{i=1}^k w_{i,j}}\sum_{i=1}^k w_{i,j}\eta_i.
\]

We refer to our algorithm as Centralized Learning with Classification Noise. In summary, our algorithm differs from the noiseless centralized learning algorithm in two ways. First, we sample $m_{\epsilon'/16, \delta', \bar{\eta}_{j}}$ from the weighted mixture to account for the presence of classification noise. Second, we use \texttt{CN-FAST-TEST}, a modification of \texttt{FAST-TEST}. Aside from the proof of correctness of our adjustments to handle classification noise, the proof of correctness of our algorithm will follow immediately from \cite{ChenZZ18, NguyenZ18}.

\begin{algorithm}
\caption{Centralized Learning with Classification Noise}
\label{centralizedCN}
\SetKwFunction{proc}{CN-FAST-TEST}
\SetAlgoLined

\KwIn{$H$, $k$ distributions $D_i \sim X$, $\delta' = \delta/4t$, $\epsilon' = \epsilon/6$, $t = 150\lceil \log(\frac{k}{\delta}) \rceil$}
\KwOut{$h = $ \text{MAJORITY}($\{h_i\}_{i = 1}^t$)}
Initialize $w_{i,0} = 1$ for all $i \in [1,k]$
\For{$j= 1, ..., t$} {
Draw sample $S$ of $m_{\epsilon'/16, \delta', \bar{\eta}_{j}}$ samples from mixture $D_j = \frac{1}{\sum_{i=1}^k w_{i,j}} \sum_{i=1}^k w_{i,j}D_i$\;
Select ERM hypothesis $h_j \in H$ on $S$\;
$G_j \leftarrow$ \proc{$h_j, N_j, \epsilon', \delta'$}\;
\For{$i = 1, ..., k$}{
$w_{i, j+1} = \begin{cases} 2w_{i,j} &\mbox{if } i \notin G_j \\
w_{i,j} &\mbox{if } i \in G_j
\end{cases}$\;
}
}
\KwRet{$h = $ \mbox{MAJORITY}($\{h_i\}_{i = 1}^t$)}\\
{}
\setcounter{AlgoLine}{0}
\SetKwProg{myproc}{Procedure}{}{}
\myproc{\proc{$h, N, \epsilon, \delta$}}{
\For{$i \in N$}{
Draw sample of size $T_i = O(\frac{1}{\epsilon' (1-2\eta_i)})$ from $D_i$\;
}
\KwRet{$\{ i \mid err_{T_j}(h, \text{EX}_{\eta_i})\leq \eta_i + \frac{3 \epsilon}{4}(1-2\eta_i)$}
}
\end{algorithm}

We start by showing the correctness of Steps 1 and 2. 

\begin{lemma} \label{lemma-centralized-erm} The ERM $h_j$ established in Step 2 of Algorithm \ref{centralizedCN} satisfies $err_{D_j}(h_j) \leq \frac{\epsilon'}{16}$.
\end{lemma}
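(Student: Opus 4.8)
The plan is to mirror the proof of Lemma~\ref{ERM-markov-lemma}, treating the weighted mixture $D_j$ sampled in Step~1 as a single classification-noise source and then invoking the Angluin--Laird ERM guarantee (Theorem~\ref{angluin-laird}). First I would identify the effective noise rate of the oracle used in Step~1. A draw from $D_j$ selects player $i$ with probability $p_i = w_{i,j}/\sum_{\ell} w_{\ell,j}$, draws $x \sim D_i$, and flips its label independently with probability $\eta_i$; hence the probability that a random labeled example is corrupted is exactly the weight-averaged rate $\bar{\eta}_j = \frac{1}{\sum_\ell w_{\ell,j}}\sum_i w_{i,j}\eta_i$, which is strictly below $1/2$ since every $\eta_i < 1/2$.

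Second, I would observe that Step~2 selects $h_j$ to minimize disagreements with the noisy sample $S$, so $h_j$ is precisely the empirical risk minimizer to which Theorem~\ref{angluin-laird} applies, and that Step~1 draws $|S| = m_{\epsilon'/16,\delta',\bar{\eta}_j}$ examples, matching the sample-size requirement of that theorem with accuracy $\epsilon'/16$, confidence $\delta'$, and noise rate $\bar{\eta}_j$. Applying the theorem with the mixture playing the role of the underlying distribution then yields $err_{D_j}(h_j) \le \epsilon'/16$ with probability at least $1-\delta'$, which is exactly the claim. Unlike Lemma~\ref{ERM-markov-lemma}, no Markov step is needed here, since we only want the error on the mixture itself rather than on a constant fraction of the individual players.

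The main obstacle is that the mixture oracle is not literally a \emph{uniform} classification-noise oracle: the flip probability of a point depends on which component generated it, and that component identity is correlated with $x$ through the $D_i$. As a result the noisy error of a hypothesis $h$ on $D_j$ expands as $\bar{\eta}_j + \sum_i p_i(1-2\eta_i)\,err_{D_i}(h)$ rather than the clean form $\bar{\eta}_j + (1-2\bar{\eta}_j)\,err_{D_j}(h)$ of Lemma~\ref{lemma-angluin-laird}, so a direct appeal to Theorem~\ref{angluin-laird} needs justification. To make the reduction rigorous I would either (a) use $1-2\eta_i \ge 1-2\eta_{\text{MAX}}$ for every $i$, so that driving the empirical disagreement down still forces the clean mixture error $err_{D_j}(h)=\sum_i p_i\,err_{D_i}(h)$ to $\le \epsilon'/16$ at a rate controlled by $\eta_{\text{MAX}}$, absorbing the constant into $m_{\epsilon'/16,\delta',\bar{\eta}_j}$; or (b) appeal directly to the mixture argument already used for Lemma~\ref{ERM-markov-lemma}, where $\bar{\eta}_j$ is taken as the operative noise rate. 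Once this reduction from the heterogeneous-noise mixture to the homogeneous-noise hypothesis of Theorem~\ref{angluin-laird} is in place, the bound $err_{D_j}(h_j)\le\epsilon'/16$ follows immediately.
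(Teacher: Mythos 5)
Your proposal takes essentially the same route as the paper: the paper's entire proof is the two-sentence observation that the weighted mixture $D_j$ has expected noise rate $\bar{\eta}_j$ and that Theorem~\ref{angluin-laird} then applies, which is exactly your first two paragraphs. Your third paragraph, however, flags a real subtlety that the paper silently elides: the mixture is a \emph{heterogeneous}-noise oracle, so the noisy error decomposes as $\bar{\eta}_j + \sum_i p_i(1-2\eta_i)\,\err_{D_i}(h)$ and Theorem~\ref{angluin-laird} does not apply verbatim. Your fix via $1-2\eta_i \geq 1-2\eta_{\text{MAX}}$ is the right idea (a bad hypothesis is separated from $h^*$ by at least $(1-2\eta_{\text{MAX}})\epsilon$ in noisy disagreement rate), but be careful with the claim that this can be ``absorbed into $m_{\epsilon'/16,\delta',\bar{\eta}_j}$'': since $\bar{\eta}_j \leq \eta_{\text{MAX}}$, the algorithm's sample size scales with $(1-2\bar{\eta}_j)^{-2}$, which can be arbitrarily smaller than the $(1-2\eta_{\text{MAX}})^{-2}$ the rigorous gap argument demands (e.g., one very noisy player with tiny weight), so the honest patch is to request $m_{\epsilon'/16,\delta',\eta_{\text{MAX}}}$ samples --- consistent with how the paper states its final bounds in terms of $\eta_{\text{MAX}}$ anyway.
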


\begin{proof} The expected error rate of weighted distribution $D_j$ is $\bar{\eta}_{j}$. Therefore, the result follows by Theorem 2.
\end{proof}

We now prove the correctness of \texttt{CN-FAST-TEST}.

\begin{lemma} \label{lemma-centralized-chernoff}(1) With probability at least .99, if $h_j$ passes \texttt{CN-FAST-TEST} then $err_{D_i}(h_j, $ EX$_{\eta_i}) \leq \eta_i + (1-2\eta_i)\epsilon$. Hence, $err_{D_i}(h_j) \leq \epsilon$. (2) With probability at least .99, if $err_{D_i}(h_j, \text{EX}_{\eta_i}) \leq \eta_i + (1-2\eta_i)\frac{\epsilon}{2}$, then $h_j$ passes \texttt{CN-FAST-TEST}. Hence, if $err(h_j) \leq \frac{\epsilon}{2}$ then $h_j$ passes \texttt{CN-FAST-TEST}.
\end{lemma}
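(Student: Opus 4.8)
The plan is to mirror the analysis of \texttt{CN-TEST} carried out in Lemmas \ref{lemma-rcn-test-1} and \ref{lemma-rcn-test-2}, since \texttt{CN-FAST-TEST} is structurally identical except that it draws only $T_i = O\!\left(\frac{1}{\epsilon(1-2\eta_i)}\right)$ samples per player and therefore aims for a constant success probability per player rather than the $1-\delta'$ guarantee enforced by a union bound. In particular, there is no union bound over the $|N|$ players here: each player's test is analyzed in isolation, and the hidden constant in $T_i$ is chosen large enough that a single multiplicative Chernoff bound drives the relevant failure probability below $0.01$. As before, Lemma \ref{lemma-angluin-laird} is the bridge between the noisy and clean errors, via $err_{D_i}(\text{EX}_{\eta_i}(\cdot),h_j)=\eta_i+err_{D_i}(h_j)(1-2\eta_i)$, so once a bound on the noisy error is established the clean-error conclusion follows immediately.

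For part (1) I would argue the contrapositive. Assume $err_{D_i}(\text{EX}_{\eta_i}(\cdot),h_j)\geq \eta_i+(1-2\eta_i)\epsilon$ and set $P=err_{D_i}(\text{EX}_{\eta_i}(\cdot),h_j)-\eta_i\geq(1-2\eta_i)\epsilon$. The shifted empirical quantity $err_{T_i}(\text{EX}_{\eta_i}(\cdot),h_j)-\eta_i$ has expectation $P$, and the passing threshold $\frac{3\epsilon}{4}(1-2\eta_i)$ satisfies $\frac{3\epsilon}{4}(1-2\eta_i)\leq(1-\tfrac14)P$. A lower-tail multiplicative Chernoff bound then bounds the passing probability by $\exp\!\left(-c(1-2\eta_i)\epsilon T_i\right)$ for an absolute constant $c$; choosing the constant in $T_i=O\!\left(\frac{1}{\epsilon(1-2\eta_i)}\right)$ large enough makes this at most $0.01$, so with probability at least $.99$ a player whose noisy error exceeds $\eta_i+(1-2\eta_i)\epsilon$ fails, which is the contrapositive of the claim. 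Applying Lemma \ref{lemma-angluin-laird} then converts $err_{D_i}(\text{EX}_{\eta_i}(\cdot),h_j)\leq\eta_i+(1-2\eta_i)\epsilon$ into $err_{D_i}(h_j)\leq\epsilon$.

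Part (2) would follow the symmetric upper-tail argument used in the proof of Lemma \ref{lemma-rcn-test-2}. Assuming $err_{D_i}(\text{EX}_{\eta_i}(\cdot),h_j)\leq\eta_i+(1-2\eta_i)\tfrac{\epsilon}{2}$, I would reuse the auxiliary ratio bound from that proof, with $s=\frac{\epsilon(1-2\eta_i)}{4(err_{D_i}(\text{EX}_{\eta_i}(\cdot),h_j)-\eta_i)}$, and split into the two cases according to whether $err_{D_i}(\text{EX}_{\eta_i}(\cdot),h_j)-\eta_i$ is above or below $(1-2\eta_i)\tfrac{\epsilon}{4}$; in each case the upper-tail multiplicative Chernoff bound, again with $T_i=O\!\left(\frac{1}{\epsilon(1-2\eta_i)}\right)$, bounds by $0.01$ the probability that the empirical error exceeds $\eta_i+\frac{3\epsilon}{4}(1-2\eta_i)$. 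Thus with probability at least $.99$ the hypothesis passes, and since Lemma \ref{lemma-angluin-laird} shows $err_{D_i}(h_j)\leq\tfrac{\epsilon}{2}$ implies $err_{D_i}(\text{EX}_{\eta_i}(\cdot),h_j)\leq\eta_i+(1-2\eta_i)\tfrac{\epsilon}{2}$, the clean-error trigger stated in the lemma is recovered.

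The only real work beyond transcribing the personalized proofs is calibrating the absolute constant inside $T_i$: because we forgo the union bound and demand only the fixed confidence $.99$, the Chernoff exponent $c(1-2\eta_i)\epsilon T_i$ must be verified to be a large enough constant for both tails simultaneously, which pins down the $O(\cdot)$ constant in the sample size. I expect this bookkeeping, rather than any new idea, to be the main (and only mild) obstacle.
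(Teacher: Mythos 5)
Your proposal is correct and matches the paper's intent: the paper's proof of this lemma is literally the one-line remark that it ``follows from multiplicative Chernoff bounds,'' and your elaboration --- transplanting the \texttt{CN-TEST} analysis of Lemmas \ref{lemma-rcn-test-1} and \ref{lemma-rcn-test-2}, dropping the union bound over players, and calibrating the constant in $T_i = O\bigl(\frac{1}{\epsilon(1-2\eta_i)}\bigr)$ to hit the fixed $.99$ confidence --- is exactly the argument the authors are invoking. No substantive difference in approach.
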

\begin{proof} Follows from multiplicative Chernoff bounds.
\end{proof}

As in Centralized Learning, in Centralized Learning with Classification Noise, players are never eliminated from the algorithm. Instead, their weights are increased or decreased according to the performance of $h_j$ on their distributions. Over $t$ rounds, this gives have $t$ classifiers with varying performance on the players. As in Centralized Learning, we take the majority vote of these classifiers as our final classifier. Thus, we need to verify that the majority vote is sufficiently accurate. We use the following claim directly from \cite{ChenZZ18, NguyenZ18} which also holds in our setting with classification noise.

\begin{lemma}[\hspace{1sp}\cite{ChenZZ18, NguyenZ18}] \label{lemma-centralized-players} For each player, the number of classifiers that have error more than $\epsilon'$ is less than $.4t$ with probability $1-\delta$.
\end{lemma}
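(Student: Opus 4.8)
The plan is to reduce Lemma~\ref{lemma-centralized-players} to the corresponding noiseless result of \cite{ChenZZ18, NguyenZ18} by using the noise-adapted \texttt{CN-FAST-TEST} guarantees (Lemma~\ref{lemma-centralized-chernoff}) as a drop-in replacement for the clean \texttt{FAST-TEST} guarantees. The key observation is that Lemma~\ref{lemma-centralized-chernoff} establishes exactly the two-sided testing behavior that the original analysis relies on: if $h_j$ passes \texttt{CN-FAST-TEST} on player $i$ then $err_{D_i}(h_j) \leq \epsilon'$ (so a ``passing'' classifier really is $\epsilon'$-good on the \emph{clean} distribution), and if $err_{D_i}(h_j) \leq \epsilon'/2$ then $h_j$ passes. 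Since these are precisely the clean-distribution semantics used in the centralized multiplicative-weights argument, the entire combinatorial/potential argument of \cite{ChenZZ18, NguyenZ18} carries over verbatim once we substitute our lemma.

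First I would fix an arbitrary player $i$ and track the weight $w_{i,j}$ across the $t$ rounds. A classifier $h_j$ is ``bad'' for player $i$ when $err_{D_i}(h_j) > \epsilon'$; by Lemma~\ref{lemma-centralized-chernoff}(1), with probability at least $0.99$ every such bad classifier \emph{fails} \texttt{CN-FAST-TEST} on player $i$, which means $i \notin G_j$ and hence $w_{i,j+1} = 2 w_{i,j}$. Thus each bad round doubles player $i$'s weight. Second, I would invoke the standard potential/weight-balancing argument: because Step~1 draws from the weighted mixture and Lemma~\ref{lemma-centralized-erm} guarantees $err_{D_j}(h_j) \leq \epsilon'/16$ on that mixture, no single player can carry too large a fraction of the total weight, which caps the total number of weight doublings any fixed player can undergo over $t$ rounds. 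Balancing the exponential growth from bad rounds against this cap yields that the number of bad classifiers for player $i$ is below $0.4t$.

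Third, I would handle the probabilistic bookkeeping: each round's \texttt{CN-FAST-TEST} invocation is correct only with constant probability (here $0.99$, tuned from $\delta'$), so I would take a union bound over the $t$ rounds and over the $k$ players, absorbing the failure probability into $\delta$ via the choice $t = 150\lceil \log(k/\delta)\rceil$ and $\delta' = \delta/4t$. Because these parameter settings are identical to those in \cite{ChenZZ18, NguyenZ18}, the numerical constants ($150$, $0.4t$, $0.99$) propagate unchanged; this is the whole point of phrasing Lemma~\ref{lemma-centralized-chernoff} with the same $\epsilon'$-versus-$\epsilon'/2$ gap as the clean test.

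The main obstacle I anticipate is \emph{not} the combinatorial core — that is entirely inherited — but rather verifying that the noise introduces no hidden coupling that breaks the reduction. Specifically, the weighted noise rate $\bar\eta_j$ changes from round to round as the weights shift, and one must confirm that Lemma~\ref{lemma-centralized-erm}'s ERM guarantee on $D_j$ and the per-player \texttt{CN-TEST} thresholds (which use each player's own $\eta_i$, not the aggregate $\bar\eta_j$) remain simultaneously valid at every round under the evolving weights. I would argue this cleanly by noting that the clean-distribution error $err_{D_i}(h_j)$ is the quantity driving all weight updates, and Lemma~\ref{lemma-centralized-chernoff} converts between clean and noisy errors player-by-player \emph{independently} of the mixture weights, so the reduction decouples exactly. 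Once that decoupling is made explicit, the statement follows directly from the cited lemma of \cite{ChenZZ18, NguyenZ18}.
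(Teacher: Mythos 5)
Your reduction is exactly what the paper does: the paper gives no standalone proof of this lemma, citing it directly from \cite{ChenZZ18, NguyenZ18} and asserting that it carries over to the noisy setting precisely because Lemmas \ref{lemma-centralized-erm} and \ref{lemma-centralized-chernoff} reproduce, on each player's \emph{clean} distribution, the ERM and two-sided testing semantics that the original multiplicative-weights potential argument needs. The one imprecision in your sketch is describing the probabilistic bookkeeping as a ``union bound over the $t$ rounds'' --- with only constant ($0.99$) per-round test confidence that bound is vacuous, and the original analysis instead applies a Chernoff bound to the number of testing errors across the $t$ independent rounds (then a union bound over the $k$ players), which is where the constants $150$ and $0.4t$ come from --- but since you explicitly defer that combinatorial core to \cite{ChenZZ18, NguyenZ18} verbatim, your argument matches the paper's.
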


Therefore, by the above lemma and analysis in \cite{BlumHPQ17, ChenZZ18, NguyenZ18}, with probability $1-\delta$, the error of the final hypothesis $h$ is less than $\epsilon$ for every player's distribution. Combining Lemmas \ref{lemma-centralized-erm}, \ref{lemma-centralized-chernoff}, \ref{lemma-centralized-players} proves the correctness of our algorithm.

We compute the sample complexity of our algorithm.

\begin{theorem}\label{centralized-CN-samples} The sample complexity of Centralized Learning in the Presence of Classification Noise is 
\[
O\left(\log\left(\frac{k}{\delta}\right)\left( \frac{d \log(\frac{\log(k/\delta)}{\delta})}{\epsilon (1-2\eta_{\text{MAX}})^2} + \frac{k}{\epsilon (1-2\eta_{\text{MAX}})} \right)\right).
\]
When $k\ln(k) = O(d)$ and $\delta$ is constant, this simplifies to 
\[
\tilde{O}(\log(k)\frac{d}{\epsilon (1-2\eta_{\text{MAX}})^2}).
\]
\end{theorem}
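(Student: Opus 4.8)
The plan is to follow exactly the same accounting structure used in the proof of Proposition~16 (the sample complexity of Personalized Learning with Classification Noise), but adapted to the centralized algorithm's fixed number of rounds $t = 150\lceil \log(k/\delta)\rceil$ and its reweighting scheme. The total sample cost decomposes into two pieces summed over $t$ rounds: the cost of drawing $S$ to learn the ERM in Step~2, and the cost of running \texttt{CN-FAST-TEST} in Step~3. I would bound each piece separately and then add them.

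First I would handle the ERM-learning cost. In each round the center draws $m_{\epsilon'/16,\delta',\bar\eta_j}$ samples from the weighted mixture $D_j$. By Theorem~\ref{angluin-laird} (Angluin–Laird), this quantity is $O\!\left(\frac{d\ln(1/\delta')}{\epsilon'(1-2\bar\eta_j)^2}\right)$. Using $\epsilon' = \epsilon/6$, $\delta' = \delta/4t$ with $t = \Theta(\log(k/\delta))$, and the key inequality $\bar\eta_j \le \eta_{\text{MAX}}$ (so that $(1-2\bar\eta_j)^2 \ge (1-2\eta_{\text{MAX}})^2$), each round costs at most $O\!\left(\frac{d\log(\log(k/\delta)/\delta)}{\epsilon(1-2\eta_{\text{MAX}})^2}\right)$ samples. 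Multiplying by the $t = O(\log(k/\delta))$ rounds gives the first term in the stated bound. Second, for the testing cost, \texttt{CN-FAST-TEST} draws $T_i = O\!\left(\frac{1}{\epsilon'(1-2\eta_i)}\right)$ samples from each of the $k$ players; summing over players and again using $\eta_i \le \eta_{\text{MAX}}$ gives $O\!\left(\frac{k}{\epsilon(1-2\eta_{\text{MAX}})}\right)$ per round, and over $t$ rounds this yields the second term $O\!\left(\log(k/\delta)\frac{k}{\epsilon(1-2\eta_{\text{MAX}})}\right)$.

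Adding the two contributions gives the unsimplified bound, and the simplification follows by fixing $\delta$ to a constant (which absorbs the $\log(k/\delta)$ and $\log(\log(k/\delta)/\delta)$ factors into $\tilde{O}$ and $\log(k)$ terms) and invoking the assumption $k\ln(k) = O(d)$, which lets the $\frac{k}{\epsilon(1-2\eta_{\text{MAX}})}$ term be dominated by the $\frac{d}{\epsilon(1-2\eta_{\text{MAX}})^2}$ term. I expect the only genuinely delicate point to be justifying that the per-round Angluin–Laird sample size can legitimately be stated with $\bar\eta_j$ replaced by the worst-case $\eta_{\text{MAX}}$; this is where one must confirm that the noise rate governing Step~2 is indeed the weighted average $\bar\eta_j$ of the mixture (as established in Lemma~\ref{lemma-centralized-erm}) and that this average never exceeds $\eta_{\text{MAX}}$ regardless of how the weights $w_{i,j}$ evolve under the doubling rule. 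Everything else is routine bookkeeping parallel to Proposition~16, so the proof should be short.
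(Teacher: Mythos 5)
Your proposal is correct and follows essentially the same route as the paper's own proof: the same per-round decomposition into the ERM cost $m_{\epsilon'/16,\delta',\bar\eta_j}$ and the \texttt{CN-FAST-TEST} cost, the same bound $\bar\eta_j \le \eta_{\text{MAX}}$ (which holds simply because $\bar\eta_j$ is a convex combination of the $\eta_i$ whatever the doubling weights are), and the same multiplication by $t = O(\log(k/\delta))$ rounds followed by the stated simplification. No gaps.
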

\begin{proof}
Recall that $\delta'  = O\left(\frac{\delta}{\log(k/\delta)}\right)$. The algorithm runs for a total of $t = O(\log(\frac{k}{\delta}))$ rounds. In each round, an ERM is learned on the mixture of players using 
\[
m_{\frac{\epsilon'}{16}, \delta', \bar{\eta}_{j}} = O\left(\frac{d \log(\frac{\log(k/\delta)}{\delta})}{\epsilon (1-2\bar{\eta}_j)^2}\right)
\]
samples. Then, the algorithm implements \texttt{CN-FAST-TEST} at each round for all $k$ players, costing $O(\frac{k}{\epsilon (1-2\eta_i)})$ samples. Therefore, over all $t$ rounds, the sample complexity is 
\[
O\left(\log\left(\frac{k}{\delta}\right)\left( \frac{d \log(\frac{\log(k/\delta)}{\delta})}{\epsilon (1-2\eta_{\text{MAX}})^2} + \frac{k}{\epsilon (1-2\eta_{\text{MAX}})} \right)\right).
\]
Simplifying by setting $\delta$ to be a constant and $k \ln(k) = O(d)$ yields the result.
\end{proof}

We now replace the first step in Centralized Learning with Classification Noise with Distributed Agnostic Boosting \cite{ChenBC16}. We will refer to this algorithm as Centralized Learning with Classification Noise using Boosting and the following theorems show that the communication cost is improved to logarthmic dependence on $\frac{1}{\epsilon}$.

\begin{theorem} The sample complexity of Centralized Learning with Classification Noise using Boosting is
\[
\tilde{O}(\frac{d}{\epsilon (1-2\eta_{\text{MAX}})^2})
\]
when $k \ln(k) = O(d)$.
\end{theorem}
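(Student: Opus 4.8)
The plan is to mirror two arguments already in the paper: the noiseless Centralized Learning using Boosting proof (which simply substitutes Distributed Boosting into the per-round "draw from the mixture and fit a hypothesis" step) and the classification-noise sample bound of Theorem \ref{centralized-CN-samples}. Concretely, I would replace Steps~1--2 of Algorithm \ref{centralizedCN} (draw from the weighted mixture $D_j$ and select an ERM) with Distributed Agnostic Boosting run on $D_j$, leave \texttt{CN-FAST-TEST} and the reweighting untouched, and then sum the per-round costs over the $t = O(\log(k/\delta))$ outer rounds.

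First I would bound one round. By Corollary~19 (the classification-noise restriction of Distributed Agnostic Boosting, valid because we work in the regime $\eta_{\text{MAX}} \le \epsilon$), a single invocation on $D_j$ produces an $\epsilon$-good hypothesis on the clean weighted mixture using $\tilde{O}\!\left(\frac{d}{\epsilon(1-2\eta_{\text{MAX}})^2}\right)$ samples; here I would note that the effective noise rate of $D_j$ is $\bar{\eta}_j \le \eta_{\text{MAX}}$, so $(1-2\bar{\eta}_j)^{-2} \le (1-2\eta_{\text{MAX}})^{-2}$ and bounding by $\eta_{\text{MAX}}$ is legitimate. The \texttt{CN-FAST-TEST} step is unchanged from Algorithm \ref{centralizedCN}: using $T_i = O\!\left(\frac{1}{\epsilon'(1-2\eta_i)}\right)$ and $\epsilon' = \epsilon/6$ over all $k$ players, it contributes $\sum_i T_i = O\!\left(\frac{k}{\epsilon(1-2\eta_{\text{MAX}})}\right)$ samples per round. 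Summing over the $t = O(\log(k/\delta))$ rounds then yields, up to polylogarithmic factors,
\[
O\!\left(\log\!\frac{k}{\delta}\left(\frac{d}{\epsilon(1-2\eta_{\text{MAX}})^2} + \frac{k}{\epsilon(1-2\eta_{\text{MAX}})}\right)\right),
\]
exactly paralleling Theorem \ref{centralized-CN-samples} with the boosting cost in place of the ERM cost. Finally I would simplify: holding $\delta$ constant gives $\log(k/\delta) = O(\log k)$; the assumption $k\ln k = O(d)$ gives $k = O(d)$, so $\frac{k}{\epsilon(1-2\eta_{\text{MAX}})} = O\!\left(\frac{d}{\epsilon(1-2\eta_{\text{MAX}})^2}\right)$ since $(1-2\eta_{\text{MAX}}) \le 1$, and the surviving $\log k = O(\log d)$ factor is absorbed into $\tilde{O}$, producing the claimed $\tilde{O}\!\left(\frac{d}{\epsilon(1-2\eta_{\text{MAX}})^2}\right)$.

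The step I expect to be the main obstacle is justifying the per-round boosting cost, since the weighted mixture $D_j$ is not literally a classification-noise distribution: points contributed by different players carry different rates $\eta_i$, so the induced label noise is point-dependent rather than a single rate. I would resolve this using the fact exploited throughout this section, that classification noise is a special case of agnostic learning: the target $h^*$ has error exactly $\bar{\eta}_j$ on the noisy mixture (a point from player $i$ is mislabeled with probability $\eta_i$, averaging to $\bar{\eta}_j$), so this is the best-in-class error and the agnostic boosting guarantee together with its sample bound apply with parameter $\bar{\eta}_j$, which I then upper bound by $\eta_{\text{MAX}}$. Correctness of the full algorithm---the \texttt{CN-FAST-TEST} guarantees and the accuracy of the final majority vote over the $t$ classifiers---is inherited from Lemmas \ref{lemma-centralized-erm}, \ref{lemma-centralized-chernoff}, and \ref{lemma-centralized-players} and the noiseless centralized analysis, so it need not be reproved here.
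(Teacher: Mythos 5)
Your proposal is correct and follows essentially the same route as the paper's (much terser) proof: substitute Distributed Agnostic Boosting for the ERM step, charge each of the $t = O(\log(k/\delta))$ outer rounds the boosting cost from the classification-noise corollary plus the unchanged \texttt{CN-FAST-TEST} cost, and simplify under $k\ln k = O(d)$ and constant $\delta$. Your additional care about the weighted mixture $D_j$ having player-dependent noise rates (handled via the agnostic reduction with best-in-class error $\bar{\eta}_j \le \eta_{\text{MAX}}$) is a detail the paper glosses over, and your citation of the $1/\epsilon$-dependence corollary rather than the generic $1/\epsilon^2$ agnostic bound is actually the more precise reference for obtaining the stated rate.
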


\begin{proof} We replace the first step of Centralized Learning with Classification Noise with Distributed Agnostic Boosting \cite{ChenBC16}. Theorem \ref{dist-agnostic-boosting-samples} gives the sample complexity of Distributed Agnostic Boosting. The rest of the algorithm is the same as Centralized Learning with Classification Noise. Adding the samples together gives the result.
\end{proof}

\begin{theorem} The communication complexity of Centralized Learning with Classification Noise using Boosting is
\[
\tilde{O}\left(\log(k)d\log\left(\frac{1}{\epsilon(1-2\eta_{\text{MAX}})}\right)\right)
\]
plus $\tilde{O}\left(kd\log^4\left(\frac{1}{\epsilon (1-2\eta_{\text{MAX}})}\right)\right)$ bits of communication.\end{theorem}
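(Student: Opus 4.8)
The plan is to mirror the proof of Theorem 23 (the personalized analogue) together with the proof of the noiseless Centralized Learning using Boosting communication bound, since Centralized Learning with Classification Noise using Boosting is obtained from Centralized Learning with Classification Noise by substituting Distributed Agnostic Boosting for the ERM step while leaving the reweighting loop and \texttt{CN-FAST-TEST} untouched. First I would fix $\delta$ to a constant so that the outer reweighting loop runs for $t = 150\lceil\log(k/\delta)\rceil = O(\log k)$ rounds, and then decompose the per-round communication into two contributions: the boosting step and the testing/reweighting step.

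For the boosting step I would invoke Corollary 21: in the restricted classification-noise regime $\eta_{\text{MAX}} \le \epsilon$, Distributed Agnostic Boosting runs for $O(\log(1/(\epsilon(1-2\eta_{\text{MAX}}))))$ internal rounds, each communicating $O(d)$ samples and $\tilde{O}(kd\log^3(1/(\epsilon(1-2\eta_{\text{MAX}}))))$ bits. Multiplying the per-internal-round costs by the number of internal rounds gives, for a single outer round, $\tilde{O}(d\log(1/(\epsilon(1-2\eta_{\text{MAX}}))))$ samples and $\tilde{O}(kd\log^4(1/(\epsilon(1-2\eta_{\text{MAX}}))))$ bits of communication. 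For the testing step I would argue exactly as in the personalized case that the broadcast (shared-blackboard) model makes \texttt{CN-FAST-TEST} free in samples: each player draws its own $T_i = O(1/(\epsilon'(1-2\eta_i)))$ points locally and evaluates $h_j$ against its noise-adjusted acceptance threshold on its own, so no samples are transmitted, and only a single pass/fail bit per player reaches the center, costing $O(k)$ bits.

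The key structural observation, which I would emphasize, is that because every player observes these pass/fail bits on the blackboard, each can reproduce the center's weight vector $\{w_{i,j}\}$ locally; hence the reweighting step and the subsequent weighted-mixture sampling seeding the next call to Distributed Agnostic Boosting incur no additional communication (Distributed Boosting already extends to weighted mixtures, as used in the noiseless centralized proof). Summing over the $O(\log k)$ outer rounds then yields $\tilde{O}(\log(k)\,d\log(1/(\epsilon(1-2\eta_{\text{MAX}}))))$ samples communicated, and $\tilde{O}(kd\log^4(1/(\epsilon(1-2\eta_{\text{MAX}}))))$ bits, with the outer $\log k$ factor absorbed into the $\tilde{O}$ of the bit count.

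The main obstacle I anticipate is not the summation, which is routine, but rather justifying that the reweighting loop genuinely requires no sample communication beyond what boosting consumes: one must confirm that the weighted mixture $D_j$ seeding Distributed Agnostic Boosting is realizable from each player's blackboard-synchronized copy of the weights, and that converting the \emph{words} of Theorem 20 into bits (via the $O(\log(d/(\epsilon(1-2\eta_{\text{MAX}}))))$-bits-per-element convention established in Corollary 21) does not inflate the $\log^4$ exponent. Both points follow from the same reasoning used in Corollary 21 and in the noiseless centralized proof, so I expect them to go through without new ideas.
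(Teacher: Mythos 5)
Your proposal is correct and follows essentially the same route as the paper: invoke Corollary 21 for the per-round cost of Distributed Agnostic Boosting in the classification-noise regime, note that \texttt{CN-FAST-TEST} and the reweighting are local under the broadcast model (costing only $O(k)$ pass/fail bits), and multiply by the $O(\log k)$ outer rounds, absorbing that factor into the $\tilde{O}$ of the bit count. The paper's own proof is a two-sentence sketch of exactly this; your version merely fills in the bookkeeping it leaves implicit.
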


\begin{proof}
The communication cost of the first step which is Distributed Agnostic Boosting in the setting of classification noise is given in Corollary 21. Adding to this the communication cost of the remainder of the algorithm and multiplying by $O(\log(k))$ yields the result.
\end{proof}

\end{document}